  \definecolor{mydarkblue}{rgb}{0,0.08,0.45}
\definecolor{mycyan}{RGB}{230, 255, 255}
\colorlet{mycyanopacity}{mycyan!20}
\setlist[itemize,enumerate]{topsep=4pt,itemsep=0pt,leftmargin=*}
\definecolor{theoremcolor}{rgb}
{0.97, 0.97, 0.9}
\definecolor{examplecolor}{rgb}{1, 1, 1.0}
\DeclareMathOperator*{\argmax}{argmax}
\DeclareMathOperator*{\argmin}{argmin}
\begin{document}

\title{Hopfield-Fenchel-Young Networks:\\A Unified Framework for Associative Memory Retrieval}

\author{\name Saul Santos \email saul.r.santos@tecnico.ulisboa.pt \\
       \addr Instituto Superior Técnico, Universidade de Lisboa, Lisbon, Portugal\\
       Instituto de Telecomunicações, Lisbon, Portugal
       \AND
       \name Vlad Niculae\email v.niculae@uva.nl \\
       \addr Language Technology Lab,
       University of Amsterdam, The Netherlands
        \AND
       \name  Daniel McNamee \email daniel.mcnamee@research.fchampalimaud.org \\
       \addr Champalimaud Research,
       Lisbon, Portugal
        \AND
       \name André F. T. Martins \email andre.t.martins@tecnico.ulisboa.pt \\
       \addr Instituto Superior Técnico, Universidade de Lisboa, Lisbon, Portugal\\
       Instituto de Telecomunicações, Lisbon, Portugal\\
       ELLIS Unit Lisbon}

\editor{Mohammad Emtiyaz Khan}

\maketitle

\begin{abstract}
Associative memory models, such as Hopfield networks and their modern variants, have garnered renewed interest due to advancements in memory capacity and connections with self-attention in transformers. 
In this work, we introduce a unified framework---\textit{Hopfield-Fenchel-Young networks}---which generalizes these models to a broader family of energy functions. 
Our energies are formulated as the difference between two Fenchel-Young losses: one, parameterized by a generalized entropy, defines the Hopfield scoring mechanism, while the other applies a post-transformation to the Hopfield output. 
By utilizing Tsallis and norm entropies, we derive end-to-end differentiable update rules that enable sparse transformations, uncovering new connections between loss margins, sparsity, and exact retrieval of single memory patterns. 
We further extend this framework to \textit{structured} Hopfield networks using the SparseMAP transformation, allowing the retrieval of  pattern associations rather than a single pattern. 
Our framework unifies and extends traditional and modern Hopfield networks and provides an energy minimization perspective for widely used post-transformations like $\ell_2$-normalization and layer normalization---all through suitable choices of Fenchel-Young losses and by using convex analysis as a building block. 
Finally, we validate our Hopfield-Fenchel-Young networks on diverse memory recall tasks, including free and sequential recall. Experiments on simulated data, image retrieval, multiple instance learning, and text rationalization demonstrate the effectiveness of our approach.
\end{abstract}
\begin{keywords}
  {Hopfield Networks}, {Associative Memories}, {Sparse Transformations}, {Structured Prediction}, {Fenchel-Young Losses}, {Memory Retrieval}.
\end{keywords}
\section{Introduction}
\label{sec:intro}
Hopfield networks are biologically plausible neural networks with associative memory capabilities \citep{amari1972learning,nakano1972associatron,hopfield1982neural}. Their attractor dynamics, which describe how the networks evolve toward stable states or patterns, make them suitable for modeling associative memory retrieval in humans and animals \citep{tyulmankov2021biological,whittington2021relating}. The limited storage capacity of classical Hopfield networks was recently overcome through the proposal of new energy functions. These energies were initially proposed for dense associative models by \citet{krotov2016dense} and \citet{demircigil2017model} and later expanded to continuous states by \cite{ramsauer2020hopfield}, resulting in exponential storage capacity and renewed interest in "modern" Hopfield networks. In particular, \citet{ramsauer2020hopfield} revealed connections to attention layers in transformers via an update rule linked to the convex-concave procedure (CCCP; \citealt{yuille2003concave}). However, this model only \textit{approximates} stored patterns, requiring careful temperature tuning to avoid converging to large metastable states that mix multiple input patterns. 

A common element in many recent studies on Hopfield networks is connecting an energy function with a desired update rule, typically by modeling the network's temporal dynamics through differential equations, which are discretized to produce updates involving derivatives of Lagrangian terms in the energy function \citep{krotov2021large}. However, a comprehensive framework for formulating the equations that govern the dynamics of the entire spectrum of Hopfield networks is lacking—a gap we aim to fill by explicitly using convex analysis and Fenchel-Young duality as building blocks \citep{rockafellar1970convex,bauschke2017convex}.
This not only allows for designing new energy functions but also provides a way to understand functionalities in transformer architectures, like multi-head attention \citep{attention} and layer normalization \citep{ba2016layer}, as well as other normalization techniques \citep{nguyen2019transformers}.
Developing a generalized framework is crucial for a unified theoretical basis to understand and extend Hopfield networks. This unified view facilitates comparative analysis, reveals underlying principles that govern their dynamics, and potentially leads to improvements in associative memory design and functionality.

\paragraph{Main contributions.} The starting point of our paper is establishing a connection between Hopfield energies and \textbf{Fenchel-Young losses} \citep{blondel2020learning}. Namely, we consider energy functions expressed as the difference of two Fenchel-Young loss terms induced by convex functions $\Omega$ and $\Psi$. These two terms serve  distinct purposes: $\Omega$ contributes to the Hopfield scoring function, where the aim is to ``separate'' one pattern from the others, while $\Psi$ acts as a regularizer, relating to the post-transformation applied in the Hopfield updates. Our proposed Hopfield-Fenchel-Young energies recover as particular cases a wide range of associative memory models, such as the classical binary and continuous Hopfield networks \citep{hopfield1982neural}, polynomial and exponential dense associative memories \citep{krotov2016dense, demircigil2017model}, the modern Hopfield networks from \citet{ramsauer2020hopfield}, as well as their sparse counterparts \citep{hu2023sparse}. 
Furthermore, we show that the Fenchel-Young loss associated with the Hopfield scoring function, when induced by certain generalized entropy functions $\Omega$, can lead to \textbf{sparse update rules} which include as particular cases 
$\alpha$-entmax \citep{peters2019sparse}, $\gamma$-normmax \citep{blondel2020learning},   
and {SparseMAP} \citep{niculae2018sparsemap}. The latter case allows general structural constraints to be incorporated in addition to sparsity, enabling the retrieval of \textbf{pattern associations}. We illustrate this with structural constraints which  
ensure the retrieval of $k$ patterns, as well as a sequential variant which promotes the $k$ patterns to be contiguous in a memorized sequence. 
One  distinctive property of our sparsity-inducing generalized entropies compared to the Hopfield layers of \citet{ramsauer2020hopfield} is that our resulting update rules pave the way for \textbf{exact retrieval}, leading to the emergence of sparse association among patterns while ensuring end-to-end differentiability, a property which relates to the existence of a separation margin in the corresponding Fenchel-Young losses. 
In addition, our formulation allows for post-transformations in the Hopfield updates via $\Psi$, such as different kinds of normalization, which may accelerate convergence and have other beneficial properties. 
While some of these post-transformations have been considered before \citep{krotov2021large}, their contribution to the energy has often been left implicit. We derive in this paper the explicit energy terms, using classical results from convex duality.  

Our endeavour aligns with the strong neurobiological motivation to seek new Hopfield energies capable of \textbf{sparse} and \textbf{structured} retrieval. 
Indeed, sparse neural activity patterns forming structured representations underpin core principles of cortical computation \citep{SimoncelliOlhausen2001,Tse2007,Palm2013}.
With respect to memory formation circuits, the sparse firing of neurons in the dentate gyrus, a distinguished region within the hippocampal network, underpins its proposed role in pattern separation during memory storage \citep{Yassa2011,Severa2017}. Evidence suggests that such sparsified 
activity aids in minimizing interference,  
however an integrative theoretical account linking sparse coding and attractor network functionality to clarify these empirical observations is lacking
\citep{Leutgeb2007a,Neunuebel2014}.

To sum up, our main contributions are:
\begin{itemize}
    \item We introduce \textbf{Hopfield-Fenchel-Young} energy functions as a  generalization of modern and classical Hopfield networks (\S\ref{sec:HFYE}). 
    \item We leverage  properties of Fenchel-Young losses which relate \textbf{sparsity} to \textbf{margins}, obtaining new theoretical results for 
    \text{exact} memory retrieval and proving exponential storage capacity in a stricter sense than prior work (\S\ref{sec:probabilistic}). 
    \item We propose  new \textbf{structured} Hopfield networks via the SparseMAP transformation,  which return \textbf{pattern associations} instead of single patterns. We show that SparseMAP has a structured margin, enabling exact retrieval of pattern associations (\S\ref{sec:structured}). 
    \item We use our framework in memory retrieval modeling problems (\S\ref{sec:memory_retrieval_modeling}).
\end{itemize}
Experiments on synthetic and real-world tasks (image retrieval, multiple instance learning, and text rationalization) showcase the usefulness of our proposed models using various kinds of sparse and structured transformations (\S\ref{sec:experiments}). An overview of the Hopfield scoring functions studied in this paper is shown in Figure~\ref{fig:overview}.%
\footnote{Our code is made available on \url{https://github.com/deep-spin/HFYN}.} %

\paragraph{Previous Paper.} Our work builds upon a previously published conference paper \citep{santos2024sparse} and a non-archival workshop paper \citep{martins2023sparse}, which we extend significantly in several ways. \citet{santos2024sparse} fix $\Psi(\cdot) = \frac{1}{2}\|\cdot\|^2$ and focus on $\Omega$ corresponding to sparsity-inducing generalized entropies. The current paper examines general $\Psi$, leading to more general Hopfield energies which are a difference of two Fenchel-Young losses; this step allows this construction to be unified with many other modern and classical Hopfield networks and enables the inclusion of a post-transformation step, such as $\ell_2$ and layer normalization (for which we derive, in \S\ref{sec:normalization}, an explicit energy minimization interpretation). We show how our expanded framework enables the creation of useful new Hopfield networks, with any arbitrary post-transformation defined by a Fenchel-Young loss induced by any convex function. Our theoretical proofs are extended to support this generalization, and we empirically evaluate these variants in multiple instance learning and memory retrieval benchmarks.
Overall, \S\ref{sec:HFYE} and \S\ref{sec:memory_retrieval_modeling} are completely new,  \S\ref{sec:experiments} contains many new experiments, and \S\ref{sec:probabilistic} and \S\ref{sec:structured} have new proofs due to the inclusion of $\Psi$. 


\paragraph{Notation.} We denote by $\triangle_N$ the $(N-1)\textsuperscript{th}$-dimensional probability simplex, $\triangle_N := \{\bm{p} \in \mathbb{R}^N : \bm{p} \ge \mathbf{0}, \, \mathbf{1}^\top \bm{p} = 1\}$. 
The convex hull of a set $\mathcal{Y} \subseteq \mathbb{R}^M$ is $\mathrm{conv}(\mathcal{Y}) := \{\sum_{i=1}^N p_i \bm{y}_i \,:\, \bm{p} \in \triangle_N, \, \bm{y}_1, ..., \bm{y}_N \in \mathcal{Y}, \, N \in \mathbb{N}\}$. 
We have $\triangle_N = \mathrm{conv}(\{\bm{e}_1, ..., \bm{e}_N\})$, where $\bm{e}_i \in \mathbb{R}^N$ is the $i$\textsuperscript{th} basis (one-hot) vector. 
Given a convex function $\Omega: \mathbb{R}^N \rightarrow \bar{\mathbb{R}}$, where $\bar{\mathbb{R}} = {\mathbb{R}} \cup \{+\infty\}$, we denote its domain by $\mathrm{dom}(\Omega) := \{\bm{y} \in \mathbb{R}^N \,:\, \Omega(\bm{y}) < +\infty\}$ and its Fenchel conjugate by $\Omega^*(\bm{\theta}) = \sup_{\bm{y} \in \mathbb{R}^N} \bm{y}^\top \bm{\theta} - \Omega(\bm{y})$. 
We denote by $I_\mathcal{C}$ the indicator function of a convex set $\mathcal{C}$, defined as $I_\mathcal{C}(\bm{y}) = 0$ if $\bm{y} \in \mathcal{C}$, and $I_\mathcal{C}(\bm{y}) = +\infty$ otherwise. 
\begin{figure*}[t]
    \centering
    \includegraphics[width=0.7\textwidth]{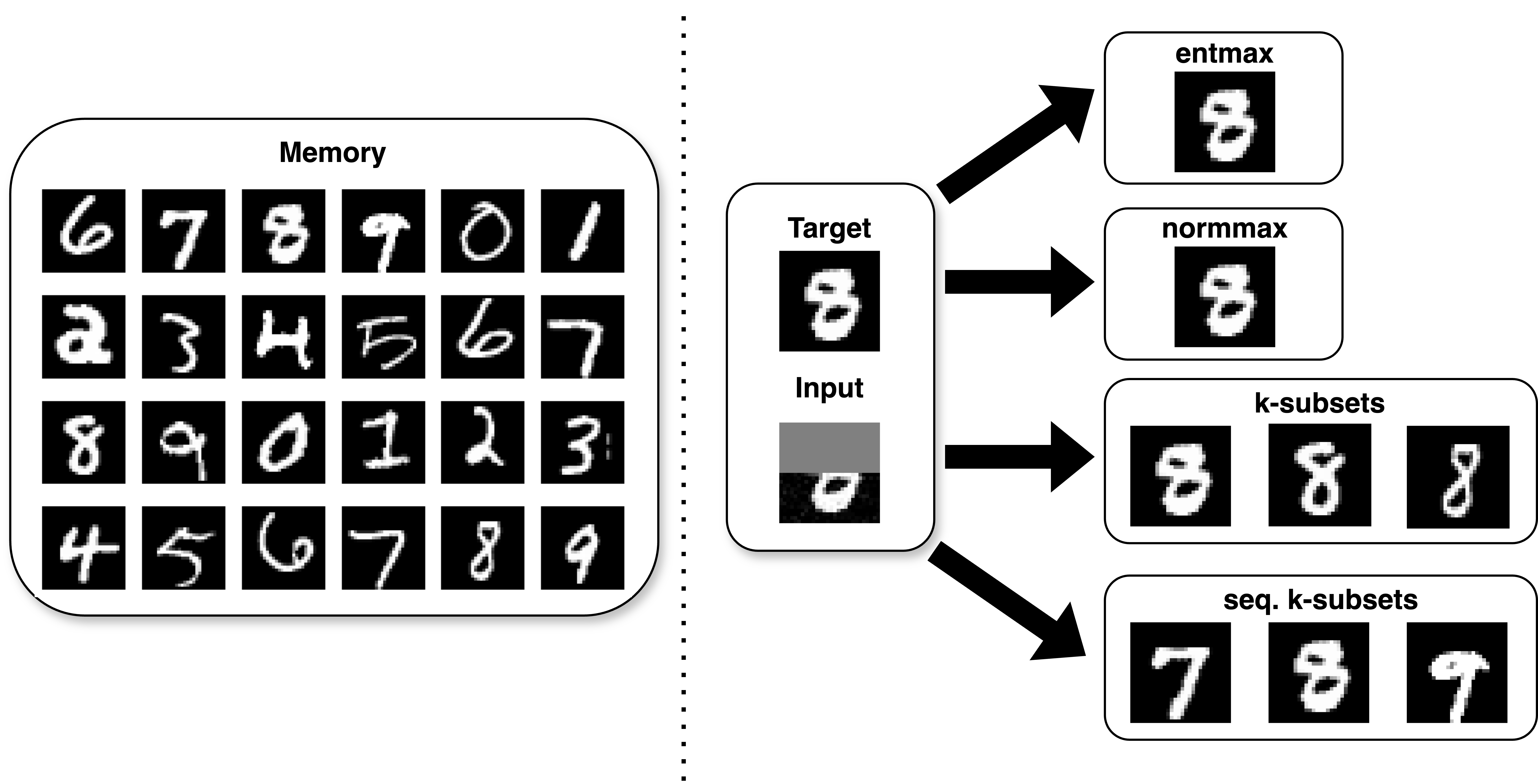}
    \caption{Overview of Hopfield scoring functions: sparse transformations (entmax and normmax) aim to retrieve the closest pattern to the query, and they have exact retrieval guarantees. Structured variants find pattern associations. The $k$-subsets transformation favors a mixture of the top-$k$ patterns, and sequential $k$-subsets favors contiguous retrieval.}
    \label{fig:overview}
\end{figure*}

\subsection*{Table of Contents}

\S\ref{sec:intro} \smallskip Introduction; \\
\S\ref{sec:background} \smallskip Background; \\
\S\ref{sec:HFYE} \smallskip Hopfield-Fenchel-Young Energies; \\
\S\ref{sec:probabilistic} \smallskip Sparse Hopfield Networks; \\
\S\ref{sec:structured} \smallskip Structured Hopfield Networks; \\
\S\ref{sec:memory_retrieval_modeling} \smallskip Mechanics of Memory Retrieval Modeling; \\
\S\ref{sec:experiments} \smallskip Experiments; \\
\S\ref{sec:related_work} \smallskip Related Work; \\
\S\ref{sec:conclusions} \smallskip Conclusions. \\

\section{Background}\label{sec:background}
\subsection{Associative Memories and Hopfield Networks}

In associative memories, data patterns are retrieved  based on similarity to a given query, rather than through an explicit address. When a noisy cue of the memories is provided as the query, the aim is to retrieve the most similar memory pattern. Hopfield networks \citep{amari1972learning,nakano1972associatron,hopfield1982neural} are neural models inspired by statistical physics, specifically by the Ising model  \citep{ising1925beitrag}, which describes a system of magnetic moments or ``spins" of particles that can be in one of two states $(\pm 1)$, interacting with each other to minimize the system's overall energy. In a classical Hopfield network \citep{hopfield1982neural}, states are binary and interact through synaptic weights analogous to the spin-spin couplings in the Ising model. When a query is given as input, the network iteratively adjusts to minimize its energy, leading to the emergence of stable states or attractors, which correspond to stored memory patterns. Pioneering works on classic Hopfield networks, as well as subsequent research by \citet{amit1985spin} and \citet{hertz1991introduction}, has demonstrated the ability of this approach to perform tasks like pattern retrieval and completion for binary data. 

Formally, let $\bm{X} \in \mathbb{R}^{N \times D}$ be a matrix whose rows hold a set of examples $\bm{x}_1, ..., \bm{x}_N \in \mathbb{R}^D$ (``memory patterns''), and let $\bm{q}^{(0)} \in \mathbb{R}^D$ be the query vector (or ``state pattern'').
Hopfield networks iteratively
update $\bm{q}^{(t)} \mapsto \bm{q}^{(t+1)}$ for $t\in \{0, 1, ...\}$ according to a certain rule, eventually converging to a fixed point attractor state $\bm{q}^*$, hopefully corresponding
to one of the memorized examples.
This update rule corresponds to the minimization of an energy function, 
which for a classical binary Hopfield network \citep{hopfield1982neural} takes the form 
\begin{align}\label{eq:energy_hopfield_classic}
E(\bm{q}) = -\frac{1}{2} \| \bm{X} \bm{q}\|^2 =  -\frac{1}{2} \bm{q}^\top \bm{W} \bm{q},
\end{align}
where  $\bm{W} = \bm{X}^\top\bm{X} \in \mathbb{R}^{D \times D}$ is a weight matrix and 
$\bm{q} \in \{\pm 1\}^D$ is a binary vector, leading to the update rule $\bm{q}^{(t+1)} = \mathrm{sign}(\bm{W}\bm{q}^{(t)})$. 
In this classical construction, the stored patterns, as well as the queries, are assumed to be binary vectors in $\{\pm 1\}^D$. 
The discreteness of the update rule (due to the sign transformation)  makes this network capable of \textbf{exact retrieval} (\textit{i.e.}, it is able to retrieve memorized patterns perfectly, upon convergence). However, its main limitation is that it has only $N=\mathcal{O}(D)$ memory storage capacity. When this capacity is exceeded, the patterns start to interfere \citep{amit1985storing,mceliece1987capacity}, resulting in the retrieval of metastable states or spurious attractor points. 

\subsection{Modern Hopfield Networks}

More recent work 
has sidestepped
the limitation above through alternative energy functions \citep{krotov2016dense,demircigil2017model}, paving the way for a class of models known as ``modern Hopfield networks'' with superlinear (often exponential) memory capacity. In \citet{ramsauer2020hopfield}, $\bm{q} \in \mathbb{R}^D$ is continuous and the following energy is used:
\begin{align}
    \label{eq:energy_hopfield}
        E(\bm{q}) = -\frac{1}{\beta} \log\sum_{i=1}^N \exp(\beta \bm{x}_i^\top \bm{q}) + \frac{1}{2} \|\bm{q}\|^2 + \mathrm{const.}
\end{align}

\citet{ramsauer2020hopfield} revealed an interesting relation between the updates in this modern Hopfield network and the attention layers in transformers. 
Namely, the minimization of the energy \eqref{eq:energy_hopfield} using the concave-convex procedure (CCCP; \citealt{yuille2003concave}), also known as difference-of-convex (DC) programming \citep{horst1999dc}, leads to the update rule
\begin{equation}\label{eq:update_rule}
    \begin{aligned}
        \bm{q}^{(t+1)} &= \bm{X}^\top \mathrm{softmax}(\beta \bm{X} \bm{q}^{(t)}).
    \end{aligned}
\end{equation}
When $\beta = \frac{1}{\sqrt{D}}$, each update matches
the computation performed in the attention layer of a transformer with a single attention head and identity projection matrices \citep[\S2]{ramsauer2020hopfield}.
This triggered interest in developing variants of Hopfield layers which can be used as drop-in replacements for multi-head attention layers \citep{hoover2023energy}.

While \citet{ramsauer2020hopfield} derived useful theoretical properties of these networks (including their exponential storage capacity under a relaxed notion of retrieval), the use of
softmax
in 
\eqref{eq:update_rule}, along to the fact that these networks now operate on a continuous space, makes retrieval only approximate (\textit{i.e.}, the attractors are not the exact stored patterns, but only approximately close as $\beta \rightarrow \infty$), with some propensity for undesirable metastable states (states that mix multiple memory patterns). 
We overcome these
drawbacks in our work by showing that it is possible to work on a continuous space but still obtain exact retrieval (as in binary Hopfield networks), without sacrificing exponential storage capacity. 
We generalize the energies \eqref{eq:energy_hopfield_classic} and \eqref{eq:energy_hopfield}, as well as several other proposed formulations of Hopfield-like models, and we provide a unified treatment of sparse and structured
Hopfield networks along with a theoretical and empirical analysis.

\subsection{Regularized Argmax and Fenchel-Young Losses}
\label{sec:FYL}
Our construction and results follow from the properties of regularized argmax functions and Fenchel-Young losses \citep{blondel2020learning}, which we next review. 

Given a strictly convex function $\Omega: \mathbb{R}^N \rightarrow \bar{\mathbb{R}}$ with compact domain $\mathrm{dom}(\Omega)$,
the 
$\Omega$-regularized argmax transformation \citep{niculae2017regularized}, 
$\hat{\bm{y}}_\Omega : \mathbb{R}^N \rightarrow \mathrm{dom}(\Omega)$ is:
\begin{align}\label{eq:rpm}
    \hat{\bm{y}}_\Omega(\bm{\theta}) := \nabla \Omega^*(\bm{\theta}) = \argmax_{\bm{y} \in \mathrm{dom}(\Omega)} \bm{\theta}^\top \bm{y} - \Omega(\bm{y}),
\end{align} 
where $\nabla \Omega^*(\bm{\theta})$ denotes the gradient of the Fenchel conjugate $\Omega^*$ at $\bm{\theta}$.%
\footnote{Note that $\Omega^*$ is guaranteed to be everywhere differentiable under the assumption that $\mathrm{dom}(\Omega)$ is compact and $\Omega$ is strictly convex, which together ensure existence and uniqueness of a maximum in \eqref{eq:rpm}.} %
A trivial example of such a regularized argmax function \eqref{eq:rpm} is obtained when $\Omega(\bm{y}) = \frac{1}{2}\|\bm{y}\|^2$ with $\mathrm{dom}(\Omega) = \mathbb{R}^N$, which leads to the identity map $\bm{y}_\Omega(\bm{\theta}) = \bm{\theta}$. 
In general, we are interested in cases where $\mathrm{dom}(\Omega) \subsetneq \mathbb{R}^N$, for example the probability simplex $\mathrm{dom}(\Omega) = \triangle_N$ (studied in \S\ref{sec:probabilistic}) or a polytope (studied in \S\ref{sec:structured}). 
A famous instance of the former is the \textbf{softmax} transformation, obtained when the regularizer is the Shannon negentropy, $\Omega(\bm{y}) = \sum_{i=1}^N y_i \log y_i + I_{\triangle_N}(\bm{y})$. 
Another 
instance
is the \textbf{sparsemax} transformation, obtained when $\Omega(\bm{y}) = \frac{1}{2}\|\bm{y}\|^2 + I_{\triangle_N}(\bm{y})$ \citep{martins2016softmax}, and which corresponds to 
the Euclidean projection onto the probability simplex. 
In \S\ref{sec:probabilistic}, we analyze a wider set of transformations induced by generalized entropies which include these as particular cases. 

The \textbf{Fenchel-Young loss} induced by $\Omega$ 
\citep{blondel2020learning} is the function defined as
\begin{align}\label{eq:fy_loss}
    L_{\Omega}(\bm{\theta}, \bm{y}) := \Omega(\bm{y}) + \Omega^*(\bm{\theta}) - \bm{\theta}^\top \bm{y}.
\end{align}
In the trivial case above, where $\Omega(\bm{y}) = \frac{1}{2}\|\bm{y}\|^2$ with $\mathrm{dom}(\Omega) = \mathbb{R}^N$, we obtain $\Omega^*(\bm{\theta}) = \frac{1}{2}\|\bm{\theta}\|^2$, leading to the squared loss, $L_\Omega(\bm{\theta}, \bm{y}) = \frac{1}{2}\|\bm{\theta} - \bm{y}\|^2$. 
When $\Omega$ is the Shannon's negentropy defined above, 
we have $\Omega^*(\bm{\theta}) = \log\sum_{i=1}^N\exp(\theta_i)$,
and $L_\Omega$ is the \textbf{cross-entropy loss}, up to a constant independent of $\bm{\theta}$.  
Intuitively, Fenchel-Young losses quantify how ``compatible'' a score vector $\bm{\theta} \in \mathbb{R}^N$ (\textit{e.g.}, logits) is to a desired target $\bm{y} \in \mathrm{dom}(\Omega)$ (\textit{e.g.}, a probability vector). 
Fenchel-Young losses have important and useful properties, summarized below: 
\begin{proposition}[Properties of Fenchel-Young losses]
\label{prop:properties} 
Fenchel-Young losses $L_\Omega(\bm{\theta}, \bm{y})$ satisfy the following properties:
\begin{enumerate}
    \item They are non-negative, $L_\Omega(\bm{\theta}, \bm{y}) \ge 0$, with equality if and only if $\bm{y} = \hat{\bm{y}}_\Omega(\bm{\theta})$.
    \item They are convex in $\bm{\theta}$, and their gradient is $\nabla_{\bm{\theta}} L_\Omega(\bm{\theta}, \bm{y}) = -\bm{y} + \hat{\bm{y}}_\Omega(\bm{\theta})$.
\end{enumerate}
\end{proposition}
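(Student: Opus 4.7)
The plan is to derive both properties directly from the definitions of the Fenchel conjugate and the regularized argmax, leaning on standard convex analysis facts that have already been set up in the excerpt. Nothing deep is needed; the main care is to tie equality in property~1 to the strict convexity of $\Omega$ and the definition of $\hat{\bm{y}}_\Omega$.

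For the first property, I would start from the very definition $\Omega^*(\bm{\theta}) = \sup_{\bm{y}' \in \mathrm{dom}(\Omega)} \bm{\theta}^\top \bm{y}' - \Omega(\bm{y}')$. Taking $\bm{y}' = \bm{y}$ on the right-hand side gives $\Omega^*(\bm{\theta}) \geq \bm{\theta}^\top \bm{y} - \Omega(\bm{y})$, which rearranges to $L_\Omega(\bm{\theta}, \bm{y}) \geq 0$. This is the classical Fenchel-Young inequality. For the equality case, note that $\bm{y}$ attains the supremum defining $\Omega^*(\bm{\theta})$ precisely when $\bm{y} \in \argmax_{\bm{y}' \in \mathrm{dom}(\Omega)} \bm{\theta}^\top \bm{y}' - \Omega(\bm{y}')$. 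By the strict convexity of $\Omega$ and compactness of $\mathrm{dom}(\Omega)$ (recalled in the excerpt's footnote), this argmax is a singleton equal to $\hat{\bm{y}}_\Omega(\bm{\theta})$, giving the ``if and only if''.

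For the second property, convexity in $\bm{\theta}$ follows by inspecting each term of $L_\Omega(\bm{\theta}, \bm{y}) = \Omega(\bm{y}) + \Omega^*(\bm{\theta}) - \bm{\theta}^\top \bm{y}$: the first is constant in $\bm{\theta}$, the third is linear, and the middle term $\Omega^*$ is convex as a pointwise supremum of affine functions in $\bm{\theta}$. Hence the sum is convex. For the gradient, I would differentiate term by term: $\nabla_{\bm{\theta}} \Omega(\bm{y}) = \bm{0}$, $\nabla_{\bm{\theta}}(-\bm{\theta}^\top \bm{y}) = -\bm{y}$, and $\nabla_{\bm{\theta}} \Omega^*(\bm{\theta}) = \hat{\bm{y}}_\Omega(\bm{\theta})$ by the identity in \eqref{eq:rpm}. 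Summing yields $\nabla_{\bm{\theta}} L_\Omega(\bm{\theta}, \bm{y}) = -\bm{y} + \hat{\bm{y}}_\Omega(\bm{\theta})$.

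There is no real obstacle; the only point requiring mild care is justifying differentiability of $\Omega^*$ everywhere, which was already flagged in the excerpt's footnote under the standing assumptions of strict convexity and compact domain of $\Omega$.
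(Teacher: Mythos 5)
Your proof is correct and is essentially the standard argument: the paper itself does not prove this proposition but delegates to \citet[Proposition~2]{blondel2020learning}, whose proof follows exactly the route you take (Fenchel--Young inequality with the equality case from uniqueness of the maximizer, plus convexity of $\Omega^*$ as a supremum of affine functions and the gradient identity $\nabla\Omega^* = \hat{\bm{y}}_\Omega$ already recorded in \eqref{eq:rpm}). No gaps; the differentiability caveat you flag is indeed covered by the standing assumptions of strict convexity and compact domain.
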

%
%
These properties are stated and proved in \citet[Proposition~2]{blondel2020learning}. 
For certain choices of $\Omega$, Fenchel-Young losses also have an important margin property, which we will elaborate on in \S\ref{sec:probabilistic} and which underpins our proposed Hopfield energies with exact retrieval. 

\section{Hopfield-Fenchel-Young Energies}
\label{sec:HFYE}

We now use $\Omega$-regularized argmax transformations and Fenchel-Young losses to define a new class of energy functions associated to modern Hopfield networks. 
Historically, the connection between sparse Hopfield energy functions and Fenchel conjugates was first made by \citet{hu2023sparse} and extended by \citet{martins2023sparse}, \citet{wu2023stanhop}, and  \citet{santos2024sparse}.

\subsection{Definition}\label{sec:definition_hfy}


Let $\Omega: \mathbb{R}^N \rightarrow \bar{\mathbb{R}}$ and $\Psi: \mathbb{R}^D \rightarrow \bar{\mathbb{R}}$ be convex functions, and $\bm{X} \in \mathbb{R}^{N \times D}$ be the matrix of memory patterns. 
We consider energy functions of the form
$E(\bm{q}) = -\Omega^*(\bm{X}\bm{q}) + \Psi(\bm{q})$ (up to a constant), defined for $\bm{q} \in \mathrm{dom}(\Psi) \subseteq \mathbb{R}^D$. These energy functions can be equivalently written as a difference of Fenchel-Young losses (cf.~\eqref{eq:fy_loss}):
\begin{align}\label{eq:general_hfy_energy}
    E(\bm{q}) 
    &= \underbrace{-L_\Omega(\bm{X}\bm{q}, \bm{u})}_{E_{\mathrm{concave}}(\bm{q})} + \underbrace{L_\Psi(\bm{X}^\top \bm{u}, \bm{q})}_{E_{\mathrm{convex}}(\bm{q})} \,\, + \,\,\mathrm{const.},
\end{align}
where $\bm{u} \in \mathrm{dom}(\Omega) \subseteq \mathbb{R}^N$ is an arbitrary baseline.\footnote{The value of $E(\bm{q})$ does not depend on the choice of $\bm{u}$; without loss of generality we define $\bm{u} = (\nabla \Omega^*)(\bm{0}) = \argmin_{\bm{y}} \Omega(\bm{y})$ assuming that this $\argmin$ exists. When $\mathrm{dom}(\Omega) = \triangle_N$ and $\Omega$ is a generalized negentropy---the scenario we study in \S\ref{sec:probabilistic}---this choice leads to $\bm{u} = \mathbf{1}/N$ being a uniform distribution.} %
We call \eqref{eq:general_hfy_energy} a \textbf{Hopfield-Fenchel-Young (HFY) energy}. 
We will see that energies of this form are general enough to encompass most previously proposed variants of Hopfield energies and to motivate new ones. 

The first thing to observe is that HFY energies decompose as the sum of a concave and a convex function of $\bm{q}$: 
The concavity of $E_{\mathrm{concave}}$ holds from the convexity of Fenchel-Young losses on their first argument, as established in Proposition~\ref{prop:properties}, and from the fact that the composition of a convex function with an affine map is convex \citep[\S 3.2]{boyd2004convex}.
The convexity of $E_{\mathrm{convex}}$ comes from the fact that Fenchel-Young losses are also convex on their second argument when $\Psi$ is strictly convex. 
These two
terms compete when minimizing the energy   \eqref{eq:general_hfy_energy} with respect to $\bm{q}$:
\begin{itemize}
    \item Minimizing $E_{\mathrm{concave}}$ is equivalent to \textit{maximizing} $L_{\Omega}(\bm{X} \bm{q}; \bm{u})$, which pushes for state patterns $\bm{q}$ such that $\bm{X}\bm{q}$ is as far as possible from the baseline $\bm{u}$. We will see later that this will encourage $\bm{q}$ to be close to a single memory pattern.
    \item Minimizing $E_{\mathrm{convex}}$ serves as a regularization, encouraging $\bm{q}$ to stay close to $\bm{X}^\top \bm{u}$. More intuition about this regularization will be provided in \S\ref{sec:hfy_definition}. 
\end{itemize} 

\subsection{Update rule}

The next result, proved in Appendix~\ref{app:general_cccp}, leverages Proposition~\ref{prop:properties} to derive the Hopfield update rule for the energy function \eqref{eq:general_hfy_energy}, generalizing \citet[Theorem A.1]{ramsauer2020hopfield}.

\begin{proposition}[Update rule of HFY energies]\label{prop:cccp}
    Minimizing \eqref{eq:general_hfy_energy} using the CCCP algorithm \citep{yuille2003concave} leads to the updates
\begin{align}\label{eq:updates_general_hfy}
    \bm{q}^{(t+1)} &= 
    (\nabla \Psi^*) \left( \bm{X}^\top \nabla \Omega^*(\bm{X}\bm{q}^{(t)})\right) 
    = \hat{\bm{y}}_\Psi \left( \bm{X}^\top \hat{\bm{y}}_\Omega (\bm{X}\bm{q}^{(t)})\right).
\end{align}
\end{proposition}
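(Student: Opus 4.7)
The plan is to recognize the energy as a DC decomposition and apply the CCCP update formula, then identify the resulting expressions via the two properties of Fenchel--Young losses collected in Proposition~\ref{prop:properties}. Concretely, the decomposition $E = E_{\mathrm{concave}} + E_{\mathrm{convex}}$ provided in \eqref{eq:general_hfy_energy} is already verified just before the proposition: concavity of $E_{\mathrm{concave}}$ and convexity of $E_{\mathrm{convex}}$ follow from convexity of $L_\Omega(\cdot, \bm{u})$ and $L_\Psi(\bm{X}^\top\bm{u}, \cdot)$ together with composition with the affine map $\bm{q}\mapsto \bm{X}\bm{q}$. The CCCP iteration linearizes the concave term at the current iterate and minimizes the resulting convex majorizer, which gives the first-order stationarity condition $\nabla E_{\mathrm{convex}}(\bm{q}^{(t+1)}) = -\nabla E_{\mathrm{concave}}(\bm{q}^{(t)})$.

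Next I would compute the two gradients explicitly. For the concave term, Proposition~\ref{prop:properties}(2) gives $\nabla_{\bm{\theta}} L_\Omega(\bm{\theta}, \bm{u}) = -\bm{u} + \hat{\bm{y}}_\Omega(\bm{\theta})$; applying the chain rule through the linear map $\bm{\theta} = \bm{X}\bm{q}$ yields
\begin{equation*}
\nabla E_{\mathrm{concave}}(\bm{q}) = -\bm{X}^\top\bigl(\hat{\bm{y}}_\Omega(\bm{X}\bm{q}) - \bm{u}\bigr).
\end{equation*}
For the convex term, differentiating $L_\Psi(\bm{X}^\top\bm{u}, \bm{q}) = \Psi(\bm{q}) + \Psi^*(\bm{X}^\top\bm{u}) - (\bm{X}^\top\bm{u})^\top\bm{q}$ with respect to its second argument gives $\nabla E_{\mathrm{convex}}(\bm{q}) = \nabla\Psi(\bm{q}) - \bm{X}^\top\bm{u}$.

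Substituting both into the CCCP stationarity condition, the $\bm{X}^\top\bm{u}$ baseline terms cancel and one obtains
\begin{equation*}
\nabla\Psi\bigl(\bm{q}^{(t+1)}\bigr) = \bm{X}^\top \hat{\bm{y}}_\Omega\bigl(\bm{X}\bm{q}^{(t)}\bigr).
\end{equation*}
The last step is inverting $\nabla\Psi$: under the standing strict convexity/smoothness hypothesis on $\Psi$, Fenchel duality gives $(\nabla\Psi)^{-1} = \nabla\Psi^*$, and by \eqref{eq:rpm} applied to $\Psi$ we identify $\nabla\Psi^* = \hat{\bm{y}}_\Psi$. This yields the claimed closed-form update \eqref{eq:updates_general_hfy}. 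Consistency with the observation in the footnote that $E$ does not depend on $\bm{u}$ serves as a sanity check, since $\bm{u}$ indeed drops out in the derivation.

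The only genuinely delicate point, which I would flag up front, is the invertibility of $\nabla\Psi$: if $\Psi$ is merely convex without strict convexity/essential smoothness, the identification $(\nabla\Psi)^{-1} = \nabla\Psi^*$ becomes a set-valued correspondence and the update must be written via $\bm{q}^{(t+1)} \in \partial\Psi^*(\,\cdot\,)$. Under the assumptions maintained for the regularized argmax in \S\ref{sec:FYL} (strict convexity and compact domain, ensuring $\Psi^*$ is differentiable everywhere), this obstacle disappears and the derivation is routine. The rest is just careful bookkeeping of the chain rule through the linear map $\bm{X}$.
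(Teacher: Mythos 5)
Your proposal is correct and follows essentially the same route as the paper's proof in Appendix~\ref{app:general_cccp}: linearize the concave Fenchel--Young term, equate $\nabla E_{\mathrm{convex}}(\bm{q}^{(t+1)}) = -\nabla E_{\mathrm{concave}}(\bm{q}^{(t)})$, compute both gradients via Proposition~\ref{prop:properties} and the chain rule so that the $\bm{X}^\top\bm{u}$ terms cancel, and invert $\nabla\Psi$ using $(\nabla\Psi)^{-1} = \nabla\Psi^*$. Your explicit caveat about the set-valued case when $\Psi$ is not strictly convex or essentially smooth is a reasonable extra precision that the paper leaves implicit.
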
 
Note that the updates \eqref{eq:updates_general_hfy} involve the functions $\Omega$ and $\Psi$ only via the gradient maps $\nabla \Omega^*$ and $\nabla \Psi^*$, which are the regularized prediction functions $\hat{\bm{y}}_{\Omega}$ and $\hat{\bm{y}}_{\Psi}$ (cf.~\ref{eq:rpm}). 
Another way of looking into \eqref{eq:updates_general_hfy} is to unpack these updates in terms of composing four operations: similarity, separation, projection, and post-transformation. This can be expressed by
\begin{equation}
    \bm{q}^{(t+1)} = \underbrace{\text{normalize}}_{\substack{\text{Post-transformation}}} \Big( \underbrace{\bm{X}^\top}_{\text{Projection}} \cdot \underbrace{\text{sep}}_{\substack{\text{Separation}}} \big( \underbrace{\text{sim}(\bm{X}, \bm{q}^{(t)})}_{\substack{\text{Similarity}}} \big) \Big).
\end{equation}
The update rule \eqref{eq:updates_general_hfy} is obtained when the similarity operation is the dot product, the separation corresponds to $\hat{\bm{y}}_\Omega$, and the post-transformation corresponds to $\hat{\bm{y}}_\Psi$. This is consistent with the universal Hopfield network of \citet{millidge2022universal}, where the post-transformation is the identity function. The update rule \eqref{eq:updates_general_hfy} supports only the dot product as similarity function, although other similarities can be introduced as long as the gradient of the similarity function is also projected.

\begin{table}[t!]
\centering
\caption{Properties/examples of convex conjugates \citep[\S 3.3]{boyd2004convex}.}
\renewcommand{\arraystretch}{1.5}
\small
\begin{tabular}{@{}p{4cm}p{10cm}@{}}
\toprule
\textbf{Property} & \textbf{Expression / Description} \\ \midrule
\textbf{Biconjugate} & $\Omega^{**} = \Omega$, \quad \text{if $\Omega$ is closed and convex} \\ 

\textbf{Linear transformations} & If $\bm{A}$ is squared and non-singular, $(\Omega \circ \bm{A})^* = \Omega^* \circ \bm{A}^{-\top}$ \\ 
\textbf{Scaling} & For $a > 0$, $(a \Omega)^*(\bm{\theta}) = a \Omega^*\left(\frac{\bm{y}}{a}\right)$ \\ 
\textbf{Translation} & If $\Omega(\bm{y}) = \Psi(\bm{y} - \bm{b})$, then $\Omega^*(\bm{\theta}) = \bm{b}^\top \bm{\theta} + \Psi^*(\bm{\theta})$ \\ 
\textbf{Separable sum} & If $\Omega(\bm{y}) = \sum_i \Omega_i(y_i)$, then $\Omega^*(\bm{\theta}) = \sum_i \Omega^*_i(\theta_i)$ \\ 
\textbf{Infimal convolution} & $(\Omega_1 \square\, \Omega_2)^*(\bm{\theta}) = \Omega_1^*(\bm{\theta}) + \Omega_2^*(\bm{\theta})$ (see \eqref{eq:infimal_conv}) \\ 
\textbf{Dual norms} & If $\Omega(\bm{y}) = \frac{1}{p}\|\bm{y}\|_p^p$, then $\Omega^*(\bm{\theta}) = \frac{1}{q}\|\bm{\theta}\|_q^q$ with $\frac{1}{p} + \frac{1}{q} = 1$ \\ 
\textbf{Exponential} & If $\Omega(y) = \exp(y)$,  
then $\Omega^*(\theta) = \theta \log \theta - \theta + I_{\mathbb{R}_+}(\theta)$ 
\\ 
\textbf{Negentropy} & If $\Omega(\bm{y}) = 
\sum_i y_i \log y_i + I_{\triangle^N}(\bm{y})$, then $\Omega^*(\bm{\theta}) = \log \sum_i \exp(\theta_i)$ \\ 
\bottomrule
\end{tabular}
\label{table:convex_conjugate_properties}
\end{table}

\subsection{Particular cases}\label{sec:particular_cases}

We show now that energies of the form \eqref{eq:general_hfy_energy} recover many known variants of Hopfield networks and suggest new ones. 
These variants are obtained through particular choices of the $\Omega$ and $\Psi$ functions, which lead to the corresponding conjugates $\Omega^*$ and $\Psi^*$. 
We start by examining key properties of convex duality that are instrumental for developing Hopfield energies. The convex conjugate, also known as the Legendre-Fenchel transform \citep{Fenchel1949}, is a cornerstone of convex analysis and optimization, allowing to convert complex optimization problems into more manageable dual forms. Table~\ref{table:convex_conjugate_properties} highlights several properties and examples which establish the building blocks for designing new classes of Hopfield networks, as described in the subsequent subsections. Table~\ref{tab:hopfield} provides a summary of the several examples arising from the simple construction presented in \S\ref{sec:definition_hfy} by leveraging the properties in Table~\ref{table:convex_conjugate_properties}. 

\begin{table*}[t]
    \caption{\textbf{Examples of HFY energies and corresponding update rules.} The top rows show classic Hopfield networks and dense associative memories (we show continuous variants; binary variants are a limit case when $\beta^{-1} \rightarrow 0^+$ in $\Psi(\bm{q})$). 
    In DAMs, we set $r^{-1} + s^{-1} = 1$ and denote by $\mathrm{spow}(\bm{\theta}, \alpha) = \mathrm{sign}(\bm{\theta}) \odot |\bm{\theta}|^
    \alpha$ the signed power function. 
    The middle rows show modern Hopfield networks (MHNs) associated to probabilistic prediction, where $\mathrm{dom}(\Omega) = \triangle_N$. Our paper focuses on sparse variants of these models (\S\ref{sec:probabilistic}). Finally, the last row generalizes the setups above to structured memories over a structured set $\mathcal{Y}$, also addressed in this paper (\S\ref{sec:structured}).}
    \vspace{.3cm}
    \label{tab:hopfield}
    \centering
    \resizebox{\textwidth}{!}{%
    \begin{tabular}{llllll}
    \toprule
        & $\Omega(\bm{y})$ & $\mathrm{dom}(\Omega)$ & $\Psi(\bm{q})$  & $\mathrm{dom}(\Psi)$ & Update rule ($\bm{q}^{(t+1)} = ...$) \\
    \midrule
        Classic HNs  & $\frac{1}{2}\|\bm{y}\|^2$ & $\mathbb{R}^N$ & $-\frac{1}{\beta} H_\mathrm{b}\left(\frac{\mathbf{1}+\bm{q}}{2}\right)$ & $[-1,1]^D$ & $\tanh(\beta \bm{X}^\top \bm{X} \bm{q}^{(t)})$ \\[.1cm]
        \citep{hopfield1982neural} &&&&&\\
        Poly-DAMs  & $s^{-1}\|\bm{y}\|_s^s$ & $\mathbb{R}^N$ & $-\frac{1}{\beta} H_\mathrm{b}\left(\frac{\mathbf{1}+\bm{q}}{2}\right)$ & $[-1,1]^D$ & $\tanh(\beta \bm{X}^\top \mathrm{spow}(\bm{X} \bm{q}^{(t)}, r-1))$ \\[.1cm]
        \citep{krotov2016dense} &&&&&\\
        Exp-DAMs & $\bm{y}^\top (\log \bm{y} - \mathbf{1})_+$ 
        & $\mathbb{R}_+^N$ & $-\frac{1}{\beta} H_\mathrm{b}\left(\frac{\mathbf{1}+\bm{q}}{2}\right)$ & $[-1,1]^D$ & $\tanh(\beta \bm{X}^\top \exp(\bm{X} \bm{q}^{(t)})$ \\[.1cm]
        \citep{demircigil2017model} &&&&&\\
        \midrule
        MHNs & $\frac{1}{\beta} \bm{y}^\top \log \bm{y}$ 
        & $\triangle_N$ & $\frac{1}{2}\|\bm{q}\|^2$ & $\mathbb{R}^D$ & $\bm{X}^\top \mathrm{softmax}(\beta \bm{X}\bm{q}^{(t)})$\\[.1cm]
        \citep{ramsauer2020hopfield} &&&&&\\
        Sparse MHNs  & $\frac{1}{\beta} \frac{1}{2}\|\bm{y}\|^2$ & $\triangle_N$ & $\frac{1}{2}\|\bm{q}\|^2$ & $\mathbb{R}^D$ & $\bm{X}^\top \mathrm{sparsemax}(\beta \bm{X}\bm{q}^{(t)})$ \\[.1cm]
        \citep{hu2023sparse} &&&&&\\
        Entmax MHNs & $\frac{1}{\beta} \frac{\|\bm{y}\|^\alpha_\alpha - 1}{\alpha(\alpha-1)}$ & $\triangle_N$ & $\frac{1}{2}\|\bm{q}\|^2$ & $\mathbb{R}^D$ & $\bm{X}^\top \alpha\text{-}\mathrm{entmax}(\beta \bm{X}\bm{q}^{(t)})$ \\[.1cm]
        (\textbf{this work} and \citealt{wu2023stanhop}) &&&&&\\
        Normmax MHNs  & $\frac{1}{\beta} (\|\bm{y}\|_\gamma - 1)$ & $\triangle_N$ & $\frac{1}{2}\|\bm{q}\|^2$ & $\mathbb{R}^D$ & $\bm{X}^\top \gamma\text{-}\mathrm{normmax}(\beta \bm{X}\bm{q}^{(t)})$ \\[.1cm]
        (\textbf{this work}) &&&&&\\
        \midrule
        Structured MHNs & $\frac{1}{\beta} \frac{1}{2}\|\bm{y}\|^2$ & $\mathrm{conv}(\mathcal{Y})$ & $\frac{1}{2}\|\bm{q}\|^2$ & $\mathbb{R}^D$ & $\bm{X}^\top \mathrm{SparseMAP}(\beta \bm{X}\bm{q}^{(t)})$ \\[.1cm]
        (\textbf{this work}) &&&&&\\
    \bottomrule
    \end{tabular}
    }
\end{table*}

\subsubsection{Classical Hopfield networks}

We denote the Fermi-Dirac entropy (sum of independent binary entropies) by 
    $H_\mathrm{b}(\bm{y}) = -\sum_{i=1}^N \left(y_i \log y_i - (1-y_i)\log(1-y_i)\right)$, 
where $\bm{y} \in [0,1]^N$. 
Classical (continuous) Hopfield networks are recovered in \eqref{eq:general_hfy_energy} with
\begin{equation}
    \Omega(\bm{y}) = \frac{1}{2}\|\bm{y}\|^2 \quad \text{and} \quad
\Psi(\bm{q}) = -\beta^{-1} H_\mathrm{b}\left(\frac{\mathbf{1} + \bm{q}}{2}\right) + I_{[-1,1]^D}(\bm{q}),
\end{equation}
where $\beta^{-1}\ge 0$ is a temperature parameter. 
In this case, $\hat{\bm{y}}_\Omega$ is the identity and $\hat{\bm{y}}_\Psi$ is the {\tt tanh} transformation with temperature $\beta^{-1}$, leading to the update rule $\bm{q}^{(t+1)} = \tanh \left(\beta \bm{X}^\top \bm{X}\bm{q}^{(t)}\right)$. Binary Hopfield networks \citep{hopfield1982neural} appear as a limit case when $\beta \rightarrow +\infty$.%
\footnote{We can obtain the same result for the classical binary Hopfield network by letting $\Psi(\bm{q}) = I_{\|.\|_\infty \le 1}(\bm{q})$, in which case $\Psi^*(\bm{z}) = \|\bm{z}\|_1$ and $(\nabla \Psi^*)(\bm{z}) = \mathrm{sign}(\bm{z})$.}

\subsubsection{Dense associative memories (DAMs)}


DAMs correspond to energy functions
    $E(\bm{q}) = -\sum_{i=1}^N F(\bm{q}^\top \bm{x}_i)$. 
Poly-DAMs use $F(z) = |z|^r$ and are constrained to $\bm{q} \in \{\pm 1\}^D$ \citep{krotov2016dense}. 
These energies can be equivalently written (up to a scaling factor) as 
\begin{equation}
    E(\bm{q}) = -r^{-1}\|\bm{X}\bm{q}\|_r^r + I_{[-1, 1]^D}(\bm{q}).
\end{equation}
This corresponds to choosing $\Omega(\bm{y}) = -s^{-1}\|\bm{y}\|_s^s$, where $r^{-1} + s^{-1} = 1$ ($\|.\|_r$ and $\|.\|_s$ are dual norms), and $\Psi(\bm{q})$ as in classical Hopfield networks. 
We have $(\nabla \Omega^*)(\bm{\theta}) = \mathrm{sign}(\bm{\theta})|\bm{\theta}|^{r-1} =: \mathrm{spow}(\bm{\theta}, r-1)$, where {\tt spow} denotes the signed power function. 
The update rule is
\begin{equation}
    \bm{q}^{(t+1)} = \tanh\left(\beta \bm{X}^\top \mathrm{spow}(\bm{X}\bm{q}^{(t)}, r-1) \right).
\end{equation} 
Likewise, the Exp-DAM of \citet{demircigil2017model}, which uses $F(z) = \exp(z)$, can be written as $E(\bm{q}) = -\mathbf{1}^\top \exp(\bm{X}\bm{q}) + I_{[-1,1]^D}(\bm{q})$. It corresponds to choosing
\begin{equation}
    \Omega(\bm{y}) = \sum_{j=1}^D [y_j \log y_j - y_j]_+ + I_{\mathbb{R}_+^N}(\bm{y}) \quad \text{and} \quad
\Psi(\bm{q}) = \beta^{-1} H_\mathrm{b}\left(\frac{1 + \bm{q}}{2}\right) + I_{[-1,1]^D}(\bm{q}).
\end{equation}
We have $(\nabla \Omega^*)(\bm{\theta}) = \exp(\bm{\theta})$. The update rule is $\bm{q}^{(t+1)} = \tanh\left(\beta \bm{X}^\top \exp(\bm{X}\bm{q}^{(t)}) \right)$. 

\subsubsection{Modern Hopfield networks (MHNs)} 

The energy \eqref{eq:energy_hopfield} of \citet{ramsauer2020hopfield} is recovered when $\Omega(\bm{y}) = \beta^{-1} \sum_{i=1}^N y_i \log y_i + I_{\triangle_N}(\bm{y})$, the Shannon negentropy with temperature $\beta^{-1}$, and when $\Psi(\bm{q}) = \frac{1}{2}\|\bm{q}\|^2$. 
In this case, $\hat{\bm{y}}_\Psi$ is the identity and $\hat{\bm{y}}_\Omega$ is the softmax transformation with temperature $\beta^{-1}$, leading to the update rule 
\begin{equation}
    \bm{q}^{(t+1)} = \bm{X}^\top \mathrm{softmax}(\beta \bm{X}\bm{q}^{(t)}).
\end{equation} 
The sparse MHNs of \citet{hu2023sparse} modify $\Omega$ to $\Omega(\bm{y}) = \frac{1}{2}\|\bm{y}\|^2 + I_{\triangle_N}(\bm{y})$, which leads to similar updates but where softmax is replaced by sparsemax. 
We present in \S\ref{sec:probabilistic} a generalization using the Tsallis $\alpha$-negentropy, which is also sparse for $\alpha>1$ and was proposed independently by \citet{martins2023sparse} and \citet{wu2023stanhop}. We also study the $\gamma$-normmax negentropy, proposed by \citet{santos2024sparse}, which also leads to sparse MHNs. 

\subsubsection{Hetero-associative memories} 

The convex function \(\Psi (\bm{q})\) can be used to induce outer projection operations, resulting in hetero-associativity through a matrix \(\bm{A}\), similar to the hetero-associative memories described by \citet{millidge2022universal} and  \citet{wu2024uniform}. To do that, \(\Psi (\bm{q})\) is defined as \(\Psi (\bm{q}) = \frac{1}{2} \bm{q}^\top \bm{A}^{-1} \bm{q}\), where $\bm{A}$ is a symmetric and positive definite matrix, which leads to the gradient map \((\nabla \Psi^*)(\bm{z}) = \bm{A} \bm{z} \). When \(\bm{z} = \bm{X}^\top \hat{\bm{y}}_\Omega(\bm{X}\bm{q})\), we are effectively using the matrix \(\bm{A}\) to project the associative space into a hetero-associative memory. 
This approach follows closely the outer projection considered in \citet{ramsauer2020hopfield}'s Hopfield layers. 

\subsubsection{Structured energies} 

The framework of MHNs can be extended to the case where $\mathrm{dom}(\Omega)$ is a polytope, which is more general than the simplex $\triangle_N$. A vertex of this polytope might indicate an association among memory patterns informed by some desired structure. We develop this scenario in \S\ref{sec:structured}. 

\subsubsection{Normalization operations} \label{sec:normalization}

The function $\Psi$ can also be used to induce a post-normalization operation, as hinted in the classical and dense associative cases highlighted above. Post-normalization was discussed by \citet{millidge2022universal}. 
One way to do this is to define $\Psi(\bm{q}) = I_{\|.\| \le r}(\bm{q})$, whose Fenchel conjugate is $\Psi^*(\bm{z}) = r\|\bm{z}\|$ and has gradient map $(\nabla \Psi^*)(\bm{z}) = \frac{r\bm{z}}{\|\bm{z}\|}$, which corresponds to \textbf{$\ell_2$-normalization}. This normalization technique was explored by \citet{krotov2021large}. An alternative  \textbf{layer normalization} approach was proposed by \cite{hoover2023energy}, but without explicitly deriving the underlying energy term. 



We establish next a result, proved in Appendix~\ref{sec:proof_prop_normalization}, which derives explicitly the energy term (via $\Psi(\bm{q})$) which gives rise to the layer normalization post-transformation in HFY networks. 
To the best of our knowledge, this result has never been explicitly derived before.

\begin{proposition}[Layer normalization]\label{prop:normalization}
    Consider the layer normalization map 
\begin{equation}\label{eq:layernorm}
    \mathrm{LayerNorm}(\bm{z}; \eta, \bm{\delta}) := \eta \frac{\bm{z}- \mu_{\bm{z}}}{\sqrt{\frac{1}{D}\sum_i(z_i-\mu_{\bm{z}})^2}} + \bm{\delta},
\end{equation}
where $\eta > 0$ and $\bm{\delta} \in \mathbb{R}^D$ are arbitrary parameters, and where $\mu_{\bm{z}} := \frac{1}{D} \mathbf{1}^\top \bm{z}$. 
If we choose 
\begin{equation}
    \Psi(\bm{q}) := I_S(\bm{q}) \quad  \text{with} \quad S := \left\{ \bm{q}  \,\,:\,\, \|\bm{q} - \bm{\delta}\| \le \eta \sqrt{D} \,\, \wedge \,\,  \bm{1}^\top (\bm{q} - \bm{\delta})=0 \right\},
\end{equation}
then we have that 
$\hat{\bm{y}}_\Psi(\bm{z}) = (\nabla \Psi^*)(\bm{z}) = \mathrm{LayerNorm}(\bm{z}; \eta, \bm{\delta})$. 
\end{proposition}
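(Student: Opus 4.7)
The plan is to compute $\hat{\bm{y}}_\Psi(\bm{z}) = \nabla \Psi^*(\bm{z})$ directly from the variational characterization
\begin{equation}
\nabla \Psi^*(\bm{z}) \;=\; \argmax_{\bm{q}\in\mathbb{R}^D}\, \bm{z}^\top\bm{q} - \Psi(\bm{q}) \;=\; \argmax_{\bm{q}\in S}\, \bm{z}^\top\bm{q},
\end{equation}
where the second equality uses that $\Psi = 0$ on $S$ and $+\infty$ elsewhere. The main work is therefore to solve the constrained linear program on $S$, which combines a Euclidean ball constraint and an affine (mean-zero after centering) constraint.

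First I would reduce to a centered problem by changing variables $\bm{r} := \bm{q} - \bm{\delta}$. Under this substitution $S$ becomes $\widetilde S := \{\bm{r}\in\mathbb{R}^D : \|\bm{r}\| \le \eta\sqrt{D},\ \mathbf{1}^\top\bm{r}=0\}$, and the objective becomes $\bm{z}^\top\bm{r} + \bm{z}^\top\bm{\delta}$; the additive constant is irrelevant to the $\argmax$. Next, I would exploit the orthogonality built into $\widetilde S$: since every feasible $\bm{r}$ satisfies $\bm{r}\perp\mathbf{1}$, decomposing $\bm{z} = \mu_{\bm{z}}\mathbf{1} + (\bm{z}-\mu_{\bm{z}}\mathbf{1})$ (with $\mu_{\bm{z}} = \frac{1}{D}\mathbf{1}^\top\bm{z}$) yields $\bm{z}^\top\bm{r} = (\bm{z}-\mu_{\bm{z}}\mathbf{1})^\top\bm{r}$, so the mean-zero constraint is effectively encoded into the objective by projecting $\bm{z}$ onto $\mathbf{1}^\perp$.

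It then remains to maximize a linear functional over a Euclidean ball in the subspace $\mathbf{1}^\perp$. This is a textbook computation: the unique maximizer (provided $\bm{z}-\mu_{\bm{z}}\mathbf{1} \neq \mathbf{0}$) is
\begin{equation}
\bm{r}^\star \;=\; \eta\sqrt{D}\,\frac{\bm{z}-\mu_{\bm{z}}\mathbf{1}}{\|\bm{z}-\mu_{\bm{z}}\mathbf{1}\|},
\end{equation}
which I would justify either by Cauchy--Schwarz or by writing the KKT conditions (the Lagrange multiplier for $\mathbf{1}^\top\bm{r}=0$ vanishes automatically because $\bm{z}-\mu_{\bm{z}}\mathbf{1}\perp\mathbf{1}$, and the ball constraint is active). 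Finally, using $\|\bm{z}-\mu_{\bm{z}}\mathbf{1}\| = \sqrt{D}\cdot\sqrt{\frac{1}{D}\sum_i(z_i-\mu_{\bm{z}})^2}$ and undoing the change of variables $\bm{q}^\star = \bm{r}^\star + \bm{\delta}$, the $\sqrt{D}$ factors cancel and I recover exactly the map in \eqref{eq:layernorm}.

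The only subtlety worth flagging is the degenerate case $\bm{z} = \mu_{\bm{z}}\mathbf{1}$ (constant $\bm{z}$), where $\nabla\Psi^*$ is not single-valued since $\Psi^*$ fails to be differentiable there; this mirrors the well-known fact that $\mathrm{LayerNorm}$ itself is ill-defined on constant inputs, so the statement should be read on the open dense set where $\bm{z}\notin\mathrm{span}(\mathbf{1})$. Apart from this caveat, the argument is a straightforward application of convex duality for the support function of a ball intersected with a hyperplane, and no genuinely hard step is expected.
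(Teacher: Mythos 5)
Your proof is correct and follows essentially the same route as the paper's: both reduce to maximizing a linear functional over a ball intersected with the hyperplane $\mathbf{1}^\top(\bm{q}-\bm{\delta})=0$, with the hyperplane multiplier turning out to be $\mu_{\bm{z}}$ and the ball constraint active. The only differences are organizational — the paper factors the computation through the scaling and translation rules for conjugates rather than a single change of variables — and your explicit remark about the degenerate case $\bm{z}\in\mathrm{span}(\mathbf{1})$ is a caveat the paper omits.
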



\subsubsection{Sum of energy functions and infimal convolutions} 

It may be convenient to consider sums of energy functions, as done recently by \citet{hoover2023energy}. Let $\Omega_1$ and $\Omega_2$ be two functions and suppose we would like to have $\Omega^* = \Omega_1^* + \Omega_2^*$. The corresponding function $\Omega$ is the infimal convolution $\Omega = \Omega_1 \square\, \Omega_2$ \citep[\S 12]{bauschke2017convex}, defined as 
\begin{equation}\label{eq:infimal_conv}
    (\Omega_1 \square\, \Omega_2)(\bm{y}) := \inf_{\bm{z} \in \mathbb{R}^N} \left[ \Omega_1(\bm{y} - \bm{z}) + \Omega_2(\bm{z}) \right].
\end{equation}
This leads to the desired convex conjugate $\Omega^* = (\Omega_1 \square\, \Omega_2)^* = \Omega_1^* + \Omega_2^*$, and therefore to the sum of the gradient maps $\hat{\bm{y}}_\Omega = \hat{\bm{y}}_{\Omega_1} + \hat{\bm{y}}_{\Omega_2}$.

\bigskip


\section{Sparse Hopfield Networks}\label{sec:probabilistic}
In this section we assume that $\mathrm{dom}(\Omega) = \triangle_N$ (the probability simplex). We now use the general result derived in Proposition~\ref{prop:cccp} to define a particular instance of sparse energy functions for modern Hopfield networks. 
Then, we study their properties by making a connection to margins in Fenchel-Young losses.

\subsection{Generalized negentropies}
\label{sec:GE}
A \textbf{generalized (neg)entropy} \citep{grunwald2004generalized,Amigo2018GeneralizedEntropies}, formally defined below, provides a flexible framework for measuring disorder or uncertainty. 

\begin{definition}
\label{def:generalized_negent}
   \(\Omega : \triangle_{N} \to \mathbb{R}\)
   is a generalized negentropy iff it satisfies the conditions
   \begin{enumerate}
      \item Zero negentropy: $\Omega(\bm{y})=0$
      if \(\bm{y}\) is a one-hot vector, \textit{i.e.},
      \(\bm{y}=\bm{e}_i\) for any \(i \in \{1,...,N\}\).
      \item Strict convexity:
      \(\Omega\left((1-\lambda)\bm{y} + \lambda \bm{y}'\right)
      < (1-\lambda)\Omega(\bm{y}) + \lambda \Omega(\bm{y}')
      \) for $\lambda \in \,\, ]0,1[$ and $\bm{y}, \bm{y}' \in \triangle_N$ with $\bm{y} \ne \bm{y}'$.  
      \item Permutation invariance:
      \(\Omega(\bm{Py})=\Omega(\bm{y})\) for any permutation matrix \(\bm{P}\)
      (\textit{i.e.}, a square matrix with a single 1 in each row and each column, and zero elsewhere).
   \end{enumerate}
\end{definition} 
This definition implies that $\Omega \le 0$ and that $\Omega$ is minimized when $\bm{y} = \mathbf{1}/N$, the uniform distribution \citep[Proposition~4]{blondel2020learning}, which justifies the name ``generalized negentropy.'' 
We next discuss choices of $\Omega$ which lead to sparse alternatives to softmax. 

\subsection{Sparse transformations}
\begin{figure*}[t]
    \centering
    \includegraphics[width=1\columnwidth]{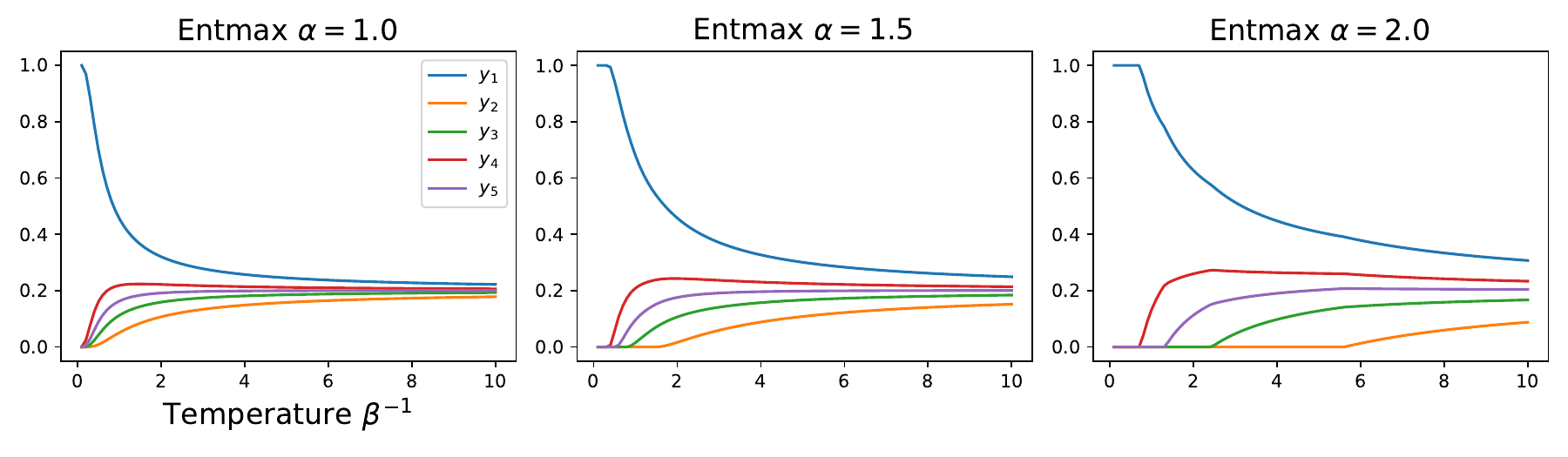}\\
    \includegraphics[width=1\columnwidth]{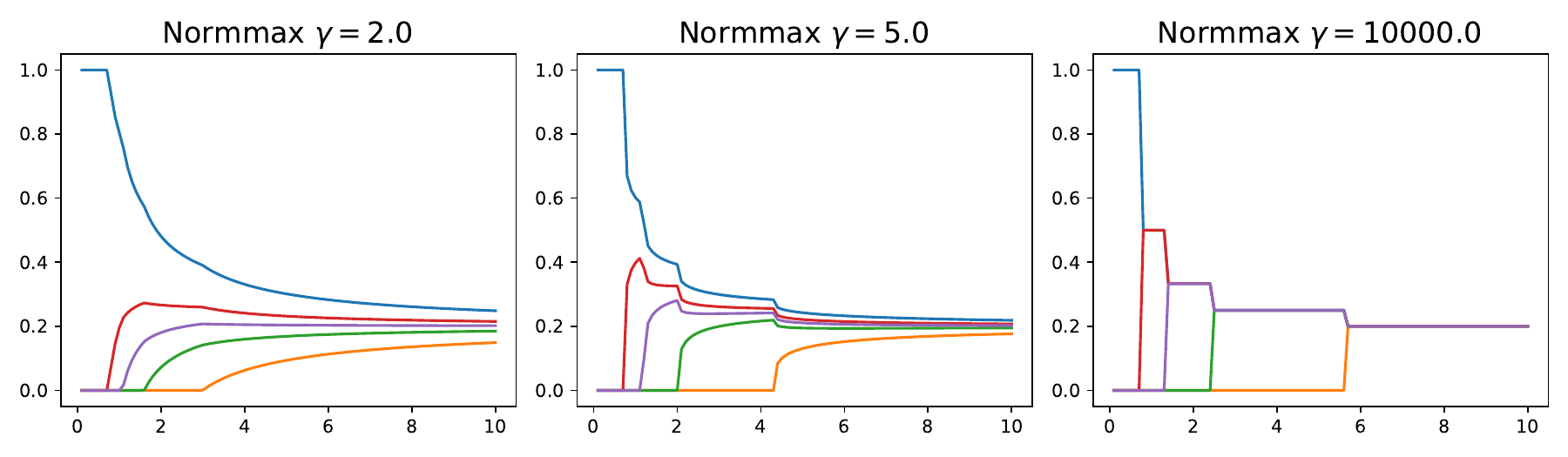}\\
    \vspace{0.3cm}
    \includegraphics[width=1\columnwidth]{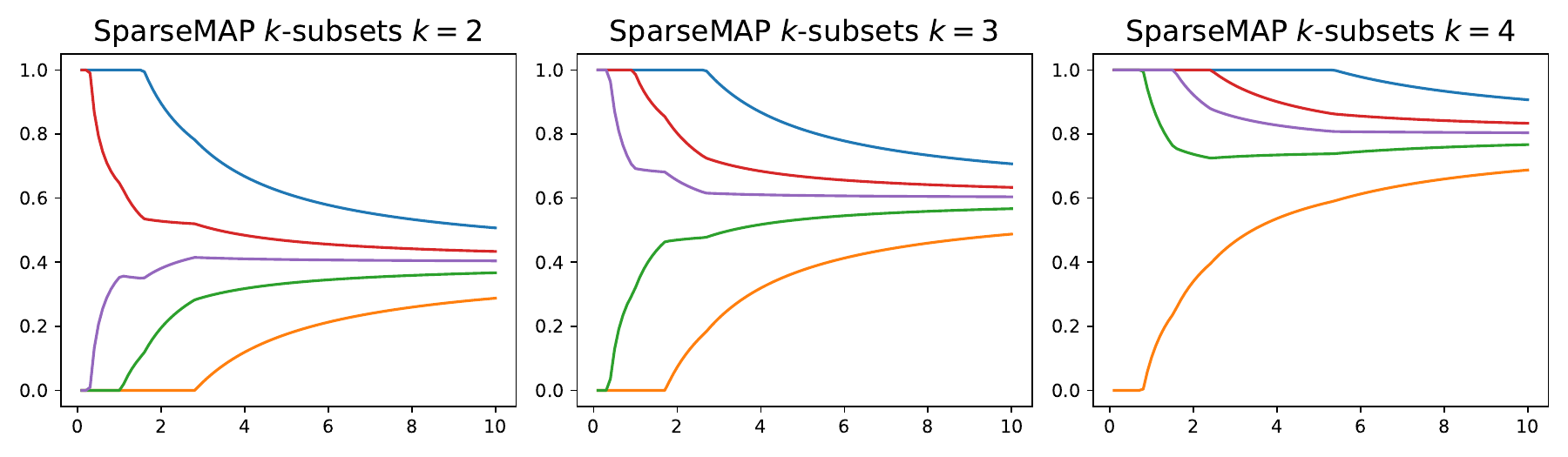}
    \caption{Sparse and structured transformations used in this paper and their regularization path. In each plot, we show $\hat{\bm{y}}_\Omega(\beta \bm{\theta}) = \hat{\bm{y}}_{\beta^{-1}\Omega}(\bm{\theta})$ as a function of the temperature $\beta^{-1}$ where $\bm{\theta} = [1.0716, -1.1221, -0.3288, 0.3368, 0.0425]^\top$.}
    \label{fig:sparse_transformations}
\end{figure*} 
In \S\ref{sec:FYL} we saw sparsemax, which is an example of a sparse transformation.
In fact, the softmax and sparsemax transformations 
are both particular cases of a broader family of \textbf{$\alpha$-entmax transformations} \citep{peters2019sparse}, parametrized by a scalar $\alpha \ge 0$ (called the entropic index). These transformations correspond to the following choice of regularizer $\Omega$, called the \textbf{Tsallis $\alpha$-negentropy} \citep{tsallis1988possible}:
\begin{align}\label{eq:tsallis}
    \Omega^T_\alpha(\bm{y}) = \begin{cases}
    \frac{-1 + \|\bm{y}\|_\alpha^\alpha}{\alpha(\alpha-1)} + I_{\triangle_N}(\bm{y}), & \text{if } \alpha \neq 1 \\
    \sum_i y_i \log y_i, & \text{if } \alpha = 1.
    \end{cases}
\end{align}
When $\alpha = 1$,
the Tsallis $\alpha$-negentropy
\(\Omega_\alpha^T\)
reduces to the Shannon's negentropy, leading to the softmax transformation. When $\alpha=2$, it becomes the Gini negentropy, which equals the $\ell_2$-norm (up to a constant), leading to the sparsemax transformation \citep{martins2016softmax}.

Another example is the  \textbf{norm $\gamma$-negentropy} \citep[\S4.3]{blondel2020learning}, 
\begin{align}\label{eq:norm_entropy}
    \Omega^{N}_{\gamma}(\bm{y}) := -1 + \|\bm{y}\|_\gamma + I_{\triangle_N}(\bm{y}), 
\end{align}
which, when $\gamma \rightarrow +\infty$, is called the Berger-Parker dominance index \citep{may1975patterns}, widely used in ecology to measure the diversity of a species within a community. 
We call the resulting transformation \textbf{$\gamma$-normmax}.  
While the Tsallis and norm negentropies have similar expressions and the resulting transformations both tend to be sparse, they have important differences, as suggested in Figure~\ref{fig:sparse_transformations}: 
normmax favors distributions closer to uniform over the selected support. 
We will come back to these properties in the subsequent sections. 

The examples above are all instances of transformations induced by generalized negentropies \citep{blondel2020learning}: 
Tsallis negentropies \eqref{eq:tsallis} for $\alpha \ge 1$ and  norm negentropies \eqref{eq:norm_entropy} for $\gamma>1$ both satisfy the properties stated in \S\ref{sec:GE}. 


\subsection{Sparsity and margins}

As seen in \S\ref{sec:FYL}, convex regularizers $\Omega$ can be used to define not only a regularized argmax transformation, but also a Fenchel-Young loss. 
What happens when $\Omega$ is a sparsity-inducing generalized negentropy? 
We next show that, 
in addition to the general properties mentioned in Proposition~\ref{prop:properties}, Fenchel-Young losses induced by such negentropies also satisfy a \textbf{margin} property,
which, as we shall see,  plays a pivotal role in the convergence and storage capacity of 
the class of Hopfield networks to be studied in \S\ref{sec:probabilistic} and \S\ref{sec:structured}. 

\begin{definition}[Margin]\label{def:margin} A loss function $L(\bm{\theta}; \bm{y})$ has a \textbf{margin} if there is a finite $m \ge 0$ such that
\begin{align}\label{eq:margin}
\begin{split}
\forall i \in [N], \quad
    L(\bm{\theta}, \bm{e}_i) = 0     &\Longleftrightarrow {\theta}_i - \max_{j \ne i} {\theta}_j \ge m.
\end{split}
\end{align}
The smallest such $m$ is called the margin of $L$. 
If $L_\Omega$ is a Fenchel-Young loss, 
\eqref{eq:margin} is equivalent to $\hat{\bm{y}}_{\Omega}(\bm{\theta}) = \bm{e}_i$. 
\end{definition} 
A famous example of a loss with a margin of 1 is the hinge loss of support vector machines. 
On the other hand, the cross-entropy loss does not have a margin, as suggested in Figure~\ref{fig:margins}: it never reaches exactly zero for \( \bm{\theta} = [s, 0] \in \mathbb{R}^2 \), unlike the $\alpha$-entmax and $\gamma$-normmax losses.
\begin{figure*}[t]
    \centering
    \includegraphics[width=1\textwidth]{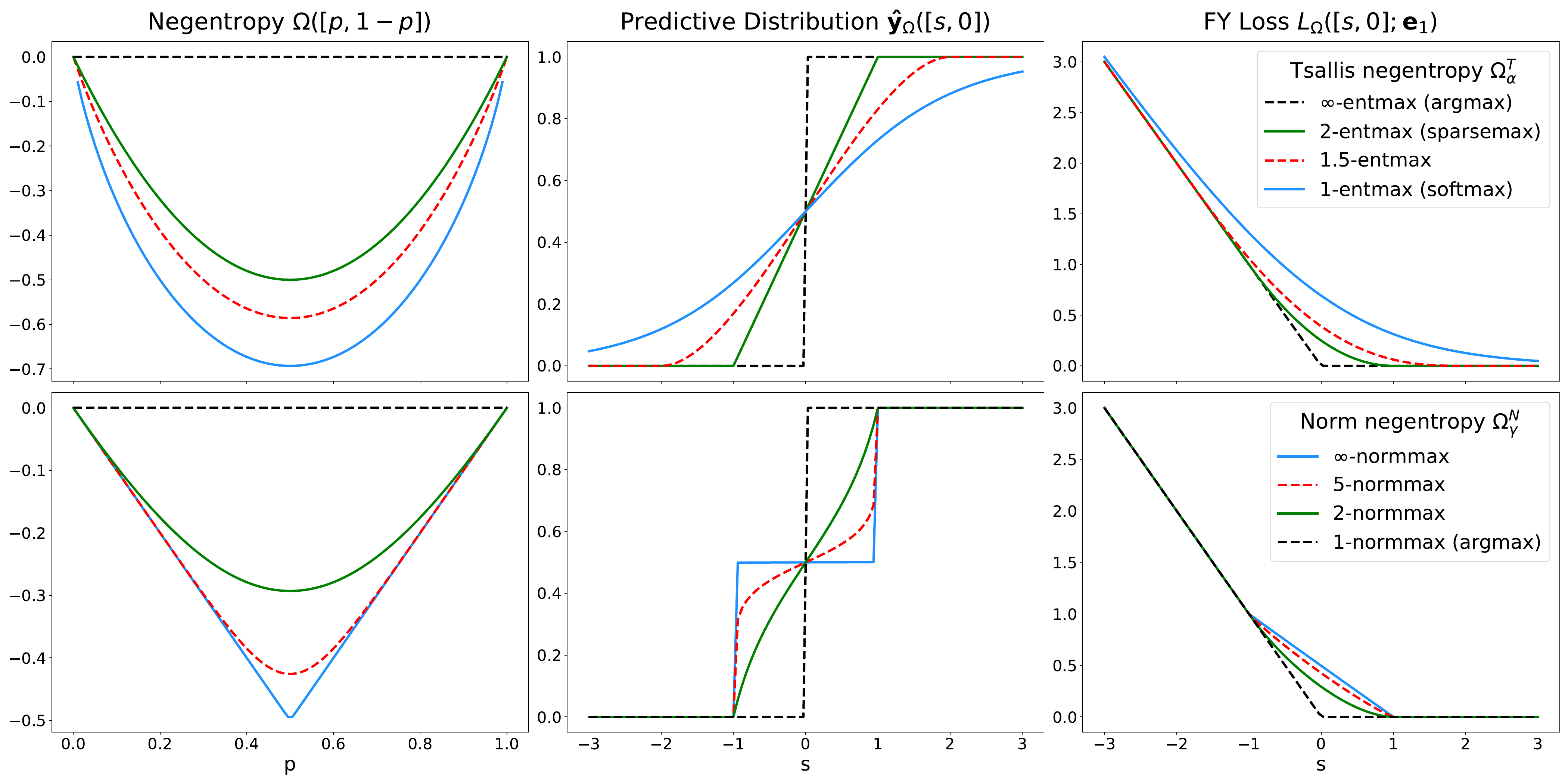}
\caption{Examples of \textbf{generalized entropies} (left) are presented alongside their corresponding \textbf{prediction distributions} (middle) and \textbf{Fenchel-Young losses} (right) for the binary case. Here, \( \bm{y} = [p, 1 - p] \in \triangle_2 \), \( \bm{\theta} = [s, 0] \in \mathbb{R}^2 \), and $\bm{e}_1$ is the one-hot vector for the first class. Unlike softmax, which never reaches exactly zero and consequently does not have a margin, all other distributions shown in the center can exhibit sparse support.}
    \label{fig:margins}
\end{figure*}We then have the following result, proved by \citet{blondel2020learning}: 
\begin{proposition}[Margin Properties of Tsallis and Norm-Entropies]
\label{prop:margins}
Tsallis and norm negentropies have the following margins:
\begin{enumerate}
    \item \textbf{Tsallis negentropies} $\Omega^T_\alpha$ with $\alpha > 1$ lead to a loss $L_{\Omega^T_\alpha}$ with a margin $m = (\alpha - 1)^{-1}$.
    \item \textbf{Norm-negentropies} $\Omega^N_\gamma$ with $
    \gamma > 1$ lead to a loss $L_{\Omega^N_\gamma}$  with a margin $m = 1$, independent of $\gamma$.
\end{enumerate}
\end{proposition}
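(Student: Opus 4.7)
The plan is to use the reformulation already provided at the end of Definition~\ref{def:margin}: for a Fenchel-Young loss $L_\Omega$, having a margin $m$ is equivalent to the condition $\hat{\bm{y}}_\Omega(\bm{\theta}) = \bm{e}_i \Leftrightarrow \theta_i - \max_{j \ne i} \theta_j \ge m$. By permutation invariance of generalized negentropies (Definition~\ref{def:generalized_negent}, item~3) it suffices to handle $i=1$. For each negentropy I would then characterize, via KKT conditions for the strictly concave maximization in \eqref{eq:rpm} with Lagrange multiplier $\tau$ for $\mathbf{1}^\top \bm{y}=1$ and nonnegativity multipliers $\mu_j \ge 0$ (with $\mu_j \hat{y}_j=0$), exactly when $\hat{\bm{y}}_\Omega(\bm{\theta})=\bm{e}_1$, and read off the threshold $m$.

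For the Tsallis case ($\alpha>1$), stationarity on $\triangle_N$ yields the well-known closed form $\hat{y}_i = [(\alpha-1)(\theta_i - \tau)]_+^{1/(\alpha-1)}$. Imposing $\hat{y}_1 = 1$ pins down $\tau = \theta_1 - 1/(\alpha-1)$, and imposing $\hat{y}_j = 0$ for $j \ne 1$ becomes $\theta_j \le \tau$, i.e., $\theta_1 - \theta_j \ge 1/(\alpha-1)$. Strict convexity of $\Omega^T_\alpha$ on $\triangle_N$ makes these conditions sufficient as well, so the margin is exactly $m = 1/(\alpha-1)$.

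For the norm case ($\gamma > 1$) there is no explicit closed form, so I would verify optimality of $\bm{e}_1$ directly. The key computation is $\nabla \|\bm{y}\|_\gamma$ at $\bm{y}=\bm{e}_1$: since $\gamma > 1$ the $\ell_\gamma$ norm is differentiable at every nonzero point, and direct calculation gives $[\nabla \|\bm{y}\|_\gamma]_i = \|\bm{y}\|_\gamma^{1-\gamma}|y_i|^{\gamma-1}\mathrm{sign}(y_i)$, which at $\bm{y}=\bm{e}_1$ reduces to $\bm{e}_1$ (using $|0|^{\gamma-1}=0$ for $\gamma>1$). The stationarity condition $\theta_i - [\nabla \|\hat{\bm{y}}\|_\gamma]_i = \tau - \mu_i$ then reads, at $\hat{\bm{y}} = \bm{e}_1$: for $i=1$ (with $\mu_1=0$ by complementarity), $\tau = \theta_1 - 1$; for $j \ne 1$, $\mu_j = \tau - \theta_j \ge 0$, i.e., $\theta_1 - \theta_j \ge 1$. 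The bound is independent of $\gamma$, giving margin $m = 1$.

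The main obstacle is ensuring the KKT analysis yields an \emph{equivalence} rather than only necessity. Strict convexity of both negentropies on $\triangle_N$ (built into Definition~\ref{def:generalized_negent}) guarantees uniqueness of the maximizer of a concave program with convex feasible set, so any KKT point is the unique global maximizer; this promotes stationarity from necessary to sufficient. A related subtlety is verifying that the computed $m$ is not just sufficient but tight: one must check that on the boundary $\theta_1 - \theta_j = m$ the map still returns $\bm{e}_1$ (so the inequality in \eqref{eq:margin} can be non-strict), while for any smaller gap some $\mu_j$ would be forced negative, contradicting KKT and forcing $\hat{y}_j > 0$. A minor extra care point for the norm case is handling the endpoint behavior $\gamma \to \infty$, which is not covered by the proposition but is the regime where normmax approaches the Berger-Parker index; the margin stays $m=1$ throughout, consistent with $\gamma$-independence of the above KKT bound.
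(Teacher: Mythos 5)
Your proof is correct, but it takes a different route from the paper: the paper does not prove Proposition~\ref{prop:margins} at all, deferring to \citet{blondel2020learning}, and the argument used there (and echoed in this paper's own Appendix~\ref{sec:proof_prop_sparsemap_margin} for the structured analogue) goes through a general closed-form characterization of the margin of a Fenchel--Young loss, namely a supremum of the form $\sup_{\bm{y}, \bm{\mu}} \bigl(\Omega(\bm{y}) - \Omega(\bm{\mu})\bigr)/\bigl(1 - \bm{\mu}^\top\bm{y}\bigr)$ over the simplex, which is then evaluated for each entropy family. You instead verify the margin directly from the KKT conditions of the regularized argmax problem \eqref{eq:rpm} at the vertex $\bm{e}_1$: the Tsallis computation via the closed form $\hat{y}_i = [(\alpha-1)(\theta_i-\tau)]_+^{1/(\alpha-1)}$ correctly yields the threshold $(\alpha-1)^{-1}$, and the norm computation correctly uses differentiability of $\|\cdot\|_\gamma$ at the nonzero point $\bm{e}_1$ (where its gradient is $\bm{e}_1$ itself for every $\gamma>1$) to get the $\gamma$-independent threshold $1$. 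Your handling of the equivalence is also sound: KKT over a polyhedron needs no constraint qualification, and strict convexity of the negentropy (guaranteed by Definition~\ref{def:generalized_negent} and asserted for both families in \S\ref{sec:GE}) upgrades the KKT point to the unique maximizer, so the condition is both necessary and sufficient and the computed $m$ is tight. What each approach buys: yours is elementary and self-contained but must be redone per entropy family and relies on identifying the optimal multiplier at the vertex; the supremum formula is a single uniform criterion that simultaneously yields the sparsity-iff-margin equivalence the paper invokes later and extends directly to the structured setting of Proposition~\ref{prop:sparsemap_margin}, where no per-vertex closed form is available.
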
  
In fact, the sparsity of $\hat{\bm{y}}_{\Omega}$ is a sufficient condition for $L_\Omega$ having a margin \citep[Proposition 6]{blondel2020learning}.
By leveraging these established facts from prior propositions, we will prove that our class of Hopfield networks is capable of exact retrieval.

\subsection{Sparse Hopfield networks: Definition and update rule}\label{sec:hfy_definition}

In this section, we study a specialization of the HFY energy defined in \S\ref{sec:HFYE} by assuming that the regularizer $\Omega$ has domain $\mathrm{dom}(\Omega) = \triangle_N$ and that it is a generalized negentropy (see Definition \ref{def:generalized_negent}). 
We assume also that $\Psi(\bm{q}) = \frac{1}{2}\|\bm{q}\|^2$, and we use $\bm{u} = \frac{\mathbf{1}}{N}$ as the baseline. Using  $\tilde{\Omega}(\bm{\theta}) = \Omega(\beta \bm{\theta})$, where $\beta^{-1} > 0$ is a temperature parameter, the HFY energy \eqref{eq:general_hfy_energy} becomes the following, up to an extra constant:
\begin{equation}\label{eq:hfy_energy}
    E(\bm{q}) = \underbrace{-\beta^{-1} L_\Omega(\beta \bm{X} \bm{q}; \mathbf{1}/{N})}_{E_{\mathrm{concave}}(\bm{q})} + \underbrace{\frac{1}{2} \|\bm{q} - \bm{\mu}_{\bm{X}}\|^2 + \frac{1}{2}(M^2 - \|\bm{\mu}_{\bm{X}}\|^2)}_{{E_{\mathrm{convex}}(\bm{q})}},
\end{equation}
where $\bm{\mu}_{\bm{X}} := \bm{X}^\top \mathbf{1}/N \in \mathbb{R}^D$ is the empirical mean of the patterns and $M := \max_i \|\bm{x}_i\|$. 
This energy extends that of \citet{ramsauer2020hopfield} in \eqref{eq:energy_hopfield}, which is recovered when $\Omega$ is Shannon's negentropy. %
The $E_{\text{convex}}$ and $E_{\text{concave}}$
terms compete when minimizing 
\eqref{eq:hfy_energy}: 
\begin{itemize}
    \item Minimizing $E_{\mathrm{concave}}$ is equivalent to \textit{maximizing} $L_{\Omega}(\beta \bm{X} \bm{q}; \mathbf{1}/{N})$, 
    which pushes
    $\bm{q}$ to be as far as possible from a uniform average and closer to a single pattern.
    \item Minimizing $E_{\mathrm{convex}}$ serves as proximity regularization, encouraging $\bm{q}$ to stay close to $\bm{\mu}_{\bm{X}}$.
\end{itemize} 
The next result is a consequence of our Proposition~\ref{prop:cccp},  generalizing \citet[Lemma A.1, Theorem A.1]{ramsauer2020hopfield}. The bounds on the energy are derived in Appendix~\ref{app:general_cccp}. 
\begin{proposition}[Update rule of sparse HFY energies]\label{prop:bounds_cccp}
    Let the query $\bm{q}$ be in the convex hull of the rows of $\bm{X}$, i.e., $\bm{q} = \bm{X}^\top \bm{y}$ for some $\bm{y} \in \triangle_N$. Then, the energy \eqref{eq:hfy_energy} satisfies
\(0 \le E(\bm{q}) \le \min\left\{2M^2, \,\, -\beta^{-1}\Omega(\mathbf{1}/N) + \frac{1}{2}M^2\right\}\).
Furthermore, minimizing \eqref{eq:hfy_energy} with the CCCP algorithm \citep{yuille2003concave} leads to the updates
\begin{align}\label{eq:updates_hfy}
    \bm{q}^{(t+1)} = \bm{X}^\top \hat{\bm{y}}_\Omega(\beta \bm{X} \bm{q}^{(t)}).
\end{align}
\end{proposition}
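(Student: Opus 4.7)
My plan is to derive the update rule as an immediate specialization of the general update rule in Proposition~\ref{prop:cccp}, and then to establish the energy bounds by unpacking the definition of the Fenchel-Young loss $L_\Omega$ and applying standard convex-analytic inequalities together with the assumption that $\bm{q}$ is in the convex hull of the rows of $\bm{X}$.

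First, I would instantiate Proposition~\ref{prop:cccp} with $\Psi(\bm{q})=\tfrac12\|\bm{q}\|^2$ and with the temperature-scaled regularizer (using the scaling rule from Table~\ref{table:convex_conjugate_properties}, which gives $\hat{\bm{y}}_{\beta^{-1}\Omega(\beta\cdot)}(\bm{\theta}) = \hat{\bm{y}}_\Omega(\beta\bm{\theta})$). Since $\Psi^*(\bm{z})=\tfrac12\|\bm{z}\|^2$ has gradient equal to the identity, \eqref{eq:updates_general_hfy} collapses to the announced $\bm{q}^{(t+1)} = \bm{X}^\top \hat{\bm{y}}_\Omega(\beta \bm{X}\bm{q}^{(t)})$. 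This step is essentially bookkeeping, so the only care required is matching the scaling conventions used in the energy~\eqref{eq:hfy_energy}.

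For the bounds, I would first rewrite \eqref{eq:hfy_energy} in the more useful form
\begin{equation*}
E(\bm{q}) = -\beta^{-1}\Omega(\mathbf{1}/N) - \beta^{-1}\Omega^*(\beta \bm{X}\bm{q}) + \tfrac{1}{2}\|\bm{q}\|^2 + \tfrac{1}{2}M^2,
\end{equation*}
by expanding $L_\Omega(\beta\bm{X}\bm{q};\mathbf{1}/N)$ via~\eqref{eq:fy_loss} and noting that the linear term $\beta(\bm{X}\bm{q})^\top(\mathbf{1}/N) = \bm{q}^\top\bm{\mu}_{\bm{X}}$ cancels with the cross term from expanding $\tfrac{1}{2}\|\bm{q}-\bm{\mu}_{\bm{X}}\|^2$. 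For the lower bound $E(\bm{q})\ge 0$, I would upper-bound the Fenchel conjugate via $\Omega^*(\beta\bm{X}\bm{q}) \le \beta\|\bm{X}\bm{q}\|_\infty - \Omega(\mathbf{1}/N) \le \beta M\|\bm{q}\| - \Omega(\mathbf{1}/N)$ (using Cauchy–Schwarz and that $\mathbf{1}/N$ minimizes $\Omega$ over $\triangle_N$), which yields $E(\bm{q}) \ge \tfrac12(\|\bm{q}\|-M)^2 \ge 0$ for any $\bm{q}$.

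For the upper bounds, I would exploit the hypothesis $\bm{q}=\bm{X}^\top\bm{y}$ with $\bm{y}\in\triangle_N$. For the second bound $-\beta^{-1}\Omega(\mathbf{1}/N)+\tfrac12 M^2$, I would use the dual inequality in the other direction: since $\Omega(\bm{e}_i)=0$ (generalized negentropy), $\Omega^*(\beta\bm{X}\bm{q})\ge \beta\max_i \bm{x}_i^\top\bm{q}$, and averaging against $\bm{y}$ gives $\max_i \bm{x}_i^\top\bm{q}\ge \sum_i y_i\bm{x}_i^\top\bm{q}=\|\bm{q}\|^2$, so the $-\beta^{-1}\Omega^*$ term absorbs the $\tfrac12\|\bm{q}\|^2$ term with a sign to spare. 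For the $2M^2$ bound, I would drop $-\beta^{-1}L_\Omega\le 0$ and control the remaining quadratic part via $\|\bm{q}\|,\|\bm{\mu}_{\bm{X}}\|\le M$ (both lie in the convex hull of the rows), giving $\tfrac12\|\bm{q}\|^2-\bm{q}^\top\bm{\mu}_{\bm{X}}+\tfrac12 M^2 \le 2M^2$ by Cauchy–Schwarz. The main obstacle is not any single inequality but rather matching the correct form of the Fenchel–Young loss with the $\beta$-scaled regularizer in~\eqref{eq:hfy_energy} so that the cross terms cancel cleanly and the generalized-negentropy properties (minimum at $\mathbf{1}/N$, zeros at vertices) can be invoked on the appropriately scaled object.
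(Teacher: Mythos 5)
Your proposal is correct and follows essentially the same route as the paper's proof: the update rule is obtained by specializing Proposition~\ref{prop:cccp} with $\Psi=\tfrac12\|\cdot\|^2$, the lower bound comes from $\Omega^*(\bm{\theta})\le\max_i\theta_i-\Omega(\mathbf{1}/N)$ (yielding a perfect square), the second upper bound from the reverse conjugate inequality using $\Omega\le 0$ on $\triangle_N$, and the $2M^2$ bound from dropping the nonpositive concave term and applying Cauchy--Schwarz with $\|\bm{q}\|,\|\bm{\mu}_{\bm{X}}\|\le M$. The only differences are cosmetic (you cancel the linear terms up front and land on $\tfrac12(\|\bm{q}\|-M)^2$ rather than $\tfrac12\|\bm{x}_k-\bm{q}\|^2$, and you invoke $\Omega(\bm{e}_i)=0$ where the paper uses $\Omega(\bm{y})\le 0$), which do not change the argument.
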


\paragraph{Entmax and normmax.} When $\Omega=\Omega^T_\alpha$ (the Tsallis $\alpha$-negentropy \eqref{eq:tsallis}), the update \eqref{eq:updates_hfy} corresponds to the adaptively sparse transformer of \citet{correia2019adaptively}. 
The $\alpha$-entmax transformation can be computed 
efficiently with sort or top-$k$ algorithms
for $\alpha \in \{1.5, 2\}$; for other values of $\alpha$, an efficient bisection algorithm was proposed by \citet{peters2019sparse}. 
The case $\alpha=2$ (sparsemax) corresponds to the sparse modern Hopfield network proposed by \citet{hu2023sparse}. When $\alpha > 1$ and $\alpha \ne 2$, we recover the $\alpha$-entmax Hopfield network introduced by \citet{martins2023sparse}, \citet{wu2023stanhop}, and  \citet{santos2024sparse}. When $\Omega=\Omega^N_\gamma$ (the norm $\gamma$-negentropy \eqref{eq:norm_entropy}), we obtain the $\gamma$-normmax transformation. This transformation is more challenging since $\Omega^N_\gamma$ is not separable, but Appendix~\ref{sec:normmax} presents a bisection algorithm which works for any $\gamma>1$.  




\paragraph{$\ell_2$-normalization and layer normalization.} 
It is possible to extend the idea above to incorporate a post-transformation as described in \S\ref{sec:normalization}. We discuss this scenario in \S\ref{sec:extension_normalization}.

\subsection{Properties: Margins, sparsity, and exact retrieval}\label{sec:hfy_margins}

Prior work on modern Hopfield networks \citep[Def.~1]{ramsauer2020hopfield} 
defines pattern storage and retrieval in an {\it approximate} sense: they assume a small neighbourhood around each pattern $\bm{x}_i$ containing an attractor $\bm{x}_i^*$, such that if the initial query $\bm{q}^{(0)}$ is close enough, the Hopfield updates will converge to $\bm{x}_i^*$, leading to a retrieval error of $\|\bm{x}_i^* - \bm{x}_i\|$. For this error to be small, a large $\beta$ may be necessary. 
We consider here a stronger definition of \textbf{exact retrieval}, where the attractors \textit{coincide} with the actual patterns (rather than being nearby). 
Our main result is that 
\textbf{zero retrieval error} is possible in HFY networks as long as the corresponding Fenchel-Young loss has a \textbf{margin} 
(Def.~\ref{def:margin}). 
Given that $\hat{\bm{y}}_\Omega$ being a sparse transformation is a sufficient condition for $L_\Omega$ having a margin \citep[Proposition 6]{blondel2020learning}, this is a general statement about sparse transformations.%
\footnote{At first sight, this might seem to be a surprising result,  given that both queries and patterns are continuous. The reason why exact convergence is possible hinges crucially on sparsity.} 

\begin{definition}[Exact retrieval]\label{def:exact_retrieval}
    A pattern $\bm{x}_i$ is \textbf{exactly retrieved} for  query $\bm{q}^{(0)}$ iff there is a finite number of steps $T$ such that  iterating \eqref{eq:updates_hfy} 
    leads to $\bm{q}^{(T')} = \bm{x}_i$ $\forall T' \ge T$. 
\end{definition}

The following result gives sufficient conditions for exact retrieval with $T=1$ given that patterns are well separated and that the query is sufficiently close to the retrieved pattern. 
It establishes the \textbf{exact autoassociative} property of this class of HFY networks: if all patterns are slightly perturbed, the Hopfield dynamics are able to recover the original patterns exactly.  
Following \citet[Def.~2]{ramsauer2020hopfield}, we define the separation of pattern $\bm{x}_i$ from data as $\Delta_i = \bm{x}_i^\top \bm{x}_i - \max_{j \ne i} \bm{x}_i^\top \bm{x}_j$.

\begin{proposition}[Exact retrieval in a single iteration]\label{prop:separation}
    Assume $L_\Omega$ has margin $m$, 
    and let $\bm{x}_i$ be a  pattern outside the convex hull of the other patterns.
    Then
    \begin{enumerate}
    \item 
    $\bm{x}_i$ is a stationary point of the energy \eqref{eq:hfy_energy} iff $\Delta_i \ge m{\beta^{-1}}$. 
    \item In addition, if the initial query $\bm{q}^{(0)}$ satisfies ${\bm{q}^{(0)}}^\top (\bm{x}_i -\bm{x}_j) \ge m{\beta^{-1}}$ for all $j \ne i$, then the update rule \eqref{eq:updates_hfy} converges to $\bm{x}_i$ exactly in one iteration. 
    \item Moreover, if the patterns are normalized, $\|\bm{x}_i\| = M$ for all $i$, and well-separated with $\Delta_i \ge m{\beta^{-1}} + 2M\epsilon$, then any $\bm{q}^{(0)}$ $\epsilon$-close to $\bm{x}_i$ ($\|\bm{q}^{(0)} - \bm{x}_i\| \le \epsilon$) will converge to $\bm{x}_i$ in one iteration. 
    \end{enumerate}
\end{proposition}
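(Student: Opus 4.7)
The plan is to prove the three parts in order, since part~3 reduces to part~2, which in turn invokes part~1 together with the margin property of $L_\Omega$ (Proposition~\ref{prop:margins} / Definition~\ref{def:margin}) and the closed-form update rule \eqref{eq:updates_hfy} from Proposition~\ref{prop:bounds_cccp}. The central conceptual input is that $\hat{\bm{y}}_\Omega(\bm{\theta}) = \bm{e}_i$ is equivalent to $\theta_i - \max_{j\ne i} \theta_j \ge m$, so that the Hopfield update collapses to $\bm{X}^\top \bm{e}_i = \bm{x}_i$ exactly when the scores $\beta\bm{X}\bm{q}$ are separated by the margin.

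For part~1, I would observe that $\bm{x}_i$ is stationary iff $\bm{x}_i = \bm{X}^\top \hat{\bm{y}}_\Omega(\beta \bm{X} \bm{x}_i)$. The ``if'' direction is immediate: $\Delta_i \ge m\beta^{-1}$ rewrites as $(\beta \bm{X} \bm{x}_i)_i - \max_{j\ne i}(\beta \bm{X} \bm{x}_i)_j \ge m$, so the margin property gives $\hat{\bm{y}}_\Omega(\beta \bm{X} \bm{x}_i) = \bm{e}_i$ and therefore $\bm{X}^\top \bm{e}_i = \bm{x}_i$. For ``only if,'' set $\bm{p} := \hat{\bm{y}}_\Omega(\beta \bm{X} \bm{x}_i) \in \triangle_N$ and decompose $\bm{p} = p_i \bm{e}_i + (1-p_i)\tilde{\bm{p}}$ with $\tilde{\bm{p}}$ supported on $\{1,\dots,N\}\setminus\{i\}$. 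If $p_i<1$, the fixed-point equation yields $\bm{x}_i = \bm{X}^\top \tilde{\bm{p}}$, exhibiting $\bm{x}_i$ as a convex combination of the other patterns and contradicting the hypothesis that $\bm{x}_i$ lies outside their convex hull. Hence $\bm{p} = \bm{e}_i$, and reapplying the margin equivalence gives $\Delta_i \ge m\beta^{-1}$. This convex-hull step is where I expect the only mildly subtle argument to sit.

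For part~2, the assumed inequality ${\bm{q}^{(0)}}^\top(\bm{x}_i - \bm{x}_j) \ge m\beta^{-1}$ for all $j\ne i$ rewrites precisely as $(\beta \bm{X}\bm{q}^{(0)})_i - \max_{j\ne i}(\beta \bm{X}\bm{q}^{(0)})_j \ge m$, so the margin property of $L_\Omega$ forces $\hat{\bm{y}}_\Omega(\beta \bm{X}\bm{q}^{(0)}) = \bm{e}_i$, and thus \eqref{eq:updates_hfy} gives $\bm{q}^{(1)} = \bm{X}^\top \bm{e}_i = \bm{x}_i$. (In settings where part~1 also applies, $\bm{x}_i$ is then a stationary point, so the iterates remain at $\bm{x}_i$ for all $T' \ge 1$.)

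For part~3, I would verify the hypothesis of part~2 from an $\epsilon$-ball bound using Cauchy–Schwarz. Writing ${\bm{q}^{(0)}}^\top(\bm{x}_i - \bm{x}_j) = \bm{x}_i^\top(\bm{x}_i - \bm{x}_j) + (\bm{q}^{(0)} - \bm{x}_i)^\top(\bm{x}_i - \bm{x}_j)$, the first term is at least $\Delta_i$ and the second is at least $-\|\bm{q}^{(0)} - \bm{x}_i\|\,\|\bm{x}_i - \bm{x}_j\| \ge -2M\epsilon$ by the normalization $\|\bm{x}_k\| = M$. Combined with the assumption $\Delta_i \ge m\beta^{-1} + 2M\epsilon$, this yields ${\bm{q}^{(0)}}^\top(\bm{x}_i - \bm{x}_j) \ge m\beta^{-1}$. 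Part~2 then delivers $\bm{q}^{(1)} = \bm{x}_i$, and since $\Delta_i \ge m\beta^{-1}$, part~1 ensures $\bm{x}_i$ is a fixed point, completing the convergence statement.
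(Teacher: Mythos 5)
Your proof is correct and follows essentially the same route as the paper's: stationarity is reduced to the fixed-point equation $\bm{q} = \bm{X}^\top \hat{\bm{y}}_\Omega(\beta\bm{X}\bm{q})$, the margin property converts $\hat{\bm{y}}_\Omega(\beta\bm{X}\bm{x}_i)=\bm{e}_i$ into $\beta\Delta_i \ge m$, and part~3 is handled by Cauchy--Schwarz with $\|\bm{x}_i-\bm{x}_j\|\le 2M$. Your only addition is to spell out the convex-hull contradiction in the ``only if'' direction of part~1, which the paper leaves implicit; this is a welcome clarification rather than a different argument.
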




The proof is in Appendix~\ref{sec:proof_prop_separation}. 
For the Tsallis negentropy case $\Omega = \Omega^T_\alpha$ with $\alpha>1$ (the sparse case), 
we have $m = (\alpha-1)^{-1}$ (cf.~Def.~\ref{def:margin}), leading to the condition $\Delta_i \ge \frac{1}{(\alpha-1)\beta}$. 
This result is stronger than that of \citet{ramsauer2020hopfield} for their energy (which is ours for $\alpha=1$), according to which memory patterns are only $\epsilon$-close to stationary points, where a small $\epsilon = 
\mathcal{O}(\exp(-\beta))$ requires a low temperature (large $\beta$). 
It is also stronger than the retrieval error bound recently derived by \citet[Theorem 2.2]{hu2023sparse} for the case $\alpha=2$, which has an additive term involving $M$ and therefore does not provide conditions for exact retrieval. 
For the normmax negentropy case $\Omega = \Omega^N_\gamma$ with $\gamma>1$, we have $m=1$, so the condition above becomes $\Delta_i \ge \frac{1}{\beta}$.


Given that exact retrieval is a stricter definition, one may wonder whether requiring it sacrifices storage capacity. 
Reassuringly, the next result, inspired but stronger than \citep[Theorem~A.3]{ramsauer2020hopfield}%
\footnote{In fact, there is a mistake in the proof of Theorem A.3 by \citet{ramsauer2020hopfield} regarding their bound on ``equidistant patterns on the sphere''. 
We fix their proof in Appendix~\ref{sec:proof_prop_storage} by making a connection with existing bounds in sphere packing and kissing numbers, an active problem in combinatorics \citep{chabauty1953resultats,conway2013sphere}.} %
and proved in our Appendix~\ref{sec:proof_prop_storage}, shows that HFY networks with exact retrieval also have exponential storage capacity.  

\begin{proposition}[Storage capacity with exact retrieval]\label{prop:storage}
Assume patterns are optimally placed on the sphere of radius $M$. 
The HFY network can store and exactly retrieve $N = \mathcal{O}((2/\sqrt{3})^D)$ 
patterns in one iteration under an $\epsilon$-perturbation as long as 
$M^2 > 2m{\beta^{-1}}$ and $\epsilon \le \frac{M}{4} - \frac{m}{2\beta M}$. 

Assume patterns are randomly placed on the sphere with uniform distribution. 
Then, with probability $1-p$, the HFY network can store and exactly retrieve $N = \mathcal{O}(\sqrt{p} \zeta^{\frac{D-1}{2}})$ 
patterns in one iteration under a $\epsilon$-perturbation 
if
\begin{equation}
    \epsilon \le \frac{M}{2} \left(1 - \cos \frac{1}{\zeta}\right) - \frac{m}{2\beta M}.
\end{equation}
\end{proposition}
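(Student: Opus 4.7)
My plan is to reduce both claims to Proposition~\ref{prop:separation}(3), which guarantees exact retrieval in one step provided all patterns have norm $M$ and $\Delta_i \ge m\beta^{-1}+2M\epsilon$ for every $i$. For patterns on the sphere of radius $M$, $\bm{x}_i^\top \bm{x}_j = M^2\cos\theta_{ij}$, so this separation condition is equivalent to asking that every pair of patterns subtends an angle of at least $\theta^\star := \arccos\bigl(1-(m\beta^{-1}+2M\epsilon)/M^2\bigr)$. The proof thus reduces to lower-bounding the size of a $\theta^\star$-separated spherical code in $\mathbb{R}^D$, first via an extremal construction (optimal placement) and then via a random construction.

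\emph{Part 1 (optimal placement).} The hypothesis $\epsilon \le M/4 - m/(2\beta M)$ gives exactly $m\beta^{-1}+2M\epsilon \le M^2/2$, and the condition $M^2 > 2m\beta^{-1}$ only ensures that this upper bound on $\epsilon$ is nonnegative. Hence $\cos\theta^\star \le 1/2$, i.e., $\theta^\star \le \pi/3$, and it suffices to exhibit a spherical code of minimum angle $\pi/3$. I will invoke the Chabauty--Shannon--Wyner lower bound for spherical codes \citep{chabauty1953resultats,conway2013sphere} via the standard covering--packing duality: a \emph{maximal} $\theta^\star$-separated set is automatically a $\theta^\star$-covering of the sphere (otherwise one could add another point), so its cardinality is at least the ratio of the total sphere area to the area of a spherical cap of angular radius $\theta^\star$, which is $\Omega\bigl((1/\sin\theta^\star)^{D-1}\bigr)$ up to polynomial-in-$D$ factors. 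Plugging in $\sin(\pi/3) = \sqrt{3}/2$ yields $N = \Omega((2/\sqrt{3})^{D-1}) = \Omega((2/\sqrt{3})^D)$, matching the claim.

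\emph{Part 2 (random placement).} I will bound the probability that $N$ patterns sampled uniformly and independently from the sphere of radius $M$ fail to be $\theta^\star$-separated. By the same translation, the stated condition $\epsilon \le M(1-\cos(1/\zeta))/2 - m/(2\beta M)$ is equivalent to $\theta^\star \le 1/\zeta$, so it suffices to control the event that some pair of patterns lies within angle $1/\zeta$. By rotational symmetry, for any fixed direction the probability that an independent uniform sample lies within angle $\theta$ of it equals the normalized spherical-cap area, which admits the standard upper bound $C(\sin\theta)^{D-1}$. A union bound over the $\binom{N}{2}$ pairs then yields failure probability at most $C N^2(\sin(1/\zeta))^{D-1}$; requiring this $\le p$ gives $N \le C'\sqrt{p}\,(\sin(1/\zeta))^{-(D-1)/2}$, and using $\sin(1/\zeta)\approx 1/\zeta$ for large $\zeta$ (or simply keeping $\sin(1/\zeta)$ and absorbing the approximation into the constants) produces $N = \mathcal{O}(\sqrt{p}\,\zeta^{(D-1)/2})$.

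\emph{Main obstacle.} The crux is Part 1: the claim is a \emph{lower bound} on how many patterns can be stored, so a naive volumetric cap-packing argument (which only upper-bounds $N$) is insufficient; one needs the existence of a $\theta^\star$-separated code of the stated size, which I obtain from the covering--packing duality above. The footnote in the statement warns that the corresponding step in \citet{ramsauer2020hopfield} is flawed, so care is needed to invoke the correct classical bound and to verify that polynomial-in-$D$ prefactors (both from the cap-area estimate and from the Shannon--Wyner constant) can be cleanly absorbed into the $\mathcal{O}$-notation. The random case is more routine but still requires the spherical-cap volume estimate; the remaining small-angle approximation is harmless for large $\zeta$ but should be stated explicitly to avoid ambiguity.
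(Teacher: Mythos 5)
Your proposal is correct and follows the same skeleton as the paper's proof: both parts reduce to Proposition~\ref{prop:separation}(3) by translating the condition $\Delta_i \ge m\beta^{-1}+2M\epsilon$ for norm-$M$ patterns into a minimum pairwise angular separation $\theta^\star$, and both then lower-bound the size of a $\theta^\star$-separated spherical code. The difference is in how the code-size bounds are justified. For Part~1 the paper simply cites the Chabauty--Shannon--Wyner bound $N \ge (1+o(1))\sqrt{2\pi D}\,\cos\alpha_{\min}/(\sin\alpha_{\min})^{D-1}$ and specializes to $\alpha_{\min}=\pi/3$; you instead sketch a proof of (a slightly weaker form of) that bound via the maximal-packing-is-a-covering argument, which correctly yields $N \ge (1/\sin\theta^\star)^{D-1} = (2/\sqrt{3})^{D-1}$ and is self-contained --- this is a legitimate way to sidestep the flawed step in \citet{ramsauer2020hopfield} that the footnote warns about. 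For Part~2 the paper invokes a result of \citet{brauchart2018random} on the distribution of the minimal separation angle, $P(N^{2/(D-1)}\alpha_{\min}\ge\delta) \ge 1-\tfrac{\kappa_{D-1}}{2}\delta^{D-1}$, and solves for $N$; your direct union bound over the $\binom{N}{2}$ pairs with the cap-area estimate $C(\sin\theta)^{D-1}$ is exactly the content of that cited result, derived from first principles, and gives the same $N=\mathcal{O}(\sqrt{p}\,\zeta^{(D-1)/2})$ (your use of $\sin(1/\zeta)\le 1/\zeta$ even goes in the favourable direction, so no approximation issue arises). The only cosmetic caveat is that your cap-area constants depend polynomially on $D$, but as you note they are absorbed into the $\mathcal{O}$-notation, just as in the paper; and for completeness one should remark, as the paper implicitly does, that points on a sphere with strictly positive pairwise angular separation are extreme points, so the ``outside the convex hull'' hypothesis of Proposition~\ref{prop:separation} is satisfied.
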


\subsection{Extension of previous results for extra normalization step}\label{sec:extension_normalization}

We now extend the previous results to the scenario where a post-transformation $\hat{\bm{y}}_\Psi$ is applied, such as $\ell_2$-normalization or layer normalization, as described in \S\ref{sec:normalization}. 
In this scenario, the update rule \eqref{eq:updates_hfy} is replaced by 
\begin{align}\label{eq:updates_hfy_normalized}
    \bm{q}^{(t+1)} = \hat{\bm{y}}_\Psi\left( \bm{X}^\top \hat{\bm{y}}_\Omega(\beta \bm{X} \bm{q}^{(t)}) \right).
\end{align}
We consider the image set induced by this transformation $\mathrm{im} (\hat{\bm{y}}_\Psi) := \left\{\hat{\bm{y}}_\Psi(\bm{z}) \,:\, \bm{z} \in \mathbb{R}^D\right\}$. 
For $\ell_2$-normalization, $\hat{\bm{y}}_\Psi(\bm{z}) = \frac{r\bm{z}}{\|\bm{z}\|}$, this image set is a $(D-1)\textsuperscript{th}$ sphere of radius $r$, and for layer normalization, $\hat{\bm{y}}_\Psi(\bm{z}) = \mathrm{LayerNorm}(\bm{z}; \eta, \bm{\delta})$ \eqref{eq:layernorm}, 
it is the $(D-2)\textsuperscript{th}$-dimensional  set $\left\{ \bm{q}  \,\,:\,\, \|\bm{q} - \bm{\delta}\| = \eta \sqrt{D} \,\, \wedge \,\,  \bm{1}^\top (\bm{q} - \bm{\delta})=0 \right\}$. 

\begin{proposition}\label{prop:extension_normalization}
Assume $\Psi$ is chosen so that the transformation $\hat{\bm{y}}_\Psi$ is idempotent, \textit{i.e.},   $\hat{\bm{y}}_\Psi(\hat{\bm{y}}_\Psi(\bm{z})) = \hat{\bm{y}}_\Psi(\bm{z})$ for all $\bm{z} \in \mathbb{R}^D$. 
Then, if all patterns $\bm{x}_i$ satisfy $\bm{x}_i \in \mathrm{im}(\hat{\bm{y}}_\Psi)$, we have that all results in Propositions~\ref{prop:separation}--\ref{prop:storage}, which concern convergence in one iteration, also hold for the Hopfield updates \eqref{eq:updates_hfy_normalized}. 
\end{proposition}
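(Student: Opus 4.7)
The plan is to leverage a single observation about idempotent maps: if $\hat{\bm{y}}_\Psi \circ \hat{\bm{y}}_\Psi = \hat{\bm{y}}_\Psi$, then every point in $\mathrm{im}(\hat{\bm{y}}_\Psi)$ is a fixed point of $\hat{\bm{y}}_\Psi$. Indeed, any $\bm{z} \in \mathrm{im}(\hat{\bm{y}}_\Psi)$ can be written as $\bm{z} = \hat{\bm{y}}_\Psi(\bm{z}')$ for some $\bm{z}'$, and idempotency gives $\hat{\bm{y}}_\Psi(\bm{z}) = \hat{\bm{y}}_\Psi(\hat{\bm{y}}_\Psi(\bm{z}')) = \hat{\bm{y}}_\Psi(\bm{z}') = \bm{z}$. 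Applied to the patterns, the hypothesis $\bm{x}_i \in \mathrm{im}(\hat{\bm{y}}_\Psi)$ immediately yields $\hat{\bm{y}}_\Psi(\bm{x}_i) = \bm{x}_i$ for every $i$.

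With this fact in hand, the results of Proposition~\ref{prop:separation} transfer essentially for free, because each of its three parts concerns a \emph{single} step of the unnormalized dynamics in which the inner map $\bm{X}^\top \hat{\bm{y}}_\Omega(\beta \bm{X} \bm{q})$ is shown to return $\bm{x}_i$ exactly. I would argue each part in turn: under the margin/separation hypothesis of part 1, 2, or 3, the original proof shows $\hat{\bm{y}}_\Omega(\beta \bm{X} \bm{q}) = \bm{e}_i$, so $\bm{X}^\top \hat{\bm{y}}_\Omega(\beta \bm{X} \bm{q}) = \bm{x}_i$; now applying $\hat{\bm{y}}_\Psi$ to this intermediate output fixes $\bm{x}_i$, and therefore the normalized update \eqref{eq:updates_hfy_normalized} also returns $\bm{x}_i$ after a single iteration. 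Stationarity (part 1) is obtained by instantiating $\bm{q} = \bm{x}_i$; the one-step convergence statements (parts 2 and 3) follow because their hypotheses are conditions on $\bm{q}^{(0)}$ and on the data, never on $\Psi$.

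For Proposition~\ref{prop:storage}, observe that the capacity counts $N = \mathcal{O}((2/\sqrt{3})^D)$ and $N = \mathcal{O}(\sqrt{p}\,\zeta^{(D-1)/2})$ are derived solely from the geometry of pattern placement on the sphere of radius $M$ and the separation condition $\Delta_i \ge m\beta^{-1} + 2M\epsilon$, which involves neither $\Psi$ nor $\hat{\bm{y}}_\Psi$. The only additional ingredient needed to carry those proofs over is the assurance that each retrieved $\bm{x}_i$ is still a one-step fixed point of the full normalized update, which is exactly what the previous paragraph supplies under $\bm{x}_i \in \mathrm{im}(\hat{\bm{y}}_\Psi)$. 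Plugging this in place of the corresponding step in the original storage-capacity argument gives the claim.

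The main subtlety, though not a genuine obstacle, is checking that the two motivating post-transformations genuinely satisfy the hypotheses. For $\hat{\bm{y}}_\Psi(\bm{z}) = r\bm{z}/\|\bm{z}\|$ the image is the sphere of radius $r$ and a second application acts as the identity on that sphere, while the normalization assumption $\|\bm{x}_i\| = M = r$ used in Propositions~\ref{prop:separation}(3) and~\ref{prop:storage} places the patterns exactly in $\mathrm{im}(\hat{\bm{y}}_\Psi)$; for the layer normalization of Proposition~\ref{prop:normalization} the argument is analogous once patterns satisfy $\bm{1}^\top(\bm{x}_i - \bm{\delta}) = 0$ and $\|\bm{x}_i - \bm{\delta}\| = \eta\sqrt{D}$. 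A minor caveat worth flagging is that only the sufficient direction of the ``iff'' in Proposition~\ref{prop:separation}(1) transfers cleanly, since a fixed point of the normalized dynamics only forces $\hat{\bm{y}}_\Psi(\bm{X}^\top \hat{\bm{y}}_\Omega(\beta \bm{X} \bm{x}_i)) = \bm{x}_i$ rather than equality of the inner expression with $\bm{x}_i$; this direction is not needed for exact retrieval.
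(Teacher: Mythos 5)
Your proof is correct and follows essentially the same route as the paper's: idempotency plus $\bm{x}_i \in \mathrm{im}(\hat{\bm{y}}_\Psi)$ makes each pattern a fixed point of $\hat{\bm{y}}_\Psi$, so the post-transformation leaves the one-step output $\bm{X}^\top \hat{\bm{y}}_\Omega(\beta \bm{X}\bm{q}^{(0)}) = \bm{x}_i$ unchanged. Your added caveat that only the sufficient direction of the ``iff'' in Proposition~\ref{prop:separation}(1) transfers is a fair point the paper's one-line proof does not spell out, but it does not affect the claim as stated.
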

\begin{proof}
Propositions~\ref{prop:separation}--\ref{prop:storage} guarantee that $\bm{X}^\top \hat{\bm{y}}_\Omega(\beta \bm{X} \bm{q}^{(0)}) = \bm{x}_i$ for some $i \in [N]$; 
since $\bm{x}_i \in \mathrm{im}(\hat{\bm{y}}_\Psi)$ and $\hat{\bm{y}}_\Psi$ is idempotent, the subsequent post-transformation in \eqref{eq:updates_hfy_normalized} will not change the result, ensuring that $\bm{q}^{(t+1)} = \bm{x}_i$. 
\end{proof}

The idempotence condition is satisfied for both the $\ell_2$-normalization and layer normalization transformations mentioned above. 
The condition $\bm{x}_i \in \mathrm{im}(\hat{\bm{y}}_\Psi)$ is satisfied if the patterns  in $\bm{X}$ are pre-normalized with the same post-transformation $\hat{\bm{y}}_\Psi$ that is applied to the queries. 
The conditions in Propositions~\ref{prop:separation}--\ref{prop:storage} which require $\|\bm{x}_i\| = M$ are satisfied, in the $\ell_2$-normalization case, by $r=M$, and in the layer normalization case by $\eta = \frac{M}{\sqrt{D}}$ and $\bm{\delta} = \mathbf{0}$. 

Even in situations where the initial query $\bm{q}^{(0)}$ is not sufficiently close to a pattern to obtain convergence in one step, the inclusion of a post-transformation step under the conditions of Proposition~\ref{prop:extension_normalization} can speed up convergence by projecting the query to a smaller subspace $\mathrm{im}(\hat{\bm{y}}_\Psi)$ where the patterns are contained. This will be illustrated in \S\ref{sec:hopfield_dynamics}.

\section{Structured Hopfield Networks}\label{sec:structured}

In \S\ref{sec:probabilistic}, we considered the case where $\bm{y} \in \mathrm{dom}(\Omega) = \triangle_N$, the scenario studied by \citet{ramsauer2020hopfield} and \citet{hu2023sparse}. 
Since $\triangle_N = \mathrm{conv}(\mathcal{Y})$ with $\mathcal{Y} = \{\bm{e}_1, ..., \bm{e}_N\}$, we can see the domain of $\Omega$ as the convex relaxation of the set $\mathcal{Y}$ of pattern indicators. 

We now go one step further and consider the more general \textbf{structured} case, where  $\mathrm{dom}(\Omega)$ is a polytope. More specifically, we assume that 
$\bm{y} \in \mathrm{dom}(\Omega) := \mathrm{conv}(\mathcal{Y})$ is a vector of ``marginals'' associated to some given structured set $\mathcal{Y}$. 
This structure can reflect \textbf{pattern associations} that we might want to induce when querying the Hopfield network with $\bm{q}^{(0)}$. 
Possible structures include \textbf{$k$-subsets} of memory patterns, potentially leveraging \textbf{sequential memory structure}, tree structures,  matchings, etc. 
In these cases, the set of pattern associations we can form is combinatorial, hence it can be considerably larger 
than the number $N$ of memory patterns.  

\subsection{Unary scores and structured constraints}\label{sec:structured_unary}

As before, we assume $N$ is the number of patterns stored in the memory. 
Let us start with a simple scenario where there is a predefined set of binary structures $\mathcal{Y} \subseteq \{0, 1\}^N$ and $N$ unary scores $\bm{\theta} \in \mathbb{R}^N$, one for each memory pattern. 
We assume that we may have $|\mathcal{Y}| \gg N$ in general. 
In what follows, we let $\mathrm{dom}(\Omega) = \mathrm{conv}(\mathcal{Y}) \subseteq [0,1]^N$ denote its convex hull, called the \textbf{marginal polytope} associated with the structured set $\mathcal{Y}$ \citep{wainwright2008graphical}. 
Later, in \S\ref{sec:structured_high_order}, we generalize this framework to accommodate higher-order interactions modeling soft interactions among patterns.

\begin{example}[$k$-subsets]\label{ex:ksubsets}
We may be interested in retrieving subsets of $k$ patterns, e.g., to take into account a $k$-ary relation among patterns or to perform top-$k$ retrieval. In this case, we can define, for $k \in [N]$, $$\mathcal{Y} := \left\{\bm{y} \in \{0, 1\}^N \,:\, \mathbf{1}^\top \bm{y} = k\right\}.$$ 
If $k=1$, we get $\mathcal{Y} = \{\bm{e}_1, ..., \bm{e}_N\}$ and $\mathrm{conv}(\mathcal{Y}) = \triangle_N$, recovering the  scenario in \S\ref{sec:probabilistic}. 
For larger $k$, $|\mathcal{Y}| = {N \choose k} \gg N$. 
With a simple rescaling, the marginal polytope $\mathrm{conv}(\mathcal{Y})$ is equivalent to the capped probability simplex  described by \citet[\S 7.3]{blondel2020learning}. 
\end{example}Given unary scores $\bm{\theta} \in \mathbb{R}^N$, 
the structure with the largest score 
is 
\begin{align}\label{eq:MAP}
\bm{y}^* = \argmax_{\bm{y} \in \mathcal{Y}} \bm{\theta}^\top \bm{y} = \argmax_{\bm{y} \in \mathrm{conv}(\mathcal{Y})} \bm{\theta}^\top \bm{y},
\end{align}
where the last equality comes from the fact that $\mathrm{conv}(\mathcal{Y})$ is a polytope, therefore the maximum is attained at a vertex. 
The solution of \eqref{eq:MAP} is often called the \textbf{maximum a posteriori (MAP)} assignment. 
As in \eqref{eq:rpm}, we consider a regularized prediction version of this problem via a convex regularizer $\Omega : \mathrm{conv}(\mathcal{Y}) \rightarrow \mathbb{R}$:
\begin{align}\label{eq:sparsemap}
\hat{\bm{y}}_\Omega(\bm{\theta}) := \argmax_{\bm{y} \in \mathrm{conv}(\mathcal{Y})} \bm{\theta}^\top \bm{y} - \Omega(\bm{y}). 
\end{align}
By choosing $\Omega(\bm{y}) = \frac{1}{2}\|\bm{y}\|^2 + I_{\mathrm{conv}(\mathcal{Y})}(\bm{y})$, 
we obtain \textbf{SparseMAP}, which can be seen as a relaxation of MAP and a structured version of sparsemax \citep{niculae2018sparsemap}. 
The SparseMAP transformation \eqref{eq:sparsemap} can be computed efficiently via an active set algorithm, as long as an algorithm is available to compute the MAP in \eqref{eq:MAP}, as shown in  
\citet[\S 3.2]{niculae2018sparsemap} 
We will make use of this efficient algorithm in our experiments in \S\ref{sec:experiments}.

\subsection{General case: factor graph, high order interactions}\label{sec:structured_high_order}

We consider now the more general case where there might be soft interactions among patterns, for example due to temporal dependencies, hierarchical  structure, etc. 
In general, these interactions can be expressed as a bipartite \textbf{factor graph} $(V, F)$, where $V = \{1, ..., N\}$ are variable nodes (associated with the patterns) and $F \subseteq 2^V$ are factor nodes representing the interactions \citep{kschischang2001factor}. 

A structure can be represented as a bit vector $\bm{y} = [\bm{y}_V; \bm{y}_F]$, where $\bm{y}_V$ and $\bm{y}_F$ indicate configurations of variable and factor nodes, respectively. Each factor $f \in F$ is linked to a subset of variable nodes $V_f \subseteq V$. 
We assume each variable $v \in V$ can take one of $N_v$ possible values, and we denote by $\bm{y}_v \in \{0,1\}^{N_v}$ a one-hot vector indicating a value for this variable. 
Likewise, each factor $f \in F$ has $N_f$ possible configurations, with $N_f = \prod_{v \in V_f} N_v$, and we associate to it a one-hot vector $\bm{y}_f \in \{0,1\}^{N_f}$ indicating a configuration for that factor. 
The global configuration of the factor graph is expressed through the bit vectors $\bm{y}_V = [\bm{y}_v \,:\, v \in V] \in \{0, 1\}^{N_V}$ and $\bm{y}_F = [\bm{y}_f \,:\, f \in F] \in \{0, 1\}^{N_F}$, with $N_V = \sum_{v \in V} N_v$ and $N_F = \sum_{f \in F} N_f$. A particular structure is expressed through the bit vector $\bm{y} = [\bm{y}_V; \bm{y}_F] \in \{0, 1\}^{N_V + N_F}$. 
Finally, we define the set of \textbf{valid structures} $\mathcal{Y} \subseteq \{0, 1\}^{N_V + N_F}$---this set contains all the bit vectors which correspond to valid structures, which must satisfy consistency between variable and factor assignments, as well as any additional hard constraints. 

We associate \textbf{unary scores} $\bm{\theta}_V = [\bm{\theta}_v \,:\, v \in V] \in \mathbb{R}^{N_V}$ to  configurations of variable nodes and \textbf{higher-order scores} $\bm{\theta}_F = [\bm{\theta}_f \,:\, f \in F] \in \mathbb{R}^{N_F}$ to configurations of factor nodes. We denote $\bm{\theta} = [\bm{\theta}_V; \bm{\theta}_F] \in \mathbb{R}^{N_V + N_F}$. 
The MAP inference problem is exactly as in \eqref{eq:MAP} where the objective can be written as $\bm{\theta}^\top\bm{y} = \bm{\theta}_V^\top \bm{y}_V + \bm{\theta}_F^\top \bm{y}_F$. 
As above, we consider regularized variants of \eqref{eq:MAP} via a convex regularizer $\Omega: \mathrm{conv}(\mathcal{Y}) \rightarrow \mathbb{R}$. 
\textbf{SparseMAP} corresponds to $\Omega(\bm{y}) = \frac{1}{2} \|\bm{y}_V\|^2 + I_{\mathrm{conv}(\mathcal{Y})}(\bm{y})$ (note that only the unary variables are quadratically regularized), which leads to the problem \eqref{eq:sparsemap}.  
The active set algorithm of \citet{niculae2018sparsemap} applies also to this general case,  requiring only a MAP oracle to solve \eqref{eq:MAP}. 

\begin{example}[sequential $k$-subsets]\label{ex:seqksubsets} 
Consider the $k$-subset problem of Example~\ref{ex:ksubsets} but now with a sequential structure. 
This can be represented as a pairwise factor graph $(V, F)$ where $V = \{1, ..., N\}$ and $F = \{(i, i+1)\}_{i=1}^{N-1}$. 
The budget constraint forces exactly $k$ of the $N$ variable nodes to have the value $1$. 
The set $\mathcal{Y}$ contains all bit vectors satisfying these constraints as well as consistency among the variable and factor assignments. 
\begin{itemize}
\item To each $i \in V$ we assign unary ``emission'' scores $\bm{\theta}_i = [0, s_i] \in \mathbb{R}^2$, corresponding to the states ``off'' and ``on'', respectively. A positive $s_i$ expresses a preference for the state ``on'' and a negative $s_i$ for the state ``off''. 
\item To each factor (edge) $(i, i+1) \in F$ we associate Ising higher-order (pairwise) ``transition'' scores $\bm{\theta}_{(i, i+1)} = [0, 0, 0, t] \in \mathbb{R}^4$, corresponding to state pairs ``off-off'', ``off-on'', ``on-off'', and ``on-on'', respectively.  
To promote consecutive memory items to be both or neither retrieved, we can 
define attractive ``transition'' scores by choosing $t>0$. 
\end{itemize}
The MAP inference problem for this model can be solved with dynamic programming in runtime $\mathcal{O}(Nk)$, and the SparseMAP transformation can be computed with the active set algorithm \citep{niculae2018sparsemap} by iteratively calling this MAP oracle. 
\end{example}

\subsection{Structured Fenchel-Young losses and margins}

Fenchel-Young losses are applicable to the structured case outlined in this section by choosing a regularizer with domain $\mathrm{dom}(\Omega) = \mathrm{conv}(\mathcal{Y})$, instead of the probability simplex $\triangle_N$. 
In the sequel, we focus on the SparseMAP regularizer $\Omega(\bm{y}) = \frac{1}{2}\|\bm{y}_V\|^2 + I_{\mathrm{conv}(\mathcal{Y})}(\bm{y})$. 
The notion of margin in Def.~\ref{def:margin} can  be extended to the structured case \citep[Def.~5]{blondel2020learning}:

\begin{definition}[Structured margin]\label{def:structured_margin}
A loss $L(\bm{\theta}; \bm{y})$ has a \textbf{structured margin} if $\exists 
0 \le  m < \infty
$ 
such that   
$\forall \bm{y} \in \mathcal{Y}$:
\begin{align*}
\bm{\theta}^\top \bm{y} \ge \max_{\bm{y}' \in \mathcal{Y}} \left( \bm{\theta}^\top \bm{y}' + \frac{m}{2}\|\bm{y} - \bm{y}' \|^2 \right) \,\, \Rightarrow \,\, L(\bm{\theta}; \bm{y}) = 0.
\end{align*} 
The smallest such $m$ is called the margin of $L$.
\end{definition}
Note that the definition of of margin in Def.~\ref{def:margin} is recovered when $\mathcal{Y} = \{\bm{e}_1, ..., \bm{e}_N\}$. Note also that, since we are assuming  $\mathcal{Y} \subseteq \{0, 1\}^{N_V + N_F}$, the term $\|\bm{y} - \bm{y}' \|^2$ is a Hamming distance, which counts how many bits need to be flipped to transform $\bm{y}'$ into $\bm{y}$. A well-known example of a loss with a structured separation margin is the structured hinge loss \citep{taskar2003max,tsochantaridis2005large}. 

We show below that the SparseMAP loss has a structured margin (our result, proved in Appendix~\ref{sec:proof_prop_sparsemap_margin}, extends that of \citet{blondel2020learning}, who have shown this only for structures without high order interactions):
\begin{proposition}\label{prop:sparsemap_margin}
Let $\mathcal{Y} \subseteq \{0, 1\}^{N_V + N_F}$ be contained in a sphere, i.e., for some $r>0$, $\|\bm{y}\| = r$ for all $\bm{y} \in \mathcal{Y}$. 
Then:
\begin{enumerate}
\item Without high order interactions, the SparseMAP loss has a structured margin $m=1$. 
\item If there are high order interactions and, for some $r_V$ and $r_F$ with $r_V^2 + r_F^2 = r^2$, we have $\|\bm{y}_V\| = r_V$ and $\|\bm{y}_F\| = r_F$ for any $\bm{y} = [\bm{y}_V; \bm{y}_F]\in \mathcal{Y}$, then the SparseMAP loss has a structured margin $m \le 1$.
\end{enumerate}
\end{proposition}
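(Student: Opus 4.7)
The plan is to translate the vanishing of the SparseMAP Fenchel-Young loss into a first-order optimality condition for the regularized argmax, and then show that, thanks to the sphere hypotheses, this condition coincides with (or is dominated by) the structured margin condition of Definition~\ref{def:structured_margin} with $m=1$. Proposition~\ref{prop:properties}(1) gives $L_\Omega(\bm{\theta}; \bm{y}) = 0$ iff $\bm{y}$ maximizes $\bm{\theta}^\top \bm{y}' - \Omega(\bm{y}')$ over $\bm{y}' \in \mathrm{conv}(\mathcal{Y})$. Since this objective is concave, $\bm{y}$ is a maximizer iff $(\bm{y}' - \bm{y})^\top (\bm{\theta} - \nabla \Omega(\bm{y})) \le 0$ for every $\bm{y}' \in \mathrm{conv}(\mathcal{Y})$, and because this inequality is affine in $\bm{y}'$, I only need to verify it on the vertices $\bm{y}' \in \mathcal{Y}$.

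For part 1, $\Omega(\bm{y}) = \tfrac{1}{2}\|\bm{y}\|^2 + I_{\mathrm{conv}(\mathcal{Y})}(\bm{y})$ gives $\nabla \Omega(\bm{y}) = \bm{y}$, and the optimality condition rearranges to
\begin{equation*}
\bm{\theta}^\top \bm{y} \;\ge\; \bm{\theta}^\top \bm{y}' + \|\bm{y}\|^2 - \bm{y}^\top \bm{y}' \quad \text{for all } \bm{y}' \in \mathcal{Y}.
\end{equation*}
The key identity, valid precisely because $\|\bm{y}\| = \|\bm{y}'\| = r$, is
$\|\bm{y}\|^2 - \bm{y}^\top \bm{y}' = \tfrac{1}{2}\|\bm{y}\|^2 + \tfrac{1}{2}\|\bm{y}'\|^2 - \bm{y}^\top \bm{y}' = \tfrac{1}{2}\|\bm{y} - \bm{y}'\|^2$,
which converts the inner-product penalty into the squared Euclidean penalty of Definition~\ref{def:structured_margin} with $m = 1$. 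The margin condition with $m=1$ thus coincides \emph{exactly} with the optimality condition, so the SparseMAP loss vanishes under the margin hypothesis and $m = 1$ is (indeed, tightly) the structured margin.

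For part 2, the regularizer ignores the factor block: $\nabla \Omega(\bm{y}) = [\bm{y}_V; \mathbf{0}]$, and the optimality condition becomes
\begin{equation*}
\bm{\theta}^\top \bm{y} \;\ge\; \bm{\theta}^\top \bm{y}' + \|\bm{y}_V\|^2 - \bm{y}_V^\top \bm{y}'_V \quad \text{for all } \bm{y}' \in \mathcal{Y}.
\end{equation*}
The hypothesis $\|\bm{y}_V\| = \|\bm{y}'_V\| = r_V$ rewrites this as $\bm{\theta}^\top \bm{y} \ge \bm{\theta}^\top \bm{y}' + \tfrac{1}{2}\|\bm{y}_V - \bm{y}'_V\|^2$, and the orthogonal decomposition $\|\bm{y} - \bm{y}'\|^2 = \|\bm{y}_V - \bm{y}'_V\|^2 + \|\bm{y}_F - \bm{y}'_F\|^2$ gives $\|\bm{y}_V - \bm{y}'_V\|^2 \le \|\bm{y} - \bm{y}'\|^2$. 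Hence the structured margin condition with $m = 1$, which features the larger right-hand side, implies the weaker optimality condition, yielding $m \le 1$.

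The main obstacle is the algebraic conversion of the cross-term $\bm{y}^\top \bm{y}'$ (or $\bm{y}_V^\top \bm{y}'_V$) arising from the quadratic regularizer into the squared distance $\|\bm{y} - \bm{y}'\|^2$ appearing in Definition~\ref{def:structured_margin}; this step depends crucially on the sphere hypothesis and fails without it. The high-order case carries the additional subtlety that the factor coordinates contribute to $\|\bm{y} - \bm{y}'\|^2$ but not to the regularizer-induced penalty, which is exactly why one only obtains the upper bound $m \le 1$ rather than an equality.
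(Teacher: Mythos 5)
Your proof is correct, but it takes a genuinely different route from the paper's. The paper invokes the closed-form expression for the structured margin from \citet[Proposition~8]{blondel2020learning}, namely $m = \sup_{\bm{y}\in\mathcal{Y},\,\bm{\mu}\in\mathrm{conv}(\mathcal{Y})} \bigl(\Omega(\bm{y})-\Omega(\bm{\mu})\bigr)/\bigl(r^2-\bm{\mu}^\top\bm{y}\bigr)$, writes $\Omega(\bm{\mu})=\tfrac12\|\bm{\mu}\|^2-\tfrac12\|\bm{\mu}_F\|^2$, bounds the factor contribution by convexity to get $m\le 1$, and in the unary case shows the residual infimum vanishes by letting $\bm{\mu}\to\bm{y}$ along a segment, giving $m=1$. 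You instead work from first principles: you characterize $L_\Omega(\bm{\theta};\bm{y})=0$ via the variational inequality $(\bm{y}'-\bm{y})^\top(\bm{\theta}-\nabla\Omega(\bm{y}))\le 0$ for the regularized argmax, reduce to vertices by linearity, and use the sphere hypothesis to convert the cross-term into $\tfrac12\|\bm{y}-\bm{y}'\|^2$ (resp.\ $\tfrac12\|\bm{y}_V-\bm{y}'_V\|^2\le\tfrac12\|\bm{y}-\bm{y}'\|^2$ in the factor-graph case). Your argument is self-contained modulo Proposition~\ref{prop:properties} and makes the mechanism more transparent, since it exhibits the zero-loss set as exactly (part~1) or as a superset of (part~2) the margin-condition set; the paper's argument is shorter because the imported sup formula handles the ``smallest such $m$'' issue automatically. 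The one small thing you leave implicit is why no $m'<1$ works in part~1: your equivalence reduces this to finding a $\bm{\theta}$ satisfying the $m'$-condition but not the $1$-condition, which is immediate (e.g., $\bm{\theta}=c\bm{y}$ with $m'\le c<1$ gives $\bm{\theta}^\top(\bm{y}-\bm{y}')=\tfrac{c}{2}\|\bm{y}-\bm{y}'\|^2$ for all $\bm{y}'\in\mathcal{Y}$ on the sphere), but it is worth a sentence.
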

The assumptions above are automatically satisfied with the factor graph construction in \S\ref{sec:structured_high_order}, with $r_V^2 = |V|$, $r_F^2 = |F|$, and $r^2 = |V| + |F|$. 
For the $k$-subsets example, we have $r^2 = k$, and for the sequential $k$-subsets example, we have $r_V^2 = N$, $r_F^2 = N-1$, and $r^2 = 2N-1$.

\subsection{Guarantees for retrieval of pattern associations}


We now consider a \textbf{structured HFY network} using SparseMAP. Following the same logic as Propositions~\ref{prop:cccp} and \ref{prop:bounds_cccp}, we obtain the following update rule:
\begin{align}\label{eq:sparsemap_hfy_update_rule}
    \bm{q}^{(t+1)} = \bm{X}^\top \mathrm{SparseMAP}(\beta \bm{X}\bm{q}^{(t)}).
\end{align} 
%
In this structured case, we aim to  retrieve not individual patterns but pattern associations of the form $\bm{X}^\top \bm{y}$, where $\bm{y} \in \mathcal{Y}$. Naturally, when $\mathcal{Y} = \{\bm{e}_1, ..., \bm{e}_N\}$, we recover the usual patterns, since $\bm{x}_i = \bm{X}^\top \bm{e}_i$. 
We define the separation of pattern association $\bm{y}_i \in \mathcal{Y}$ from data as $\Delta_i = \bm{y}_i^\top \bm{X} \bm{X}^\top \bm{y}_i - \max_{j \ne i} \bm{y}_i^\top \bm{X} \bm{X}^\top \bm{y}_j$. 
The next proposition, proved in Appendix~\ref{sec:proof_prop_stationary_single_iteration_sparsemap}, states conditions for exact convergence in a single iteration, generalizing Proposition~\ref{prop:separation}. 

\begin{proposition}[Exact structured retrieval]\label{prop:stationary_single_iteration_sparsemap}
Let $\Omega(\bm{y})$ be the SparseMAP regularizer and assume the conditions of Proposition~\ref{prop:sparsemap_margin} hold. 
Let $\bm{y}_i \in \mathcal{Y}$ be such that  
$\Delta_i \ge \frac{D_i^2}{2\beta}$, where $D_i = \max \|\bm{y}_i - \bm{y}_j\| \le 2r$. Then, $\bm{X}^\top \bm{y}_i$ is a stationary point of the Hopfield energy. 
In addition, if ${\bm{q}^{(0)}} ^\top \bm{X}^\top (\bm{y}_i - \bm{y}_j) \ge\frac{D_i^2}{2\beta}$ for all $j\ne i$, then the update rule 
\eqref{eq:sparsemap_hfy_update_rule} converges to the pattern association $\bm{X}^\top \bm{y}_i$ in one iteration. 
Moreover, 
if $$\Delta_i \ge \frac{D_i^2}{2\beta} + \epsilon \min \{\sigma_{\max}(\bm{X})D_i, MD_i^2\},$$   
where $\sigma_{\max}(\bm{X})$ is the spectral norm of $\bm{X}$ and $M = \max_k \|\bm{x}_k\|$, then any $\bm{q}^{(0)}$ $\epsilon$-close to  $\bm{X}^\top \bm{y}_i$  will converge to $\bm{X}^\top \bm{y}_i$ in one iteration.
\end{proposition}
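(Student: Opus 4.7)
The plan is to mirror the proof of Proposition~\ref{prop:separation}, replacing the unstructured margin condition by the structured margin from Proposition~\ref{prop:sparsemap_margin} and then carefully translating the resulting Hamming-distance condition into inner product conditions on $\bm{q}^{(t)}$.

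First, I would characterize when one iteration of \eqref{eq:sparsemap_hfy_update_rule} outputs the pattern association $\bm{X}^\top \bm{y}_i$: by Proposition~\ref{prop:properties}, $\mathrm{SparseMAP}(\beta \bm{X} \bm{q}^{(t)}) = \bm{y}_i$ is equivalent to $L_\Omega(\beta \bm{X} \bm{q}^{(t)}; \bm{y}_i) = 0$. Combining Definition~\ref{def:structured_margin} with Proposition~\ref{prop:sparsemap_margin}, which gives $m \le 1$, a sufficient condition is
\begin{equation*}
    \beta (\bm{X} \bm{q}^{(t)})^\top (\bm{y}_i - \bm{y}_j) \;\ge\; \tfrac{m}{2}\|\bm{y}_i - \bm{y}_j\|^2 \quad \text{for all } \bm{y}_j \in \mathcal{Y}.
\end{equation*}
Using $m \le 1$ and $\|\bm{y}_i - \bm{y}_j\| \le D_i$, this is further implied by the simpler condition ${\bm{q}^{(t)}}^\top \bm{X}^\top (\bm{y}_i - \bm{y}_j) \ge \tfrac{D_i^2}{2\beta}$ for every $j \ne i$. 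This is the ``master inequality'' that all three claims will reduce to.

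For the stationary point claim, I would substitute $\bm{q}^{(t)} = \bm{X}^\top \bm{y}_i$ into the master inequality, observing that the left-hand side becomes $\bm{y}_i^\top \bm{X} \bm{X}^\top (\bm{y}_i - \bm{y}_j)$, which by definition of $\Delta_i$ is at least $\Delta_i$ for all $j \ne i$. Thus the hypothesis $\Delta_i \ge \tfrac{D_i^2}{2\beta}$ directly yields the master inequality, and the update rule applied at $\bm{X}^\top \bm{y}_i$ returns $\bm{X}^\top \bm{y}_i$. The second claim is immediate, as its hypothesis is precisely the master inequality at $t = 0$.

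For the third claim, I would write $\bm{q}^{(0)} = \bm{X}^\top \bm{y}_i + \bm{\delta}$ with $\|\bm{\delta}\| \le \epsilon$ and expand
\begin{equation*}
    {\bm{q}^{(0)}}^\top \bm{X}^\top (\bm{y}_i - \bm{y}_j) \;=\; \bm{y}_i^\top \bm{X} \bm{X}^\top (\bm{y}_i - \bm{y}_j) + \bm{\delta}^\top \bm{X}^\top (\bm{y}_i - \bm{y}_j) \;\ge\; \Delta_i - |\bm{\delta}^\top \bm{X}^\top (\bm{y}_i - \bm{y}_j)|.
\end{equation*}
The perturbation term admits two Cauchy--Schwarz bounds: using the spectral norm gives $\|\bm{X}^\top (\bm{y}_i - \bm{y}_j)\| \le \sigma_{\max}(\bm{X})\|\bm{y}_i - \bm{y}_j\| \le \sigma_{\max}(\bm{X}) D_i$, while a row-wise bound combined with the key identity $\|\bm{y}_i - \bm{y}_j\|_1 = \|\bm{y}_i - \bm{y}_j\|^2$ (valid because $\mathcal{Y} \subseteq \{0,1\}^{N_V+N_F}$) gives $\|\bm{X}^\top (\bm{y}_i - \bm{y}_j)\| \le M\|\bm{y}_i - \bm{y}_j\|_1 \le M D_i^2$. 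Taking the tighter of the two bounds and invoking the hypothesis on $\Delta_i$ yields the master inequality, closing the proof.

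The main obstacle I anticipate is the entry-wise bound in the last step: one must justify the identity $\|\bm{y}_i - \bm{y}_j\|^2 = \|\bm{y}_i - \bm{y}_j\|_1$, which crucially relies on $\mathcal{Y}$ consisting of bit vectors (so that each coordinate difference is in $\{-1,0,1\}$ and hence squares to its absolute value). This is what distinguishes the structured case from the unstructured one of Proposition~\ref{prop:separation} and explains the appearance of the $MD_i^2$ term (rather than $MD_i$) in the hypothesis. Everything else is a direct adaptation of the earlier argument through the structured margin lens.
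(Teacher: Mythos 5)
Your proposal is correct and follows essentially the same route as the paper's proof: reduce all three claims to the margin condition $\bm{q}^\top\bm{X}^\top(\bm{y}_i-\bm{y}_j)\ge D_i^2/(2\beta)$ via Proposition~\ref{prop:sparsemap_margin}, verify it at $\bm{X}^\top\bm{y}_i$ using $\Delta_i$, and handle the perturbation with the two bounds $\sigma_{\max}(\bm{X})D_i$ and $M\|\bm{y}_i-\bm{y}_j\|_1 = M D_i^2$ (the paper obtains the latter via H\"older's inequality on $\|\bm{X}\bm{r}\|_\infty\|\bm{y}_i-\bm{y}_j\|_1$ rather than your triangle-inequality bound on $\|\bm{X}^\top(\bm{y}_i-\bm{y}_j)\|$, but the result is identical). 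Your flagged ``obstacle''---the bit-vector identity $\|\bm{y}_i-\bm{y}_j\|_1=\|\bm{y}_i-\bm{y}_j\|^2$---is exactly the observation the paper invokes in its final line.
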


Note that the bound above on $\Delta_i$ includes as a particular case the unstructured bound in Proposition~\ref{prop:separation} applied to sparsemax (entmax with $\alpha=2$, which has margin $m = 1/(\alpha-1) = 1$), since for $\mathcal{Y} = \triangle_N$ we have $r = 1$ and $D_i = \sqrt{2}$, which leads to the condition $\Delta_i \ge \beta^{-1} + 2M\epsilon$. 

For the particular case of the $k$-subsets problem (Example~\ref{ex:ksubsets}), we have $r = \sqrt{k}$ and $D_i=\sqrt{2k}$, leading to the condition $\Delta_i  \ge \frac{k}{\beta} + 2Mk\epsilon$. This recovers sparsemax when $k=1$. 

For the sequential $k$-subsets problem in Example~\ref{ex:seqksubsets}, we have $r = 2N-1$. Noting that any two distinct $\bm{y}$ and $\bm{y}'$ differ in at most $2k$ variable nodes, and since each variable node can affect 6 bits (2 for $\bm{y}_V$ and 4 for $\bm{y}_F$), the Hamming distance between  $\bm{y}$ and $\bm{y}'$ is at most $12k$, therefore we have $D_i = \sqrt{12k}$, which leads to the condition $\Delta_i  \ge \frac{6k}{\beta} + 12Mk\epsilon$.  

\section{Mechanics of Memory Retrieval Modeling}
\label{sec:memory_retrieval_modeling}

Memory is both a foundational paradigm in human cognitive psychology and a core focus of systems neuroscience in animal models \citep{Eichenbaum2017}. Ongoing research aims to integrate these levels of investigation into a comprehensive account of memory processing in the brain. From this integrative perspective, we describe how the framework described in \S\ref{sec:probabilistic} and \S\ref{sec:structured} provides a theoretical platform for generatively modeling memory retrieval while retaining formal guarantees regarding memory capacity and convergence. We examine these models in the context of recall paradigms used to study human memory.

Recall tasks for humans are crucial in elucidating the structure of memory and retrieval processes \citep{Tulving1972}. The most basic paradigm, associative recall, involves the learning of paired items, where one item serves as a cue to retrieve the associated item which is typically a corrupted or partial version of the target item. \textbf{Sequential recall} requires learning a sequence of items in a specific order. During retrieval, the initial item  serves as a cue, and the individual must recall the subsequent items in the exact order of presentation. \textbf{Free recall} involves the learning of a list of items presented without a specific sequence. During retrieval, individuals use a contextual cue to recall the items in any order, though all items must eventually be retrieved. 
In computational neuroscience, models have been proposed with varying degrees of biological precision which suggest that auto-associative and hetero-associative attractor dynamics in the hippocampal formation subserve memory recall in these paradigms \citep{Naim2020, Boboeva2021}. However, such models operate in a simplified setting focusing on binary, orthogonalized, and relatively low-dimensional memory patterns without broader theoretic guarantees for memories of the scale and complexity encoded by humans.

In order to address this gap, and inspired by this computational cognitive neuroscience approach for investigating memory recall, we apply our framework in deriving efficient algorithms for modeling memory retrieval with the latter two specified paradigms.
\subsection{Free recall}
Consider the sparsemax transformation presented in \S\ref{sec:background}:
\begin{align}\label{eq:sparsemax}
    \text{sparsemax}(\bm{\theta}) := \argmax_{\bm{y} \in \triangle_N} \bm{\theta}^\top \bm{y} - \frac{1}{2}\|\bm{y}\|^2.
\end{align}
\begin{algorithm}[t]
\small
\caption{Free recall with constrained sparsemax. 
$\bm{X} \in \mathbb{R}^{N \times D}$ represents the memory and $\bm{q} \in \mathbb{R}^D$ denotes the query, initialized as an arbitrary cue. $T$ is the number of inner associative Hopfield iterations. $N$ is the number of memory patterns, which equals the number of outer iterations during the recall process.}
\label{alg:csparsemax_free_recall}
\begin{algorithmic}[1]
\Require $\bm{X}$, $\bm{q}$, $\beta$, $T$
\State $\bm{u} \gets \mathbf{1}_N$ \Comment{Initialize upper bounds}
\For{$i \gets 1$ to $N$}
    \For{$j \gets 1$ to $T$}
        \State $\bm{\theta} \gets \bm{X} \bm{q}$ \Comment{Scores}
        \State $\bm{p} \gets \text{csparsemax}(\beta \bm{\theta}; \bm{u})$
        \State $\bm{q} \gets \bm{X}^\top \bm{p}$ \Comment{Hopfield update}
    \EndFor
    \State $\bm{u} \gets \bm{u} - \bm{p}$ \Comment{Upper bound probabilities}
\EndFor
\end{algorithmic}
\end{algorithm}
\begin{algorithm}[t]
\small
\caption{Free recall with penalized $\alpha$-entmax. The parameter $\lambda$ corresponds to the penalty applied to the moving average while $\tau$ corresponds to the decay rate.}
\label{alg:penalized_free_recall}
\begin{algorithmic}[1]
\Require $\bm{X}$, $\bm{q}$, $\lambda$, $\tau$, $\beta$, $T$
\State $\bm{a} \gets \mathbf{0}_N$ \Comment{Initial average}
\For{$i \gets 1$ to $N$}
    \State $\bm{p} \gets \alpha\text{-entmax}\left ( \beta  \left ( \bm{X} \bm{q} - \lambda \bm{a} \right) \right)$ \Comment{Penalized probabilities}
    \State $\bm{a} \gets \tau\bm{p} + (1 - \tau) \bm{a}$ \Comment{Exponentially weighted average}
    \State $\bm{q} \gets \bm{X}^\top \bm{p}$ \Comment{Outer Hopfield update}
    \For{$j \gets 1$ to $T$}
        \State $\bm{p} \gets \alpha\text{-entmax}(\beta\bm{X} \bm{q})$ 
        \State $\bm{q} \gets \bm{X}^\top \bm{p}$ \Comment{Inner Hopfield update}
    \EndFor
\EndFor
\end{algorithmic}
\end{algorithm}
These projections often reach the boundary of the simplex, resulting in a sparse probability distribution. However, they are not well-suited for modeling recall paradigms like free recall, as these models primarily retrieve the memory closest to the query without keeping a record of previously attended memories. This behaviour leads to potential repetitions and failure to attend to all distinct memories. One way to address this issue is to set an upper bound on the maximum probability which can be assigned to memories have already been attended to.

In this work, inspired by the concept of \textbf{constrained sparsemax} \citep{malaviya2018sparse}, we show that a modified version of sparse HFY networks can be used for modeling the free recall memory paradigm. Formally, constrained sparsemax is defined as:
\begin{align}\label{eq:csparsemax
}
\text{csparsemax}(\bm{\theta}; \bm{u}) := \argmax_{\bm{y} \in \triangle_N} \bm{\theta}^\top \bm{y} - \frac{1}{2}\|\bm{y}\|^2 \quad \text{s.t.} \quad \bm{y} \leq \bm{u},
\end{align}
where $\bm{u} \in [0, 1]^N$ is a vector of upper bounds. This transformation closely resembles the \text{sparsemax} function but introduces bounded probabilities defined by $\bm{u}$. \citet{malaviya2018sparse} developed efficient forward and backward propagation algorithms for this transformation, making it practical for various applications. It can be useful for tasks such as modeling the free recall memory paradigm with Hopfield models. This process, described in Algorithm~\ref{alg:csparsemax_free_recall}, involves an inner loop of associative recall and an outer loop where the constrained sparsemax transformation is used for keeping track of the attended memories by upper-bounding how much probability mass can be given to patterns that have already been attended to.


An alternative option (Algorithm~\ref{alg:penalized_free_recall}) is to introduce a penalty mechanism which reduces the scores of attended patterns in subsequent Hopfield iterations. This penalty mechanism discourages the selection of previously chosen memories, aiming to model the non-repetitive functionality of human memory processing during free recall.
A potential penalty is the exponentially weighted average, which induces forgetting by dynamically balancing the influence of current and past penalties, encouraging diversity in selections. 

\subsection{Sequential recall}
\begin{algorithm}[t]
\small
\caption{Sequential recall using SparseMAP with sequential $2$-subsets. $t>0$ denotes the transition score and $\omega \ge 1$ is a coefficient which promotes sequential order by boosting the emission score of the last recalled pattern.}
\label{alg:penalized_seq_recall}
\begin{algorithmic}[1]
\Require $\bm{X}$, $\bm{q}$, $\lambda$, $\tau$, $\omega$, $t$, $\beta$, $T$
\State $\bm{a} \gets \mathbf{0}_N$
\For{$i \gets 1$ to $N$}
    \State $\bm{y} \gets \textsc{sequential\_k\_subsets}\left(\beta \left( \bm{X} \bm{q} - \lambda \bm{a} \right), k=2, t\right)$\Comment{Sequential $2$-subsets}
    \State $\bm{q} \gets \bm{X}^\top \bm{y} - \bm{q}$\Comment{Outer Hopfield update}
    \For{$k \gets 1$ to $T$}
        \State $\bm{p} \gets \alpha \text{-entmax}(\bm{X} \bm{q})$\Comment{Inner Hopfield update}
        \State $\bm{q} \gets \bm{X}^\top \bm{p}$
    \EndFor
    \State $\bm{a} \gets \tau (\bm{y} - \omega \bm{p}) + (1 - \tau) \bm{a}$ \Comment{Exponentially weighted average}
\EndFor
\end{algorithmic}
\end{algorithm}
We now consider the sequential recall paradigm, deriving an algorithm inspired by the penalized free recall approach (Algorithm~\ref{alg:penalized_free_recall}), but which leverages the structured Hopfield networks presented in \S\ref{sec:structured}. 
We consider the sequential $k$-subsets model described in Example~\ref{ex:seqksubsets} with large transition scores (\textit{i.e.}, choosing a large $t>0$), so that we strongly encourage the retrieval of consecutive memory patterns. 
This structured transformation is used in the outer loop, operating with $k=2$, which promotes sequential top-$2$ retrieval. This encourages retrieving a pattern association involving two adjacent memory patterns, the cue (associated with the initial query) and the succeeding pattern. 

The algorithm is presented as Algorithm~\ref{alg:penalized_seq_recall}. At each step in the outer loop, the structured Hopfield network with the sequential $k$-subsets model is first queried with $\bm{q}$ (which we would like to be close to some pattern, $\bm{q} \approx \bm{x}_i$) and returns a pattern association $\bm{y}$---ideally, this is a two-hot vector indicating the index of the cue pattern and the index of an adjacent pattern, e.g., $\bm{x}_{i+1}$; that is, $\bm{y} \approx \bm{x}_i + \bm{x}_{i+1}$. 
In reality it can be a fractional vector satisfying $\mathbf{1}^\top \bm{y} = 2$. 
In the ideal scenario we have $\bm{X}^\top \bm{y} \approx \bm{x}_i + \bm{x}_{i+1} \approx \bm{q} + \bm{x}_{i+1}$. 
(Note that in this structured Hopfield update we use a penalty similar to the one used in the penalized free recall algorithm, which we will come back to later.) 
Then, we subtract the query $\bm{q}$ from $\bm{X}^\top \bm{y}$---in the ideal scenario above, this should be close to $\bm{q} + \bm{x}_{i+1} - \bm{q} = \bm{x}_{i+1}$. 
This becomes the query to the inner Hopfield loop, which we expect to yield the attractor $\bm{x}_{i+1}$---the next pattern to be recalled, and the cue for the next step.
The penalties are updated with the difference $\bm{y} - \omega \bm{p}$, where we expect $\bm{y} \approx \bm{e}_i + \bm{e}_{i+1}$ and $\bm{p} \approx \bm{e}_{i+1}$. If $\omega = 1$, this difference would be close to $\bm{e}_i$, the indicator of the $i\textsuperscript{th}$ pattern, which will be penalized in subsequent iterations to avoid being retrieved twice. 
By choosing $\omega$ slightly greater than one, we also add a small bonus to the $(i+1)\textsuperscript{th}$ pattern, which we would like to be retrieved as part of the pattern association in the next step---this avoids memory jumps. 
\subsection{Empirical evaluation}
We evaluate the algorithms derived in the current section using three datasets: MNIST \citep{lecun1998gradient}, CIFAR10 \citep{krizhevsky2009learning}, and Tiny ImageNet \citep{le2015tiny}. We use a maximum of 20 Hopfield (or inner) iterations. For the penalized free and sequential recall, we employ a penalty of $\lambda=10^9$ and a decay rate of $\tau=0.001$. For the sequential recall algorithm, we use a transition score of $10^8$ with the $k=2$ for the sequential $k$-subsets and $\omega=1.1$. The metric used to evaluate the algorithms is the unique memory ratio, which measures the proportion of distinct memories recalled. 
\begin{figure*}[t]
    \centering
    \includegraphics[width=1\textwidth]{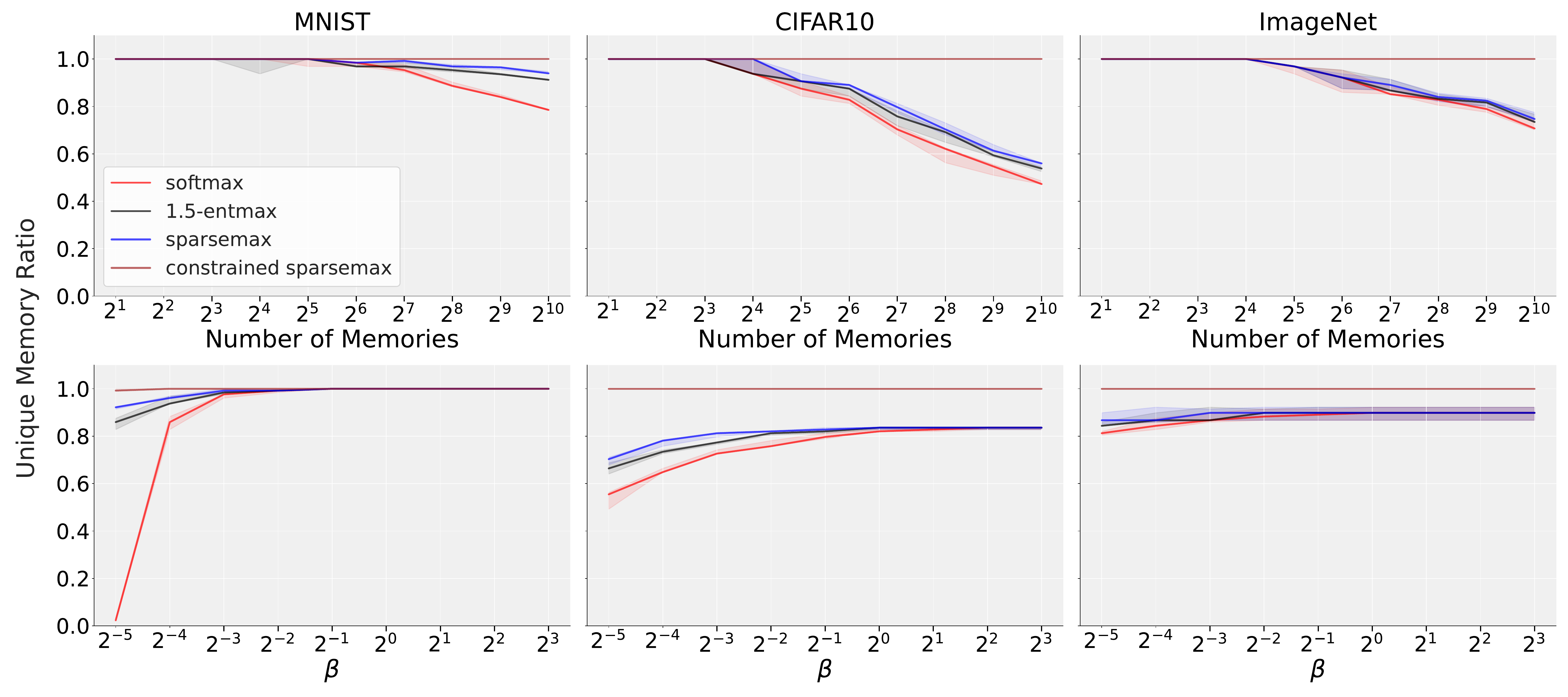}
    \caption{(Top) Memory capacity in terms of unique, non-repeated memories using various free recall methods for different numbers of stored memories with \(\beta = 0.1\).
(Bottom) Unique memory ratio as a function of \(\beta\) for a memory size of 128. Plotted are the medians over 5 runs with different memories and the interquartile range.}
    \label{fig:free_recall}
\end{figure*}
\begin{figure*}[t]
    \centering
    \includegraphics[width=0.7\textwidth]{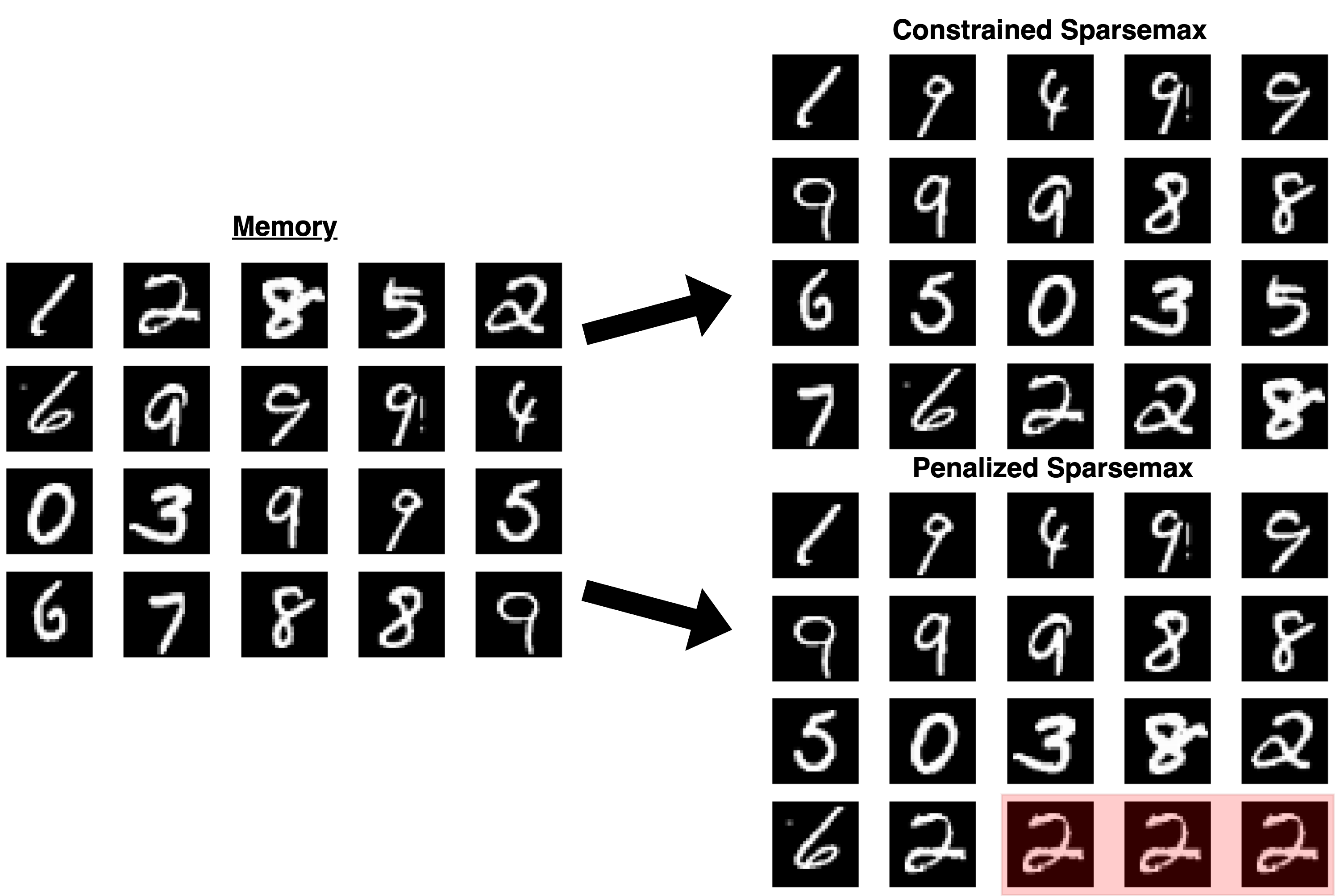}
    \caption{Simulation of \textbf{free recall} using our two methods on MNIST \citep{lecun1998gradient}: \textbf{constrained sparsemax} (Algorithm~\ref{alg:csparsemax_free_recall}) and \textbf{penalized sparsemax} (Algorithm~\ref{alg:penalized_free_recall}). For both methods, we set the number of Hopfield iterations to $T=5$. In the penalized free recall method, we apply a penalty of $\lambda = 10^{8}$ and a decay rate of $\tau=0.001$. In both case, we set  $\beta=0.1$. 
    Red highlight corresponds to repeated memories.}
    \label{fig:free_recall_image}
\end{figure*}
We illustrate the effectiveness of the constrained and penalized free recall methods in Figure~\ref{fig:free_recall}. As expected, performance decreases as the number of stored memory items increases, with this effect being particularly noticeable for softmax due to its dense nature. Performance also improves with higher values of $\beta$, as the transformation becomes sparser. Despite this behavior, constrained sparsemax shows near-optimal performance across different numbers of memories and $\beta$ values. The penalized $\alpha$-entmax transformations, which are biologically more plausible, work effectively for smaller memory sizes, since they are able to forget the already attended memories through the penalty using the exponentially weighted average, but they degrade as the number of memories increases (with the sparse transformations performing better than softmax). Figure \ref{fig:free_recall_image} which shows the perfect behavior of constrained sparsemax and the brief repetitiveness of penalized sparsemax. 
\begin{figure*}[t]
    \centering
    \includegraphics[width=1\textwidth]{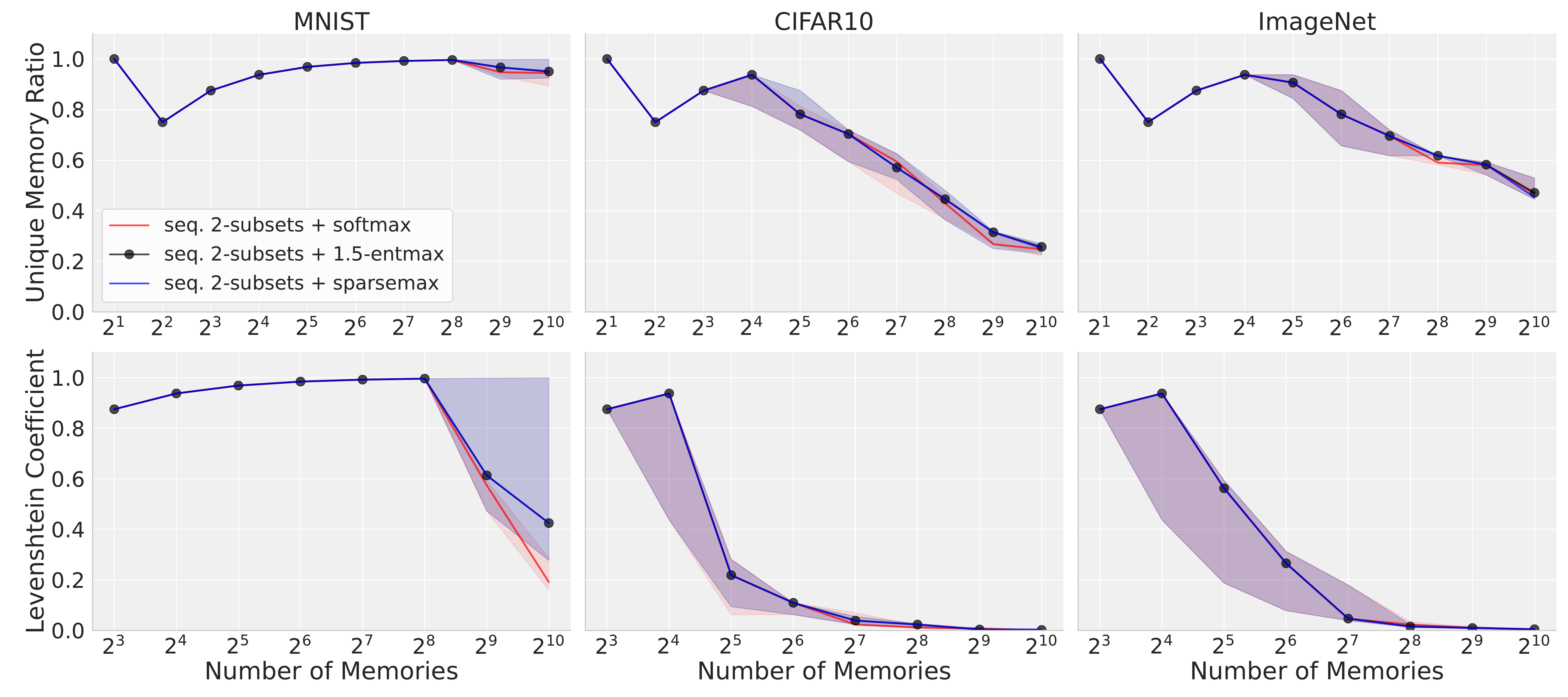}
    \caption{(Top) Memory capacity in terms of unique, non-repeated memories using the sequential recall for different numbers of stored memories with \(\beta = 0.1\).
(Bottom) Levenshtein coefficient as a function of number of memories. Plotted are the medians over 5 runs with different memories and the interquartile range.}
    \label{fig:seq_recall}
\end{figure*}
\begin{figure*}[t]
    \centering
    \includegraphics[width=0.7\textwidth]{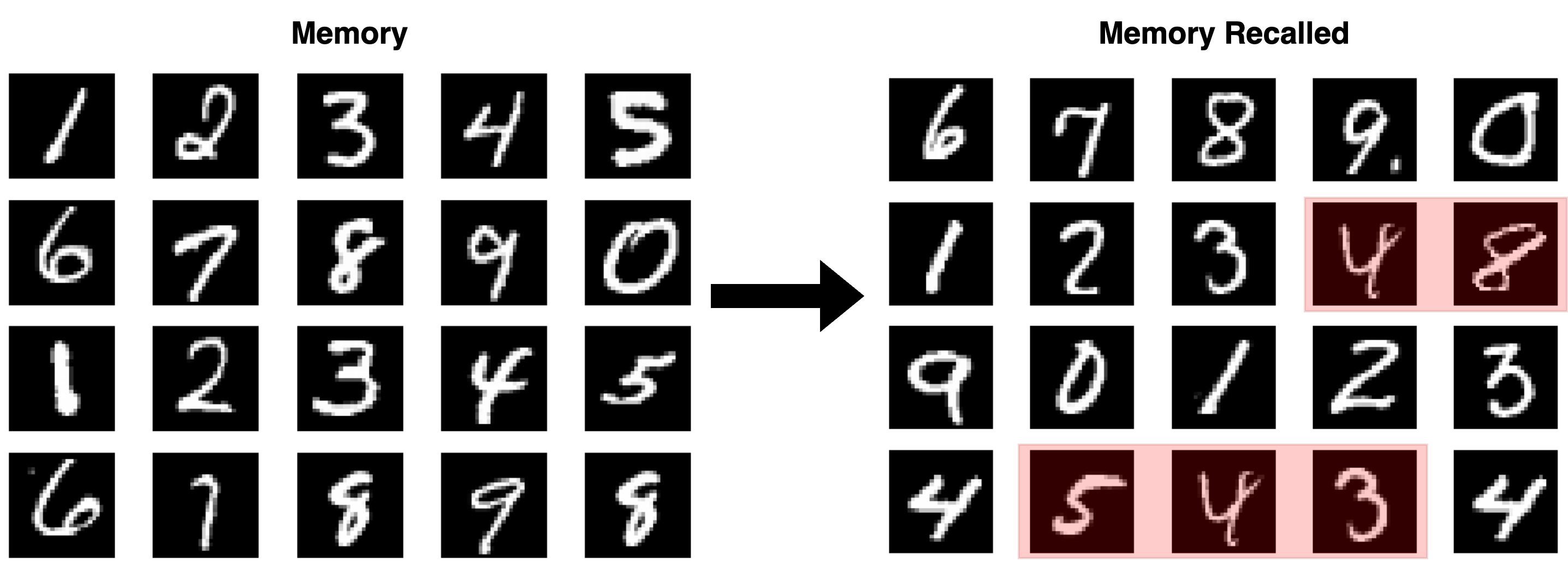}
    \caption{We illustrate \textbf{sequential recall} using Algorithm \ref{alg:penalized_seq_recall} on the MNIST dataset \citep{lecun1998gradient} for sparsemax. We set the transition scores to $t=10^5$, the number of inner iterations to $T=100$, the penalty coefficient to $\lambda=10^9$, and the decay rate to $\tau=0.001$. We set $\beta=0.1$ and $\omega=1.1$.  Red highlight corresponds to memory jumps. }
    \label{fig:seq_recall_image}
\end{figure*}
In Figure~\ref{fig:seq_recall}, a similar behavior is observed for the sequential recall paradigm in the first row, where the $\alpha$-entmax methods show competitive performance, for $\alpha \in \{1, 1.5, 2\}$. In the second row, we evaluate the quality of the generated sequence by measuring the Levenshtein coefficient as a function of the number of memories. This coefficient is computed as $1-\frac{D}{C}$, where $D$ is the Levenshtein distance and $C$ the sequence length. Even with the inclusion of the parameter $\omega$, the method still exhibits a tendency to jump between positions in memory, especially for larger memory sizes, which leads to the generation of multiple subsequences rather than reconstructing the full original sequence, as can be seen in Figure~\ref{fig:seq_recall_image}. Indeed, such ``jumpy'' dynamics are reminiscent of superdiffusive forms of hippocampal replay observed when animals thought to be reflective of parsimonious algorithms for sampling large memory structures \citep{McNamee2021}. This behavior results in fragmented outputs where the model captures and rearranges parts of the sequence as distinct units. Such ``block" jumps, where the model effectively skips over certain parts of the sequence, are not adequately handled by the Levensthein distance and other known metrics. As expected, we observe a decrease in performance as the number of memories increases, despite empirical verification that the individual elements within the subsequence blocks are retrieved in the correct order. Nonetheless, softmax tends to be worse than the remaining methods, as expected. 

\section{Experiments}\label{sec:experiments}
We now present experiments using both synthetic and real-world datasets to validate our theoretical findings in \S\ref{sec:probabilistic} and \S\ref{sec:structured}. These experiments demonstrate the practical benefits of our $\hat{\bm{y}}_\Omega$ functions, which result in sparse and structured Hopfield networks, and our $\hat{\bm{y}}_\Psi$ functions, which enable post-transformations like normalization and layer normalization.

\subsection{Metastable state distributions in MNIST}
\begin{table*}[t]
\small
    \centering
    \caption{Distribution of metastable state 
    (in $\%$) in MNIST. The 
    training set 
    is memorized and the 
    test set is used as queries.}
    \label{tab:metastable}
    \vspace{-0.1cm}
    \resizebox{\textwidth}{!}{%
    \begin{tabular}{c|S[table-format=2.1] S[table-format=1.1] S[table-format=1.1] S[table-format=1.1] S[table-format=1.1] S[table-format=1.1]  S[table-format=2.1]  S[table-format=1.1]| S[table-format=1.1] S[table-format=1.1] S[table-format=1.1] S[table-format=1.1] S[table-format=2.1] S[table-format=1.1] S[table-format=1.1] S[table-format=1.1] S[table-format=1.1] S[table-format=1.1]}
    \toprule
    Metastable & \multicolumn{8}{c}{$\beta = 0.1$} & \multicolumn{8}{c}{$\beta = 1$} \\
     State& \multicolumn{3}{c}{$\alpha$-entmax} & \multicolumn{2}{c}{$\gamma$-normmax} & \multicolumn{3}{c|}{$k$-subsets} & \multicolumn{3}{c}{$\alpha$-entmax} & \multicolumn{2}{c}{$\gamma$-normmax} & \multicolumn{3}{c}{$k$-subsets} \\
     Size & {1} & {1.5} & \multicolumn{1}{c|}{2} & {2} & \multicolumn{1}{c|}{5} & {2} & {4} & \multicolumn{1}{c|}{8} & {1} & {1.5} & \multicolumn{1}{c|}{2} & {2} & \multicolumn{1}{c|}{5} & {2} & {4} & {8} \\
    \midrule
        
    1 & 3.5 & 69.2 & 88.1 & 81.4 & 51.4& 0.0 & 0.0 & 0.0 & 97.8 & 99.9 & 100.0 & 100.0 &99.8 & 0.0 & 0.0 & 0.0 \\
    2 & 2.1 & 8.6 & 5.2 & 6.7 & 31.4& 87.3 & 0.0 & 0.0 & 0.9 & 0.1 & 0.0 & 0.0 & 0.2&99.9 & 0.0 & 0.0 \\
    3 & 1.6 & 3.9 & 2.6 & 1.9 & 7.0& 6.1 & 0.0 & 0.0 & 0.4 & 0.0 & 0.0 & 0.0 & 0.0&0.1 & 0.0 & 0.0 \\
    4 & 1.2 & 2.3 & 1.6 & 1.0 & 2.1& 2.5 & 80.0 & 0.0 & 0.3 & 0.0 & 0.0 & 0.0 &0.0 &0.0 & 99.3 & 0.0 \\
    5 & 1.2 & 1.6 & 1.1 & 0.9 &1.5& 2.0 & 11.9 & 0.0 & 0.2 & 0.0 & 0.0 & 0.0 & 0.0&0.0 & 0.7 & 0.0 \\
    6 & 0.9 & 0.9 & 0.8 & 0.5 &1.5& 1.1 & 4.4 & 0.0 & 0.1 & 0.0 & 0.0 & 0.0 & 0.0& 0.0&  0.1 & 0.0 \\
    7 & 1.1 & 0.6 & 0.4 & 0.4 & 1.3&0.6 & 2.1 & 0.0 & 0.1 & 0.0 & 0.0 & 0.0 &0.0 & 0.0 & 0.0 & 0.0 \\
    8 & 0.8 & 0.6 & 0.1 & 0.8 & 1.0&0.2 & 1.0 & 60.0 & 0.1 & 0.0 & 0.0 & 0.0 & 0.0& 0.0 & 0.0 & 95.0 \\
    9 & 1.0 & 0.3 & 0.0 & 0.5 &0.8& 0.1 & 0.4 & 26.0 & 0.1 & 0.0 & 0.0 & 0.0 & 0.0& 0.0 & 0.0 & 4.7 \\
    10 & 1.1 & 0.1 & 0.0 & 0.5 &0.6& 0.0 & 0.1 & 9.2 & 0.1 & 0.0 & 0.0 & 0.0 & 0.0& 0.0 & 0.0 & 0.2 \\
    10$^+$ & 85.5 & 11.9 & 0.1 & 5.4 & 1.4 & 0.1 & 0.0 & 4.8 & 0.1 & 0.0 & 0.0 & 0.0 &0.0 & 0.0 & 0.0 & 0.1 \\
        \bottomrule
    \end{tabular}
    }
\end{table*}
We start by investigating how often our Hopfield networks converge to metastable states, an important aspect for understanding the network's dynamics. To elucidate this, we examine  $\hat{\bm{y}}_\Omega(\beta\bm{X}\bm{q}^{(t)})$ for the MNIST dataset \citep{lecun1998gradient}, probing the number of nonzeros in these vectors. We set a threshold $>0.01$ for the softmax method (1-entmax). For the sparse transformations we do not need a threshold, since they have exact retrieval. 

Results in Table~\ref{tab:metastable} suggest that $\alpha$-entmax is capable of retrieving single patterns for higher values of $\alpha$. Despite $\gamma$-normmax's ability to induce sparsity, we observe that as $\gamma$ increases, the method tends to stabilize in small but persistent metastable states. This behavior aligns with theoretical expectations, as it favors a uniform distribution over some patterns. On the other hand, 
SparseMAP with $k$-subsets 
is capable of retrieving sparse pattern associations of $k$ patterns, as expected. For $\beta = 1$, we observe that all methods yield sparse distributions, which can be attributed to the inherently sparse nature of the MNIST dataset, where the majority of the pixels are background (represented by zeros), resulting in a high degree of sparsity in the data.
\subsection{Hopfield dynamics and basins of attraction}\label{sec:hopfield_dynamics}

\begin{figure*}[t]
  \centering
  \includegraphics[width=0.5\textwidth]{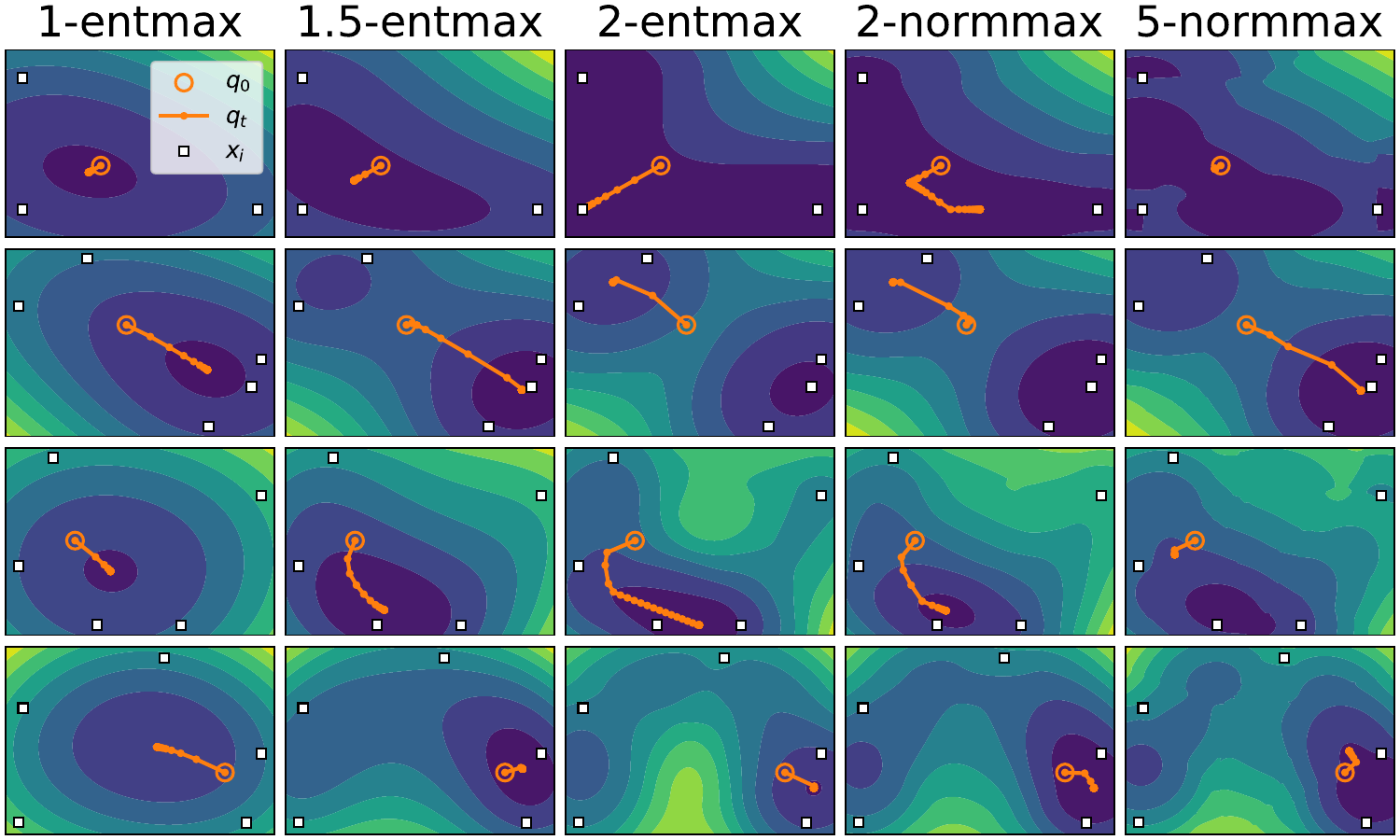}\hspace{2pt}%
  \includegraphics[width=0.49\textwidth]{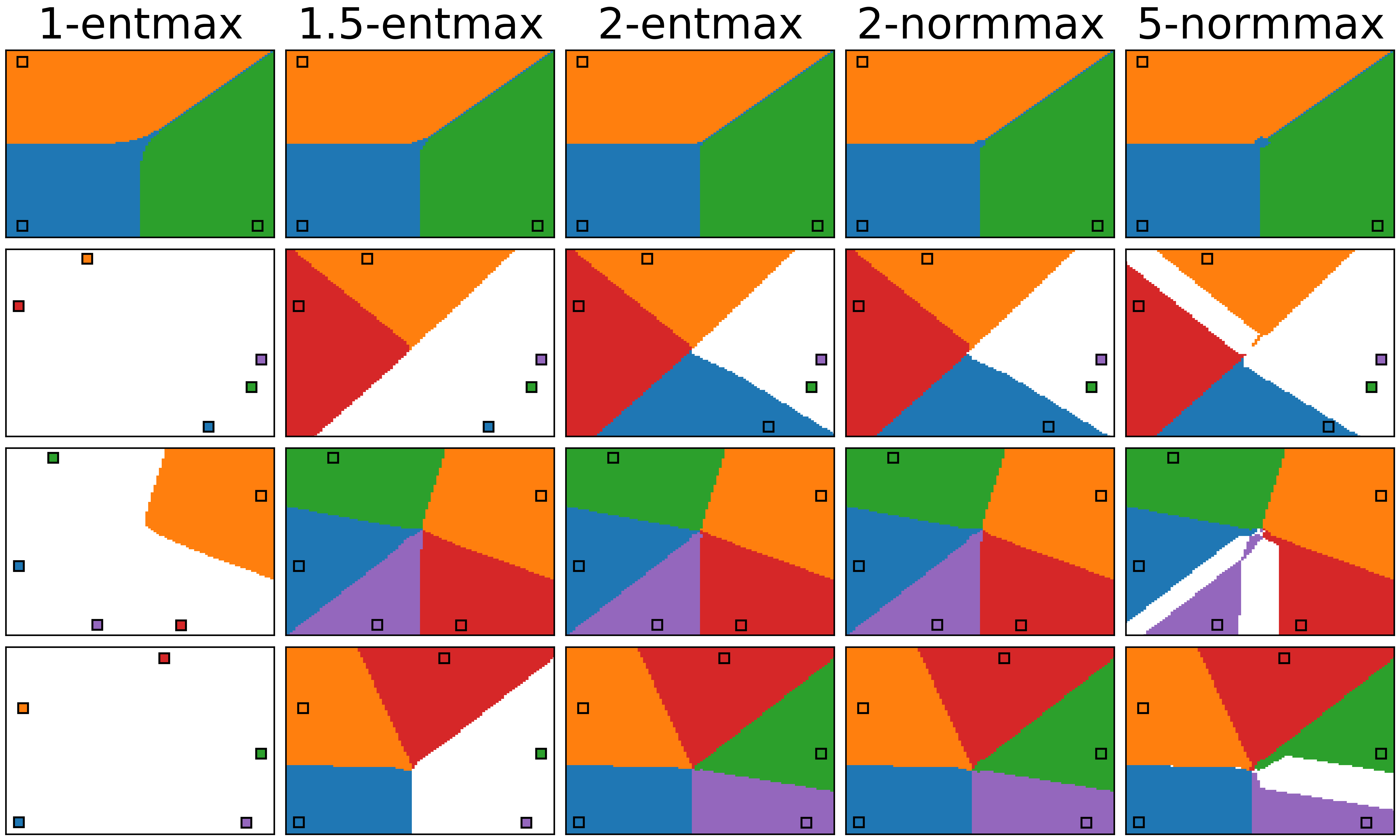}
  \caption{Left: contours of the energy function and optimization trajectory of the CCCP iteration ($\beta = 1$) for $\hat{\bm{y}}_\Psi(\bm{z}) = \bm{z}$. Right: attraction basins associated with each pattern ($\beta=10$; a larger $\beta$ is needed to allow for the $1$-entmax to get $T$-close to a single pattern). White sections converge to a metastable state; for $\alpha = 1$ we allow a tolerance of $T = .01$).}
  \label{fig:overall}
\end{figure*}

\begin{figure*}[t]
  \centering
  \includegraphics[width=0.5\textwidth]{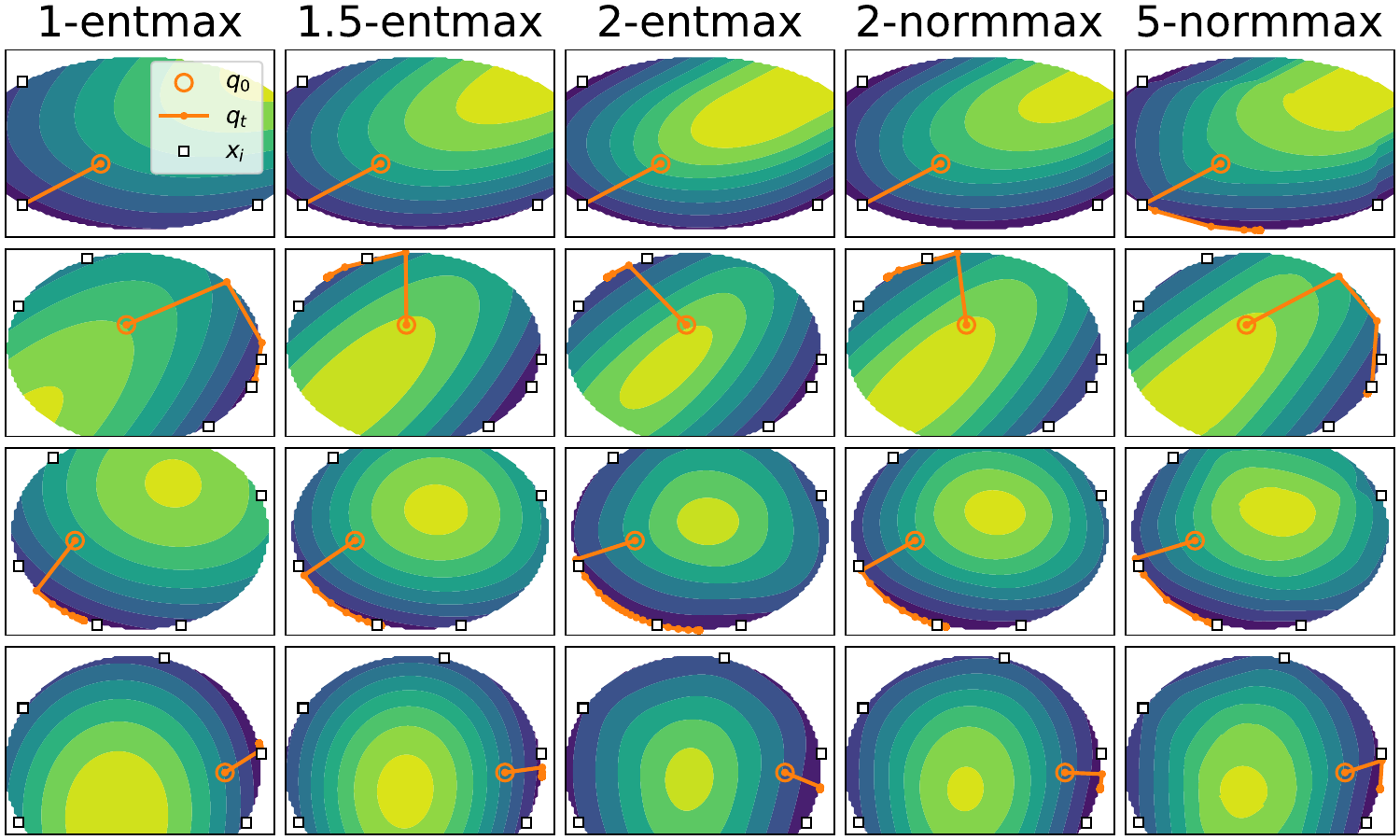}\hspace{2pt}%
  \includegraphics[width=0.49\textwidth]{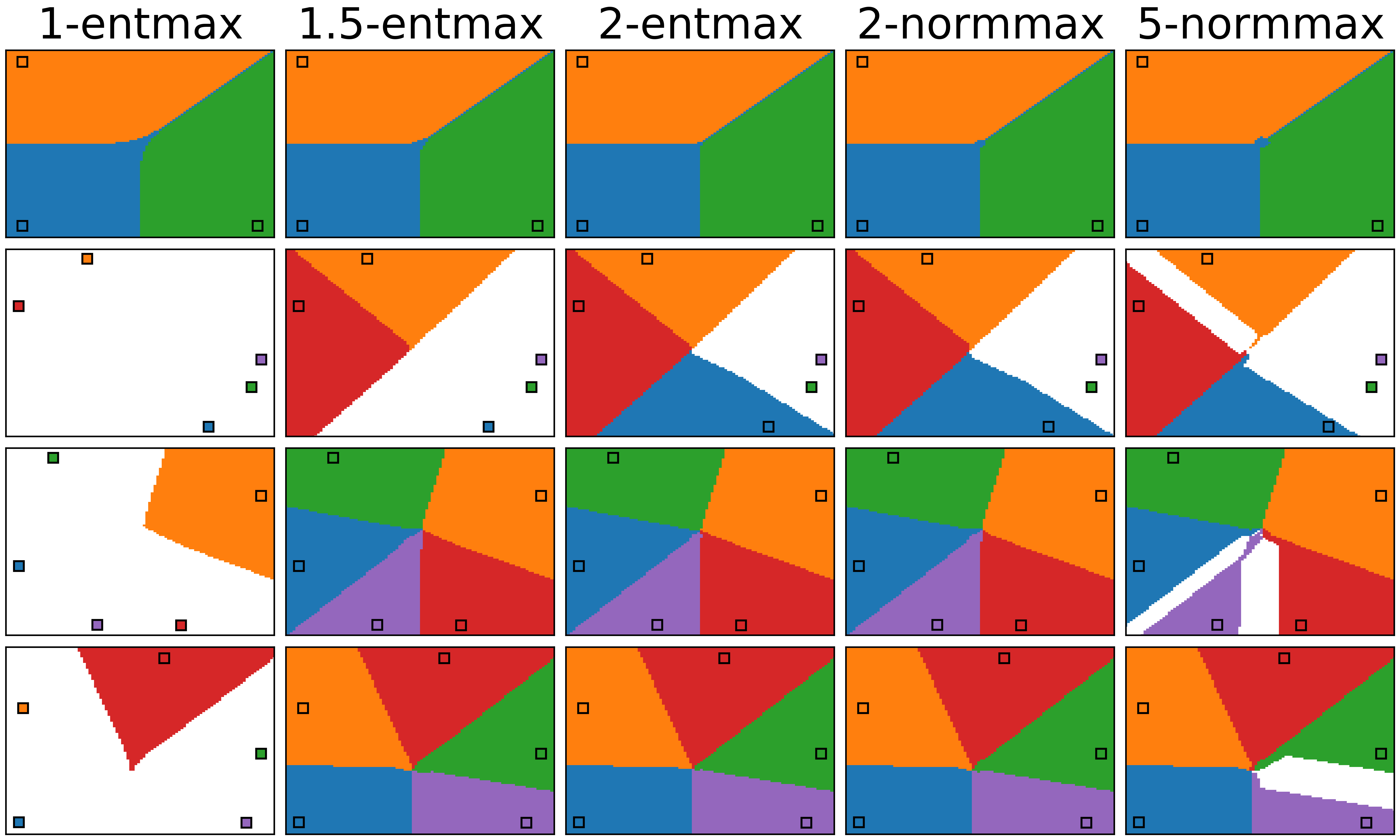}
  \caption{Left: contours of the energy function and optimization trajectory of the CCCP iteration ($\beta = 1$) for $\hat{\bm{y}}_\Psi(\bm{z}) = \frac{\bm{z}}{||\bm{z}||}$. The white regions correspond to infinite energies, due to the hard constraints. Right: attraction basins associated with each pattern ($\beta=10$).}
  \label{fig:overall11}
\end{figure*}

\begin{figure*}[t]
  \centering
  \includegraphics[width=0.5\textwidth]{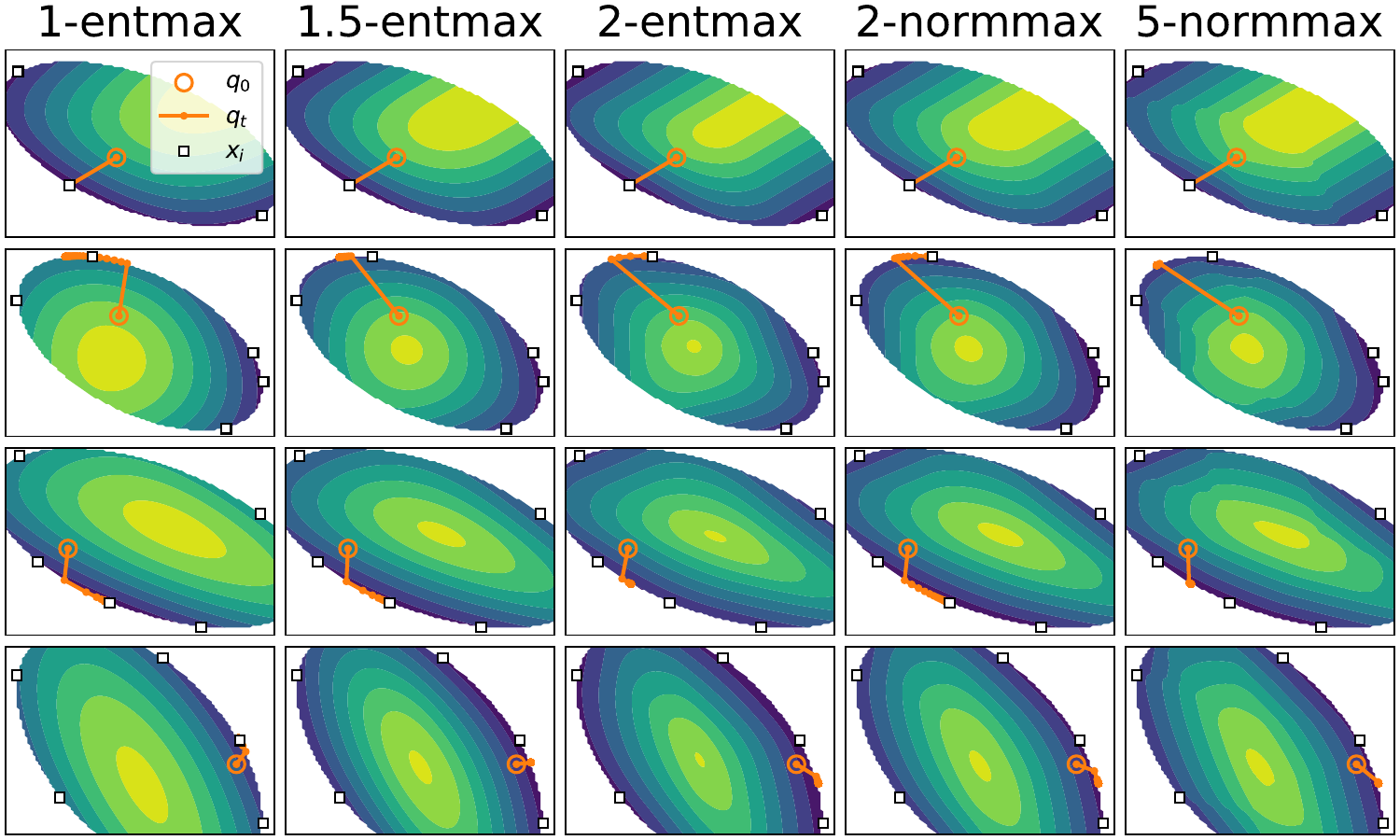}\hspace{2pt}%
  \includegraphics[width=0.49\textwidth]{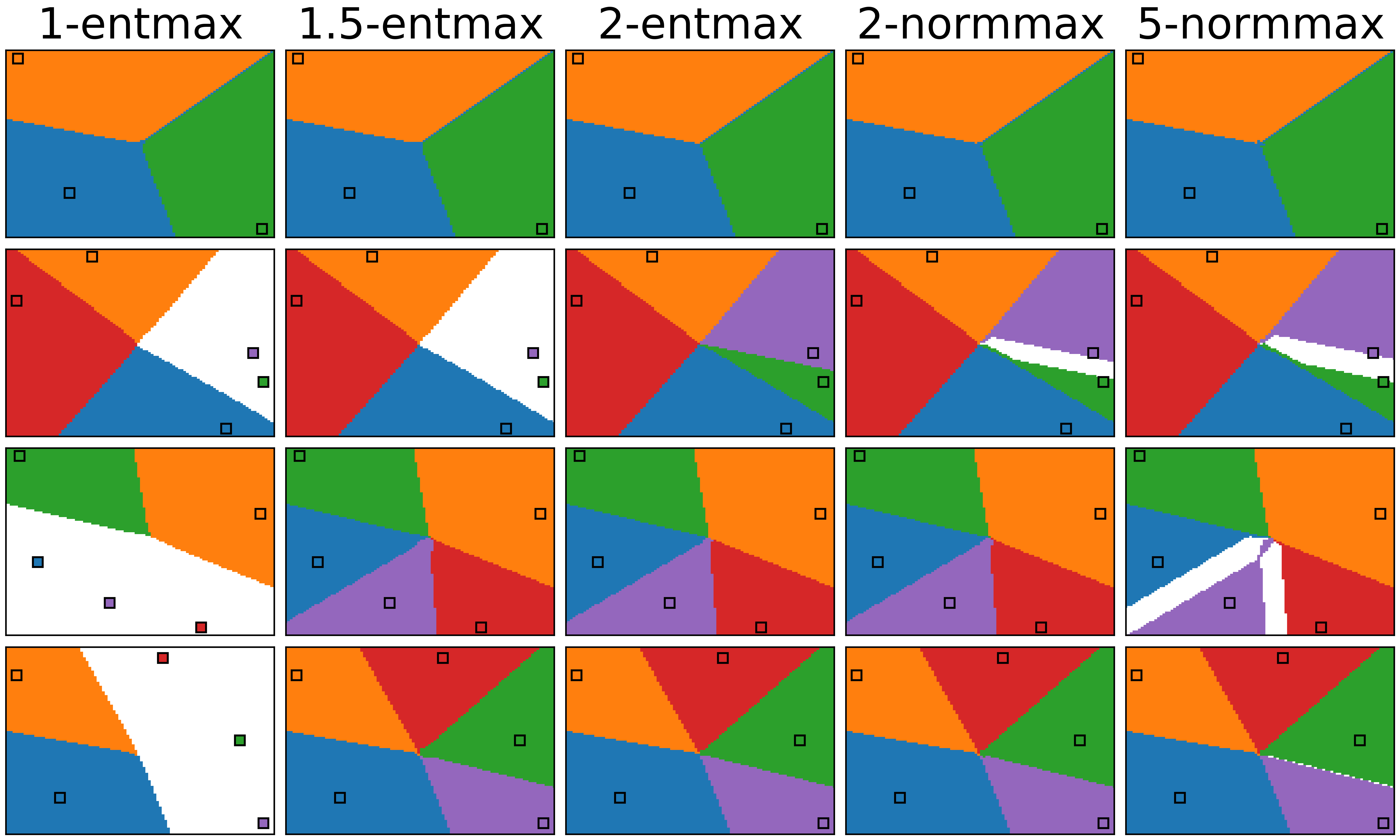}
  \caption{Left: contours of the energy function and optimization trajectory of the CCCP iteration ($\beta = 1$) for $\hat{\bm{y}}_\Psi(\bm{z}) = (\bm{z} - \mu_{\bm{z}})/\sqrt{\sigma_{\bm{z}}^2 + \epsilon}$. Here we show trajectories for 3D data points and contours according with the $\Psi(\bm{q})$ restrictions mentioned in \S\ref{sec:HFYE}, where the contours lie in the plane $z=-x-y$, intersected with the sphere of radius $\sqrt{D-1}$. White regions correspond to infinite energy. Right: attraction basins associated with each pattern ($\beta=10$).}
  \label{fig:overall2}
\end{figure*}

Figures \ref{fig:overall}, \ref{fig:overall11} and \ref{fig:overall2} illustrate the optimization trajectories and basins of attraction across different queries and artificially generated memory pattern configurations for two families of sparse transformations: $\alpha$-entmax and $\gamma$-normmax. We use the post-transformations $\hat{\bm{y}}_\Psi(\bm{z}) = \bm{z}$ (identity), $\hat{\bm{y}}_\Psi(\bm{z}) = \frac{\bm{z}}{||\bm{z}||}$ (normalization) and $\hat{\bm{y}}_{\Psi}(\bm{z}) = (\bm{z} - \mu_{\bm{z}})/\sqrt{\sigma_{\bm{z}}^2 + \epsilon}$ (layer normalization), respectively, which were covered in \S\ref{sec:particular_cases}. We used $\epsilon =10^{-8}$, as is commonly done in layer normalization for numerical stability. We use $\alpha \in \{1, 1.5, 2\}$ for $\alpha$-entmax and $\gamma \in \{2, 5\}$ for $\gamma$-normmax (where we apply the bisection algorithm described in Appendix~\ref{sec:normmax}). 

In Figure~\ref{fig:overall}, as $\alpha$ increases, $\alpha$-entmax converges more often to a single pattern, whereas $\gamma$-normmax tends to converge towards an attractor which is a uniform average of some patterns. This behavior is also observable in the basins of attraction (right plot), where larger values of $\alpha$ result in fewer regions converging to metastable states. In Figure~\ref{fig:overall11}, it is observed that the converged patterns consistently align along the circle with infinite energy outside. This observation is in line with expectations, considering the function $\hat{\bm{y}}_\Psi$, derived from $\Psi(\bm{q}) = I_{\|.\| \le r}(\bm{q})$, with $r=1$, that reflects the projected space constraint performed by normalization. In this figure, the local minima of the energy function tend to cluster around a set of memories, whereas in the basins of attraction, the trends closely resemble those in Figure~\ref{fig:overall}, with the exception of the softmax and 1.5-entmax cases, where more attraction areas are present. In Figure~\ref{fig:overall2}, we generate synthetic 3D data and plot the contours and trajectories obtained through the Hopfield update rules (CCCP iterations). 
In this case, we have 
    $\Psi(\bm{q}) = I_S(\bm{q})$, where $S = \{ \bm{q} \mid \| \bm{q} \| \leq \sqrt{D-1} \text{ and } \bm{1}^\top \bm{q} = 0 \}$,
which corresponds to layer normalization, 
and therefore we can represent 
a query $\bm{q} = (q_1, q_2, q_3)$ in the 2D plane through coordinates $(q_1, q_2)$, with $q_3 = -q_1-q_2$. 
After the first iteration, the points converge to the conditions specified by this indicator function, \textit{i.e.}, \(\bm{1}^\top \bm{q} = 0\) plane and \(\sqrt{D-1}\) radius sphere.%
\footnote{The \(D-1\) term arises because the layer normalization operation uses an unbiased standard deviation.} %
This is a special case of the scenario discussed in \S\ref{sec:HFYE} where no trainable parameters are present. Similarly to normalization, trajectories tend to converge to clusters of memories, and the basins of attraction exhibit a greater number of attraction areas. 

\subsection{Retrieval capacity}

We next assess the ability of HFY networks to handle growing quantities of stored memories (Figure~\ref{fig:1}), and noise (Figure~\ref{fig:3}), across various choices of $\hat{\bm{y}}_\Omega$ and $\hat{\bm{y}}_\Psi$, including the specific cases in Table~\ref{tab:hopfield}. 
\begin{figure*}[t]
    \centering
    \includegraphics[width=1\textwidth]{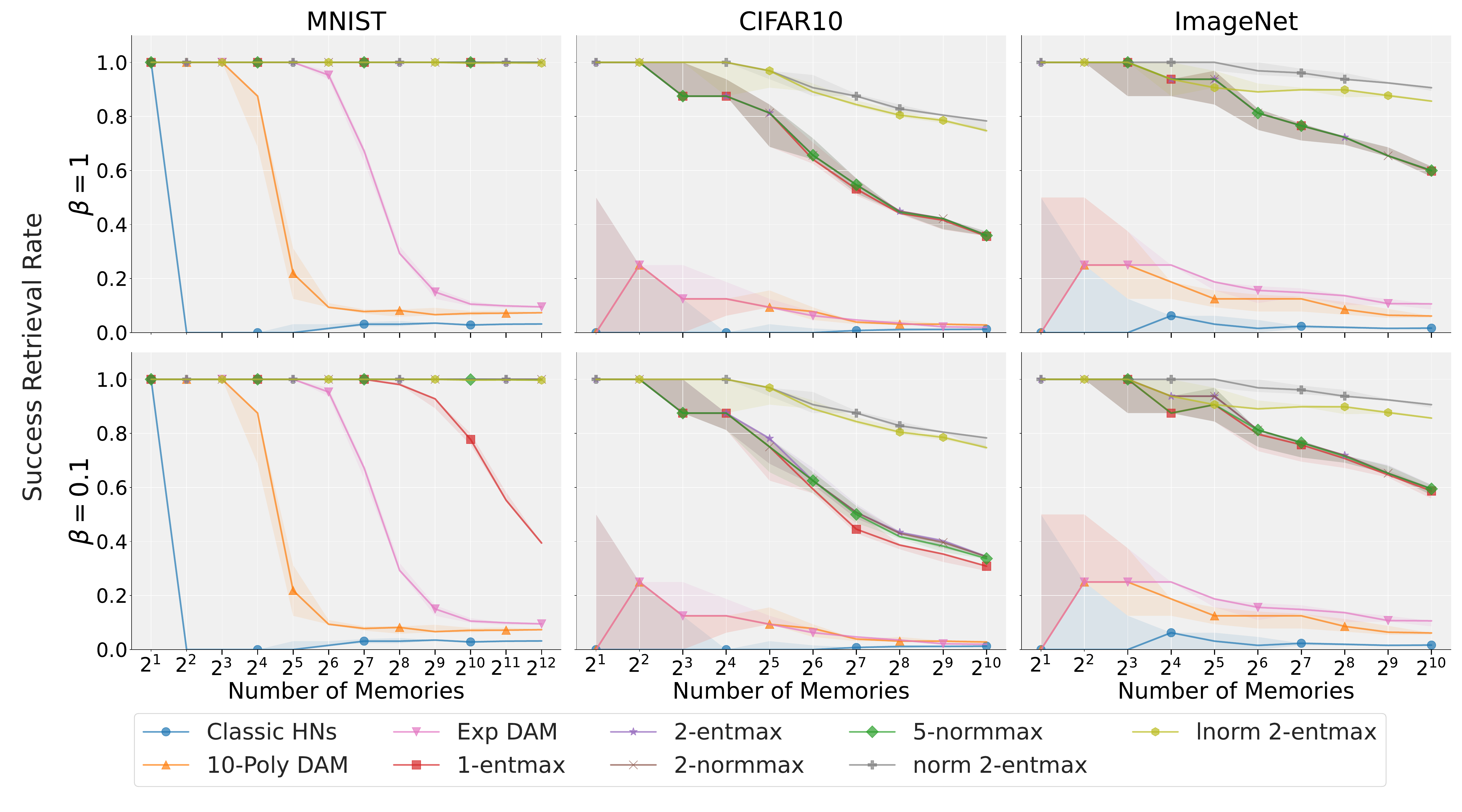}
\caption{Memory capacity for different numbers of stored memories for $\beta=0.1$ (bottom) and $\beta=1$ (top). For $\beta=1$, entmax and normmax lines intersect. Norm stands for $\ell_2$ normalization, which corresponds to $\hat{\bm{y}}_{\Psi}(\bm{z}) = \bm{z}/{\|\bm{z}\|}$, while lnorm, short for layer normalization, corresponds to $\hat{\bm{y}}_{\Psi}(\bm{z}) = (\bm{z} - \mu_{\bm{z}})/{\sqrt{\sigma_{\bm{z}}^2 + \epsilon}}$. Plotted are the medians over 5 runs with different memories and the interquartile range.}
    \label{fig:1}
\end{figure*}
We assess the retrieval capacity on three image datasets: MNIST \citep{lecun1998gradient}, CIFAR10 \citep{krizhevsky2009learning}, and Tiny ImageNet \citep{le2015tiny}. Prior to inputting the images as queries to the network, we normalize all pixel values to the interval $[-1, 1]$. The images are flattened into a single vector when fed to the Hopfield network. During the masking process, pixels outside the mask were set to 0. When introducing Gaussian noise to the images, we ensured that pixel values were clipped, preserving all values within the $[-1, 1]$ interval. A query is successfully retrieved when its cosine similarity falls above a predefined threshold of $\epsilon>0.9$. Plotted are the medians of the ratio of successfully retrieved patterns (success retrieval rate) and the interquartile range for 5 runs with different memories for the methods described in Table~\ref{tab:hopfield}.

In Figure~\ref{fig:1} and Figure~\ref{fig:3}, we can observe that the classic Hopfield networks of \citet{hopfield1982neural} and the dense associative models of \citet{krotov2016dense} and \citet{demircigil2017model} struggle to successfully retrieve patterns, most noticeable for the former, even for a low number of memories or for low levels of noise. Notably, in modern Hopfield networks with $\beta=1$ (first row of Figure~\ref{fig:1}), all variants—$\alpha$-entmax and $\gamma$-normmax—show ideal behavior on the MNIST dataset. They also demonstrate accurate behavior on the other datasets, with or without the normalization and layer normalization post-transformations. Additionally, these methods exhibit graceful degradation as the number of stored memories increases. A similar behavior is observed for \(\beta=0.1\) (second row), with performance improving as \(\alpha\) increases, for $\alpha$-entmax methods, as $\gamma$ decreases for $\gamma$-normmax. One can also see that \(2\)-entmax with \(\hat{\bm{y}}_{\Psi}(\bm{z}) = {\bm{z}}/{\|\bm{z}\|}\) (normalization) demonstrates even better performance across all datasets, indicating the positive contribution of this specific \(\hat{\bm{y}}_{\Psi}(\bm{z})\). Superior performance is also observed with \(\hat{\bm{y}}_{\Psi}(\bm{z}) = (\bm{z} - \mu_{\bm{z}})/{\sqrt{\sigma_{\bm{z}}^2 + \epsilon}}\) (layer normalization), although not as good as the former. Similar behavior can be observed in Figure~\ref{fig:3} but now in terms of the noise standard deviation. Detailed plots of the HFY networks, using \(\hat{\bm{y}}_\Omega\) as either \(\alpha\)-entmax or \(\gamma\)-normmax for different $\alpha$ and $\gamma$ values and different \(\hat{\bm{y}}_\Psi\), can be found in Appendix~\ref{app:MR}.

\begin{figure*}[t]
    \centering
    \includegraphics[width=1\textwidth]{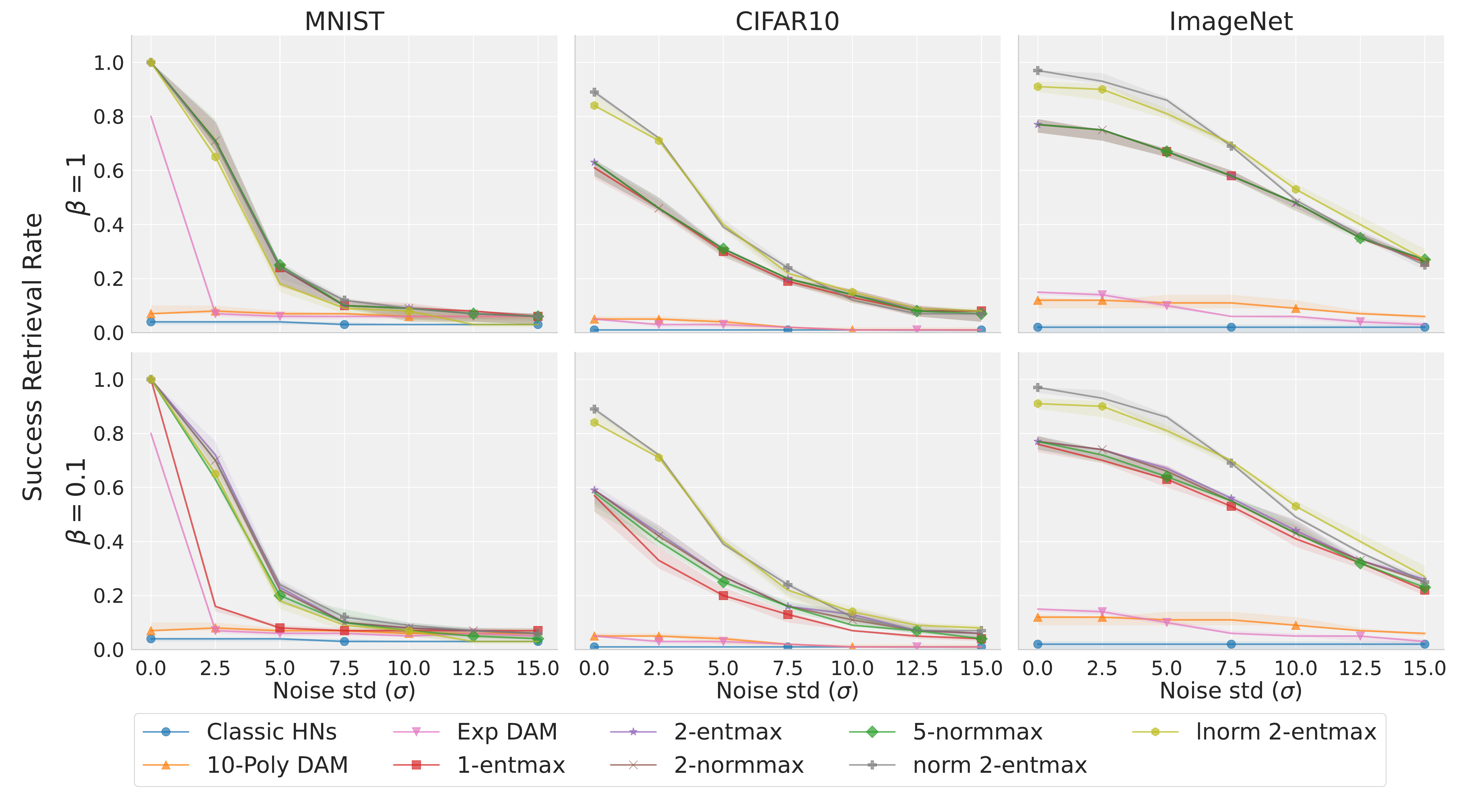}
    \caption{Memory robustness against different levels of noise for $\beta=0.1$ (bottom) and $\beta=1$ (top). For $\beta=1$ entmax and normmax lines intersect. 
    }
    \label{fig:3}
\end{figure*}

\subsection{Sparse and structured transformations in multiple instance learning}

In multiple instance learning (MIL), instances are grouped into bags and the goal is to predict the label of each bag based on the instances it contains. If a bag contains at least one item from a given class, we consider it positive. This holds true even when we only know whether that particular instance is present in the bag, without needing to know the quantity. This framework is particularly useful in situations where annotating individual instances is challenging or impractical, while bag-level labels are easier to obtain. Instances of such scenarios include medical imaging, where a bag might represent an image, instances could manifest as patches within the image, and the label signifies the presence or absence of a disease. 
We also consider a extended variant, denoted $K$-MIL, where bags are considered positive if they contain $K$ or more positive instances; MIL is recovered when $K=1$. 
$K$-MIL with $K>1$ can be useful in scenarios where instance labels or uncertain, or where precision in bag labels is more important than recall. 
Our $k$-subsets method is particularly suitable for this problem, due to its ability to retrieve $k$ patterns. 

\citet{ramsauer2020hopfield} tackle MIL via a Hopfield pooling layer, where the query $\bm{q}$ is learned and the keys $\bm{X}$ are instance embeddings. This approach closely resembles transformer self-attention with pre- and post-operations such as layer normalization, which differs slightly from a pure Hopfield layer.
We experiment with the sparse variants of the Hopfield pooling layer introduced by \citet{ramsauer2020hopfield}, as these layers contain more parameters, making them stronger pooling approximators. We use our proposed $\alpha$-entmax and $\gamma$-normmax transformations (\S\ref{sec:probabilistic}), as well as structured variants using SparseMAP with $k$-subsets (\S\ref{sec:structured}), varying $\alpha$, $\gamma$ and $k$ in each case. 
We run these models for $K$-MIL problems in the MNIST dataset (choosing `9' as target as it can be easily misunderstood with `7' or `4'),  
and on three MIL benchmarks: the Elephant, Fox, and Tiger datasets \citep{ilse2018attention}. 
We experiment with $K \in \{2,3,5\}$. 
Further details can be found in Appendix~\ref{sec:MNIST_Experimental_Details} and \ref{sec:MIL_bench_details}.  

Table~\ref{tab:MIL} shows the results. We observe that for MNIST with $K=1$, $1$-entmax  outperforms the remaining methods. Normmax shows consistent results across datasets achieving near-optimal performance, arguably due to its ability to adapt to near-uniform metastable states of varying size. We also observe that, for $K>1$, the $k$-subsets approach achieves top performance when $k=K$, as expected. We conjecture that this is due to the ability of SparseMAP  with $k$-subsets for $k=K$ to retrieve exactly the $K$ positive instances in the bag, whereas other values of $k$ might under or over-retrieve. In the MIL benchmarks, SparseMAP pooling surpasses sparse pooling variants for 2 out of 3 datasets. 
\begin{table*}[t]
    \caption{Results for MIL. We show accuracies for MNIST and ROC AUC for MIL benchmarks, averaged across 5 runs.}
    \vspace{-0.1cm}
    \centering
    \small
    \resizebox{\textwidth}{!}{%
    \begin{tabular}{lcccccccccccccccc}
        \toprule
        \multicolumn{1}{c}{} & \multicolumn{4}{c}{MNIST} & \multicolumn{3}{c}{MIL benchmarks} \\
        \cmidrule(lr){2-5}
        \cmidrule(lr){6-8}
        Methods & $K$=1 & $K$=2 & $K$=3 & $K$=5 & Fox & Tiger & Elephant  \\
        \midrule
        1-entmax (softmax) & $\mathbf{98.4\pm0.2}$ & $94.6\pm0.5$ & $91.1\pm0.5$ & $89.0\pm0.3$ & $66.4\pm2.0$ & $87.1\pm1.6$ & $92.6\pm0.6$\\
        1.5-entmax & $97.6\pm0.8$&$96.0\pm 0.9$ & $90.4\pm1.1$& $92.4\pm1.4$& $66.3\pm2.0$ & $87.3\pm1.5$ & $92.4\pm1.0$ \\
        2.0-entmax (sparsemax) &$97.9\pm0.2$& $96.7\pm0.5$ & $92.9\pm0.9$& $91.6\pm1.0$ & $66.1\pm0.6$ & $\mathbf{87.7\pm1.4}$ & $91.8\pm0.6$\\
        2.0-normmax &$97.9\pm0.3$& $96.6\pm0.6$& $93.9\pm0.7$& $92.4\pm0.7$& $66.1\pm2.5$ & 
        $86.4\pm0.8$ & $92.4\pm0.7$\\
        5.0-normmax &$98.2\pm0.5$& $97.2\pm0.3$ & $95.8\pm0.4$& $93.2\pm0.5$ & $66.4\pm2.3$& $85.5\pm0.6$ & $93.0\pm0.7$
        \\
        SparseMAP, $k=2$ & $97.9\pm0.3$ & $\mathbf{97.7\pm0.3}$ & $95.1\pm0.5$ & $92.6\pm1.1$ & $66.8\pm2.7$ & $85.3\pm0.5$ & $\mathbf{93.2\pm0.7}$ \\
        SparseMAP, $k=3$ &$98.0\pm0.6$ & $96.1\pm1.0$ & $\mathbf{96.5\pm0.5}$& $92.2\pm1.2$& $\mathbf{67.4\pm2.0}$ & $86.1\pm0.8$ & $92.6\pm1.7$\\
        SparseMAP, $k=5$ &$98.2\pm0.4$&$96.2\pm1.4$& $95.1\pm1.1$ & $\mathbf{95.1\pm1.5}$& $67.0\pm2.0$ & $86.3\pm0.8$ & $91.2\pm1.0$\\
        \bottomrule
    \end{tabular}
    }
    \label{tab:MIL}
\end{table*}

\subsection{Post-transformations in multiple instance learning}

In the previous experiment, we worked with extended variants of the Hopfield pooling layers from \citet{ramsauer2020hopfield}, which are designed to resemble self-attention mechanisms in transformers, incorporating distinct pre- and post-layer normalization for the queries and memories. These layers optionally normalize queries and keys with layer normalization, project them, and then layer-normalize them again each with different learnable parameters leading to different keys and values, as shown in the following pipeline, where $\bm{W}_Q, \bm{W}_K, \bm{W}_V$ are projection matrices (see \citealt[Figure~A.7]{ramsauer2020hopfield}):
\begin{itemize}
    \item \textbf{Queries:} $\bm{q} \,\, \mapsto \,\, \bm{q}' = \bm{W}_Q^\top \text{LayerNorm}(\bm{q}) \,\, \mapsto \,\, \bm{q}^{(0)} = \text{LayerNorm}(\bm{q}')$
    \item \textbf{Keys:} $\bm{x}_i \,\, \mapsto \,\, \bm{x}_i' = \bm{W}_K^\top \text{LayerNorm}(\bm{x}_i) \,\, \mapsto \,\, \bm{k}_i = \text{LayerNorm}(\bm{x}_i')$
    \item \textbf{Values:} $\bm{x}_i \,\, \mapsto \,\, \bm{x}_i' = \bm{W}_V^\top \text{LayerNorm}(\bm{x}_i) \,\, \mapsto \,\, \bm{v}_i = \text{LayerNorm}(\bm{x}_i')$
\end{itemize}
These operations are followed by the Hopfield update $\bm{q}^{(t+1)} = \bm{V}^\top \hat{\bm{y}}_\Omega(\beta \bm{K} \bm{q}^{(t)}) $, 
where $\bm{K} = [\bm{k}_1^\top; ...; \bm{k}_N^\top] \in \mathbb{R}^{N \times D}$ and  $\bm{V} = [\bm{v}_1^\top; ...; \bm{v}_N^\top] \in \mathbb{R}^{N \times D}$. 
Despite its higher expressiveness, which motivated its use by \citet{ramsauer2020hopfield} and in our previous experiment, this approach also contrasts with ``pure'' Hopfield layers, where keys must equal the values. Therefore, we 
experiment also with different post-transformations in the pure Hopfield scenario, which matches precisely the derived theoretical framework. 
\begin{table*}[t]
    \caption{Results for MIL. We show ROC AUC, averaged across 5 runs. We bold the top performing model for each dataset and underline the best $\hat{\bm{y}}_\Psi$ for each method.}
    \vspace{-0.1cm}
    \centering
    \small
    \begin{tabular}{lcccccccccccccccc}
        \toprule
        Methods & Post-Transformation & Fox & Tiger & Elephant\\
        \midrule
        \multirow{3}{*}{$1$-entmax (softmax)} & Identity & $63.6\pm1.7$ & $86.9\pm1.0$ & $91.3\pm1.0$\\
        & $\ell_2$ normalization & $\underline{64.3\pm2.4}$& $\underline{87.0\pm0.8}$ & $\underline{91.6\pm0.4}$
        \\
        &LayerNorm & $62.1\pm2.3$ & $\underline{87.0\pm0.9}$ & $91.2\pm1.0$
        \\
        \midrule
        \multirow{3}{*}{$1.5$-entmax} & Identity & $61.6\pm3.8$ & $86.7\pm0.9$ & $\underline{92.0\pm0.4}$ \\
        & $\ell_2$ normalization & $\underline{64.2\pm2.4}$& $86.7\pm0.4$ & $91.5\pm0.7$
        \\
        &LayerNorm & $63.4\pm1.6$ & $\underline{87.0\pm0.9}$ & $\underline{92.0\pm0.4}$
        \\
        \midrule
        \multirow{3}{*}{$2$-entmax (sparsemax)} & Identity & $63.7\pm1.7$ & $86.8\pm0.9$ & $91.6\pm0.5$\\
        & $\ell_2$ normalization & $\underline{63.4\pm2.7}$& $\underline{87.6\pm1.0}$ & $90.6\pm0.8$
        \\
        &LayerNorm & $\underline{63.4\pm1.6}$ & $85.0\pm1.3$ & $\underline{91.7\pm0.5}$
        \\
        \midrule
        \multirow{3}{*}{$2$-normmax} & Identity & $63.7\pm1.7$ & $86.7\pm0.9$ & $92.0\pm0.4$  \\
        & $\ell_2$ normalization &$\underline{64.2\pm2.4}$ & $\mathbf{\underline{87.7\pm0.6}}$ & $\mathbf{\underline{92.6\pm0.8}}$
        \\
        &LayerNorm & $63.4\pm1.6$ & $87.0\pm0.9$ & $91.9\pm0.4$
        \\
        \midrule
        \multirow{3}{*}{$5$-normmax} & Identity & $61.9\pm1.7$ & $86.9\pm1.0$ & $\underline{91.9\pm0.6}$ \\
        & $\ell_2$ normalization & $64.2\pm2.4$ & $\underline{87.5\pm0.7}$ & $91.3\pm0.7$
        \\
        &LayerNorm & $\underline{\mathbf{64.6\pm3.1}}$ & $87.0\pm0.9$ & $\underline{91.9\pm0.6}$
        \\
        \bottomrule
    \end{tabular}
    \label{tab:MIL1}
\end{table*}
We experiment in Table~\ref{tab:MIL1} with pure Hopfield layers using different $\hat{\bm{y}}_\Psi$ functions, namely the identity, $\ell_2$-normalization, and layer normalization (see \S\ref{sec:HFYE}). Note that, for the post-transformation identity, 1-entmax recovers \citet{ramsauer2020hopfield} without extra parametrizations and 2-entmax recovers \citet{hu2023sparse}. Identity is represented by $\hat{\bm{y}}_\Psi(\bm{z}) = \bm{z}$. For $\ell_2$-normalization, we use $\hat{\bm{y}}_\Psi(\bm{z}) = \frac{r\bm{z}}{\|\bm{z}\|}$ with $r = 1$. In the case of layer normalization, we apply $\hat{\bm{y}}_\Psi(\bm{z}) = \eta \frac{\bm{z} - \mu_{\bm{z}}}{\sqrt{\sigma_{\bm{z}}^2 + \epsilon}} + \bm{\delta}$, where $\eta$ and $\bm{\delta}$ are learnable parameters.
 The post-transformations are applied to both the initial query and memory, projecting them into the space of the Hopfield output.

Table~\ref{tab:MIL1} displays the results for the MIL benchmarks. 
We see that both $\ell_2$-normalization and layer normalization post-transformations lead to clear benefits for all methods:  across the three datasets and five models,  
$\ell_2$-normalization outperforms or matches the other post-transformations in 10 out of 15 cases, whereas layer normalization outperforms or matches the other post-transformations in 7 out of 15 entries. 

\subsection{Structured Rationalizers}

Finally, we experiment with rationalizer  models in sentiment prediction tasks, where the inputs are sentences or documents in natural language and the rationales are text highlights (see Figure~\ref{fig:mismatch} for an illustration). 
These models, sometimes referred as select-predict or explain-predict models \citep{jacovi2021aligning,zhang2021explain}, consist of a rationale generator and a predictor. The generator processes the input text and extracts the rationale as a subset of words to be highlighted, and the predictor classifies the input based solely on the extracted rationale, which generally involves concealing non-rationale words through the application of a binary mask. 
\begin{figure*}[t]
    \centering
    \includegraphics[width=1\textwidth]{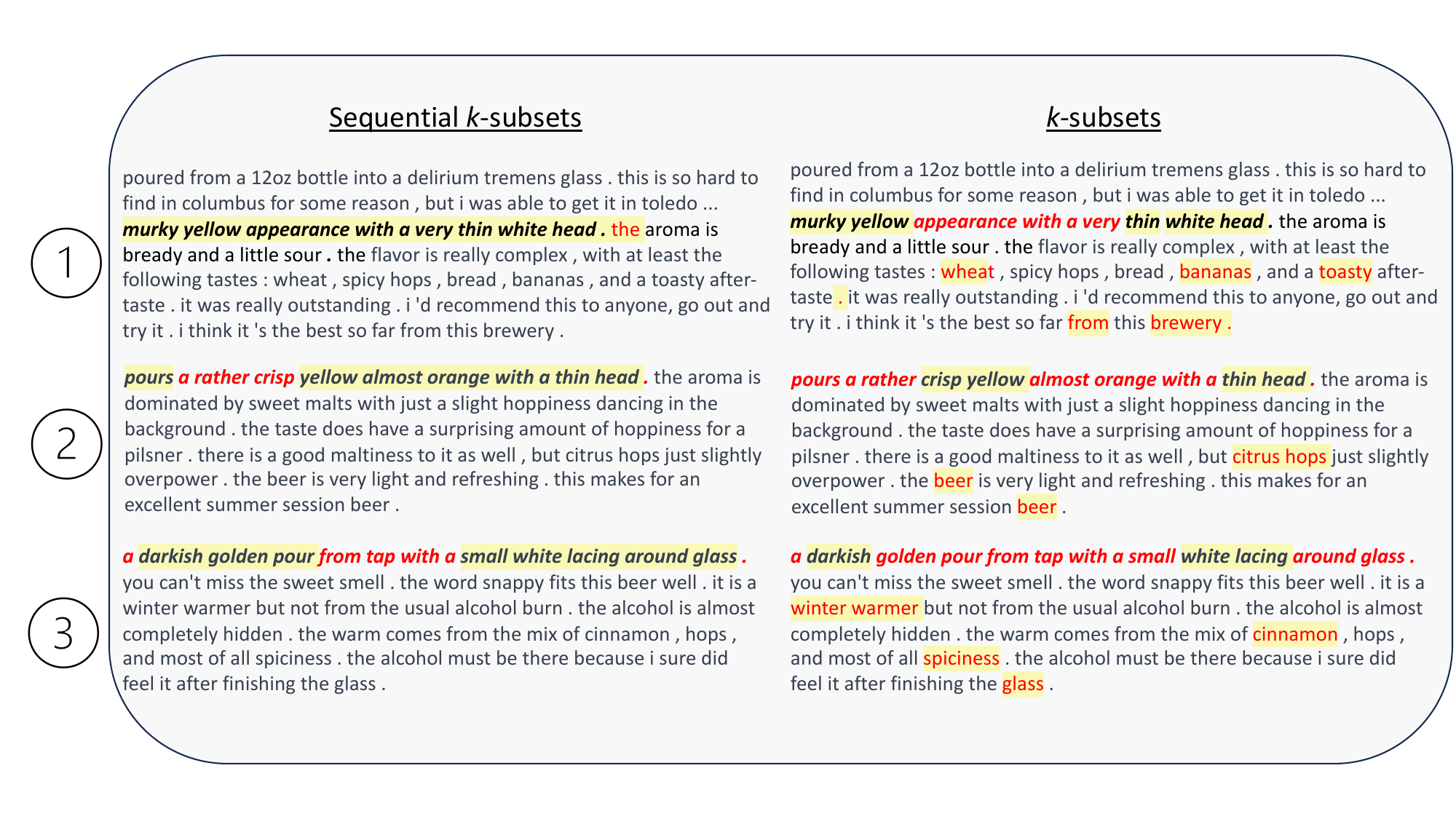}
    \caption{Examples of human rationale overlap for the aspect ``appearance''. The \hl{yellow highlight} indicates the model's rationale, while \textbf{\textit{italicized and bold font}} represents the human rationale. \textcolor{red}{Red font} identifies mismatches with human annotations. SparseMAP  with sequential $k$-subsets prefers more contiguous  rationales, which better match humans.}
    \label{fig:mismatch}
\end{figure*} 
Rationalizers are usually trained end-to-end, and the discreteness of the latent rationales is either handled with stochastic methods via score function estimators or the reparametrization trick \citep{lei2016rationalizing,bastings2019interpretable}, or with deterministic methods via structured continuous relaxations \citep{guerreiro2021spectra}. In either case, the model imposes sparsity and contiguity penalties to ensure rationales are short and tend to extract  adjacent words.

\begin{table*}[t]
\caption{Text rationalization results. We report mean and min/max $F_1$ scores across five random seeds on test sets for all datasets but Beer, where we report
MSE. All entries except SparseMAP are taken from \citet{guerreiro2021spectra}. We also report human rationale overlap (HRO) as $F_1$ score. We bold the best-performing rationalized model(s).}
\centering
\small
\resizebox{\textwidth}{!}{%
\begin{tabular}{l l c c c c c}
\toprule
Method & Rationale  & SST$\uparrow$ & AgNews$\uparrow$ & IMDB$\uparrow$ & Beer$\downarrow$  & Beer(HRO)$\uparrow$\\
\midrule
\multirow{2}{*}{SFE} 
& top-$k$       & .76 {\small(.71/.80)} & .92 {\small(.92/.92)} & .84 {\small(.72/.88)} & .018 {\small(.016/.020)} & .19 {\small(.13/.30)} \\
& contiguous    & .71 {\small(.68/.75)} & .86 {\small(.85/.86)} & .65 {\small(.57/.73)} & .020 {\small(.019/.024)} & .35 {\small(.18/.42)} \\
\midrule
\multirow{2}{*}{SFE w/Baseline} 
& top-$k$       & .78 {\small(.76/.80)} & .92 {\small(.92/.93)} & .82 {\small(.72/.88)} & .019 {\small(.017/.020)} & .17 {\small(.14/.19)}  \\
& contiguous    & .70 {\small(.64/.75)} & .86 {\small(.84/.86)} & .76 {\small(.73/.80)} & .021 {\small(.019/.025)} & .41 {\small(.37/.42)} \\
\midrule
\multirow{2}{*}{Gumbel} 
& top-$k$       & .70 {\small(.67/.72)} & .78 {\small(.73/.84)} & .74 {\small(.71/.78)} & .026 {\small(.018/.041)} & .27 {\small(.14/.39)} \\
& contiguous    & .67 {\small(.67/.68)} & .77 {\small(.74/.81)} & .72 {\small(.72/.73)} & .043 {\small(.040/.048)} & .42 {\small(.41/.42)}\\
\midrule
HardKuma 
& -             & .80 {\small(.80/.81)} & .90 {\small(.87/.88)} & .87 {\small(.90/.91)} & .019 {\small(.016/.020)} & .37 {\small(.00/.90)}  \\
\midrule
\multirow{2}{*}{Sparse Attention} 
& sparsemax     & \textbf{.82} {\small(.81/.83)} & \textbf{.93} {\small(.93/.93)} & .89 {\small(.89/.90)} & .019 {\small(.016/.021)} & .48 {\small(.41/.55)}  \\
& fusedmax      & .81 {\small(.81/.82)} & .92 {\small(.91/.92)} & .88 {\small(.87/.89)} & .018 {\small(.017/.019)} & .39 {\small(.29/.53)} \\
\midrule
SPECTRA 
& seq. $k$-subsets & .80 {\small(.79/.81)} & .92 {\small(.92/.93)} & \textbf{.90} {\small(.89/.90)} & \textbf{.017} {\small(.016/.019)} & .61 {\small(.56/.68)}  \\
\midrule
\multirow{2}{*}{SparseMAP} 
& $k$-subsets       & .81 {\small(.81/.82)} & \textbf{.93} {\small(.92/.93)} & \textbf{.90} {\small(.90/.90)} & \textbf{.017} {\small(.017/.018)} & .42 {\small(.29/.62)}   \\
& seq. $k$-subsets  & .81 {\small(.80/.83)} & \textbf{.93} {\small(.93/.93)} & \textbf{.90} {\small(.90/.90)} & .020 {\small(.018/.021)} & \textbf{.63} {\small(.49/.70)} \\
\bottomrule
\end{tabular}
}
\label{tab:spectra_extended}
\end{table*}

Our model architecture is adapted from SPECTRA \citep{guerreiro2021spectra}, but the combination of the generator and predictor departs from prior approaches \citep{lei2016rationalizing,bastings2019interpretable,guerreiro2021spectra} in which the predictor does not ``mask'' the input tokens; instead, it takes as input the pooled vector that results from the Hopfield pooling layer (either a sequential or non-sequential SparseMAP $k$-subsets layer). 
By integrating this Hopfield pooling layer into the predictor, we transform the sequence of word embeddings into a single vector from which the prediction is made. The rationale is formed by the pattern associations (word tokens) extracted by the Hopfield layer. 
We use the same hyperparameters as \citet{guerreiro2021spectra}. 
We use a head dimension of 200, to match the dimensions of the encoder vectors  (the size of the projection matrices associated to the static query and keys) and a head dropout of 0.5 (applied to the output of the Hopfield layer). 
We use a transition score of 0.001 and a train temperature of 0.1.

Table~\ref{tab:spectra_extended} shows the results on the downstream task (classification for SST, AgNews, IMDB; regression for BeerAdvocate) and the $F_1$ overlap with human rationales for the BeerAdvocate dataset \citep{mcauley2012learning}. 
Compared to strong baselines \citep{bastings2019interpretable,guerreiro2021spectra}, our methods achieve equal or slightly superior performance for all datasets. Moreover, our sequential $k$-subsets model outperforms the baselines in terms of overlap with human rationales, arguably due to the fact that human rationales tend to contain adjacent words, which is encouraged by our sequential model.

\section{Related Work}\label{sec:related_work}
Hopfield networks trace their origins to the works of \cite{amari1972learning}, \cite{nakano1972associatron},  \cite{amari1988statistical}, and \cite{hopfield1982neural}. A pivotal moment spurring increased research interest occurred in \cite{krotov2016dense}, which introduced a novel polynomial energy function, followed by exponential dense associative memories \citep{demircigil2017model}. While these models were initially designed for binary cases, the work by \cite{ramsauer2020hopfield} generalized them to continuous states, resembling attention mechanisms in transformers. These works were further extended to induce sparsity by \citet{hu2023sparse}, the first work which proposed {sparse Hopfield networks} and which derived retrieval error bounds tighter than the dense analog. 
This was followed by two lines of work which developed in parallel: \citet{wu2023stanhop} proposed  a ``generalized sparse Hopfield model'' based on $\alpha$-entmax with learnable $\alpha$, which they successfully applied to time series prediction problems; and \citet{martins2023sparse} and \citet{santos2024sparse} considered $\alpha$-entmax and $\gamma$-normmax, along with analysis and conditions for exact retrieval. 
Our work generalizes these previous approaches as specific instances of a broader family of energy functions presented in \S\ref{sec:HFYE} which are expressed as a difference between two Fenchel-Young losses (see equation \ref{eq:updates_general_hfy}). 
Additionally, neither \citet{hu2023sparse} nor \citet{wu2023stanhop} explored the potential of achieving exact retrieval through sparse transformations. Our work addresses this gap by presenting a unified framework for sparse Hopfield networks with enhanced theoretical guarantees for retrieval and coverage (see Propositions \ref{prop:separation}--\ref{prop:storage}). Furthermore, the derived framework extends their constructions and broadens the applicability to new families, including $\gamma$-normmax. An effective bisection algorithm for $\gamma$-normmax (Algorithm \ref{algo:bisect}) is introduced. The link with Fenchel-Young losses allowed the derivation of many results, such as the margin conditions, which were found to have direct application to sparse Hopfield networks. We leveraged this framework to accommodate structure where we explored the structured margin of SparseMAP (Proposition~\ref{prop:sparsemap_margin}), key to establish exact retrieval of pattern associations (Proposition~\ref{prop:stationary_single_iteration_sparsemap}).

A related approach to structure was later explored in \cite{hu2024nonparametric}, where a particular instance of their structure involves a top-$k$ modern Hopfield network, which relates to our $k$-subsets example. Our approach simplifies the process by adjusting \(\hat{\bm{y}}_\Omega\) as part of an optimization problem. 
Our $k$-subsets example also relates to the top-$k$ retrieval model introduced by \citet{davydov2023retrieving}. Their model diverges from ours as they employ an entropic regularizer that does not support sparsity, thereby making exact retrieval impossible. 
Our sparse and structured Hopfield layers in \S\ref{sec:structured}, with sparsemax and SparseMAP, involve a quadratic regularizer, which relates to the differentiable layers of \citet{amos2017optnet}. The use of SparseMAP and its active set algorithm \citep{niculae2018sparsemap} allows to exploit the structure of the problem to ensure efficient Hopfield updates and implicit derivatives. 

\cite{millidge2022universal} introduced universal Hopfield networks, unpacking associative memory models into three operations: similarity, separation, and projection. This framework closely aligns with the general Fenchel-Young framework proposed in this paper (Proposition \ref{prop:cccp}), which can also accommodate various alternative similarity metrics. This extension differs from their work where we can incorporate an additional optional operation: the post-transformation $\hat{\bm{y}}_\Psi$ in \eqref{eq:updates_general_hfy}. This operation can implement $\ell_2$ normalization or layer normalization over the produced Hopfield result, bridging the gap with transformers. The first case aligns with the normalization approach in \citet{nguyen2019transformers}, while the second corresponds to the layer normalization used in \citet{attention}. 


Memory retrieval has garnered significant attention in computational neuroscience, based on foundational work by early researchers \citep{anderson1972recognition, tulving1973encoding, tulving1985memory}, and is a crucial paradigm for understanding how neural systems access and use stored information. However, a gap remains in machine learning approaches that effectively model memory retrieval paradigms, such as free and sequential retrieval. A pioneering study by \citet{recanatesi2015neural} used Hopfield networks to model free recall, but these models are limited to binary states. Recently, \citet{Naim2020} introduced a parameter-free, graph-based model to predict recall based on associative memory structure, but experiments are limited to word recall. Our work provides an alternative by incorporating continuous states through constrained and penalized sparse transformations.

\section{Conclusions}\label{sec:conclusions}
We presented a unified framework for Hopfield networks that accommodates not only sparse and structured Hopfield networks but also recovers many known methods, such as classic Hopfield networks, dense associative memories, and modern Hopfield networks. 
Our framework hinges on a broad family of energy functions, based on convex duality,
written as a difference of two Fenchel-Young losses, one parametrized by
a generalized negentropy function and the other can be any convex function that relates with the post-transformation. By incorporating additional operations such as $\ell_2$ normalization and layer normalization, we bridge the gap between Hopfield networks and transformer architectures, providing a theoretically grounded approach to more robust Hopfield-based attention mechanisms. A central result of our paper is the connection between the margin properties of certain Fenchel-Young losses and sparse Hopfield networks, establishing provable conditions for exact retrieval. Moreover, we extend this framework to incorporate structure via the SparseMAP transformation, allowing for the retrieval of pattern associations favored by top-$k$ or sequential top-$k$ retrieval, rather than a single pattern. Finally, we apply and validate our broad family of energies on different memory recall paradigms. We also validate the effectiveness of our approach on image retrieval, multiple instance learning, and text rationalization tasks.

\section*{Acknowledgments}

This work was supported by EU’s Horizon Europe Research and Innovation Actions (UTTER, contract 101070631), by the project DECOLLAGE (ERC-2022-CoG 101088763), by the Portuguese Recovery and Resilience Plan through project C64500888200000055 (Center for Responsible AI), by Fundação para a Ciência e Tecnologia through contract UIDB/50008/2020, 
by the Dutch Research Council (NWO) VI.Veni.212.228,
and by the Champalimaud Foundation.




\newpage

\appendix
\onecolumn

%
%
%

\section{Bisection Algorithm for the Normmax Transformation}\label{sec:normmax}

We derive here expressions for the normmax transformation along with a bisection algorithm to compute this transformation for general $\gamma$. 

Letting $\Omega(\bm{y}) = -1 + \|\bm{y}\|_\gamma + I_{\triangle_N}(\bm{y})$ be the norm entropy, we have 
    $(\nabla \Omega^*)(\bm{\theta}) = \arg\max_{\bm{y} \in \triangle_N} \bm{\theta}^\top \bm{y} - \|\bm{y}\|_\gamma.$ 
The Lagrangian function is $L(\bm{y}, \bm{\lambda}, \mu) = -\bm{\theta}^\top \bm{y} + \|\bm{y}\|_\gamma - \bm{\lambda}^\top\bm{y} + \mu (\mathbf{1}^\top \bm{y} - 1)$. 
Equating the gradient to zero and using  $\nabla \|\bm{y}\|_\gamma= \left({\bm{y}} / {\|\bm{y}\|_\gamma}\right)^{\gamma-1}$, we get:
\begin{align}\label{eq:kkt}
    \mathbf{0} = \nabla_{\bm{y}} L(\bm{y}, \bm{\lambda}, \mu) = 
    -\bm{\theta} + \left({\bm{y}}/{\|\bm{y}\|_\gamma}\right)^{\gamma-1} - \bm{\lambda} + \mu \mathbf{1}.
\end{align}
The complementarity slackness condition implies that, if $y_i > 0$, we must have $\lambda_i = 0$, therefore, we have for such $i \in \mathrm{supp}(\bm{y})$: 
\begin{align}\label{eq:kkt_mu}
    -\theta_i + \left({y_i}/{\|\bm{y}\|_\gamma}\right)^{\gamma-1} + \mu = 0 \quad \Rightarrow \quad y_i = (\theta_i - \mu)^{\frac{1}{\gamma-1}} \|\bm{y}\|_\gamma.
\end{align}
Since we must have $\sum_{i \in \mathrm{supp}(\bm{y})} y_i = 1$, we obtain $\|\bm{y}\|_\gamma^{-1} = {\sum_{i \in \mathrm{supp}(\bm{y})} (\theta_i - \mu)^{\frac{1}{\gamma-1}}}$. 
Plugging into \eqref{eq:kkt_mu} and noting that, from \eqref{eq:kkt}, we have $\theta_i < \mu_i$ for $i \notin \mathrm{supp}(\bm{y})$, we get, for $i \in [N]$:
\begin{align}\label{eq:normmax_solution}
y_i = \frac{(\theta_i - \mu)_+^{\frac{1}{\gamma-1}}}{\sum_{j \in \mathrm{supp}(\bm{y})} (\theta_j - \mu)_+^{\frac{1}{\gamma-1}}}.
\end{align}
Moreover, since $\sum_{i \in \mathrm{supp}(\bm{y})} y_i^\gamma = \|\bm{y}\|_\gamma^\gamma$, we obtain from \eqref{eq:kkt_mu}:
\begin{align}\label{eq:normmax_normalization_condition}
    \|\bm{y}\|_\gamma^\gamma = \sum_{i \in \mathrm{supp}(\bm{y})} (\theta_i - \mu)^{\frac{\gamma}{\gamma-1}} \|\bm{y}\|_\gamma^\gamma \quad \Rightarrow \quad \sum_{i \in \mathrm{supp}(\bm{y})} (\theta_i - \mu)^{\frac{\gamma}{\gamma-1}} = 1.
\end{align}
In order to compute the solution \eqref{eq:normmax_solution} we need to find $\mu$ satisfying \eqref{eq:normmax_normalization_condition}. This can be done with a simple bisection algorithm if we find a lower and upper bound on $\mu$. 


We have, from \eqref{eq:normmax_solution}, that 
$\mu = \theta_i - (y_i / \|\bm{y}\|_\gamma)^{\gamma-1}$ for any $i \in \mathrm{supp}(\bm{y})$. 
Letting $\theta_{\max} = \max_i \theta_i$ and $y_{\max} = \max_i y_i$, we have in particular that 
$\mu = \theta_{\max} - (y_{\max} / \|\bm{y}\|_\gamma)^{\gamma-1}$. 
We also have that $y_{\max} = \|\bm{y}\|_\infty \le \|\bm{y}\|_\gamma$, which implies $y_{\max} / \|\bm{y}\|_\gamma \le 1$. 
Since $1/N \le y_{\max} \le 1$ and $\|\bm{y}\|_\gamma \le 1$ for any $\bm{y} \in \triangle_N$,  
we also obtain 
$y_{\max} / \|\bm{y}\|_\gamma \ge (1/N) / 1 = N^{-1}$. 
Therefore we have
\begin{align}
\underbrace{\theta_{\max} - 1}_{\mu_{\min}} \, \le \, \mu \, \le \, \underbrace{\theta_{\max} - N^{1-\gamma}}_{\mu_{\max}}.
\end{align}
The resulting algorithm is shown as Algorithm~\ref{alg:normmax_bisection}. 

\begin{algorithm}[t]
\small
\caption{Compute $\gamma$-normmax by bisection.}\label{alg:normmax_bisection}
\begin{algorithmic}[1]
\State \textbf{Input:} Scores $\bm{\theta} = [\theta_1, ..., \theta_N]^\top \in \mathbb{R}^N$, parameter $\gamma > 1$, number of bisection iterations $T$
\State \textbf{Output:} Probability vector $\bm{y} = [y_1, ..., y_N]^\top \in \triangle_N$. 
\State Define $\theta_{\max} \leftarrow \max_i \theta_i$
\State Set  $\mu_{\min} \leftarrow \theta_{\max} - 1$ and $\mu_{\max} \leftarrow \theta_{\max} - N^{1 - \gamma}$
\For{$t \in 1, \dots, T$}

\State  Set $\mu \leftarrow (\mu_{\min} + \mu_{\max}) / 2$ and  $Z \leftarrow \sum_j (\theta_j - \mu)_+^{\frac{\gamma}{\gamma-1}}$
\State \textbf{if} {$Z<1$}~\textbf{then}~$\mu_{\max}\leftarrow\mu$~%
\textbf{else}~$\mu_{\min}\leftarrow\mu$
\EndFor 
\State Return $\bm{y} = [y_1, ..., y_N]^\top$ with 
$y_i = {(\theta_i - \mu)_+^{\frac{1}{\gamma-1}}} / {\sum_{j} (\theta_j - \mu)_+^{\frac{1}{\gamma-1}}}.$
\label{line:return_normalize}
\end{algorithmic}
\label{algo:bisect}
\end{algorithm}

\section{Proofs of Main Text}

\subsection{Proof of Proposition~\ref{prop:cccp}}
\label{app:general_cccp}
Recall that the energy is written as a difference of two Fenchel-Young losses:
\begin{align}
    E(\bm{q}) 
    &= \underbrace{-L_\Omega(\bm{X}\bm{q}, \bm{u})}_{E_{\mathrm{concave}}(\bm{q})} + \underbrace{L_\Psi(\bm{X}^\top \bm{u}, \bm{q})}_{E_{\mathrm{convex}}(\bm{q})} + \mathrm{constant}.
\end{align}
The CCCP algorithm works as follows: at the $t$\textsuperscript{th} iteration, it linearizes the concave function $E_{\mathrm{concave}}$ by using a first-order Taylor approximation around $\bm{q}^{(t)}$, $${E}_{\mathrm{concave}}(\bm{q}) \approx \tilde{E}_{\mathrm{concave}}(\bm{q}) := {E}_{\mathrm{concave}}(\bm{q}^{(t)}) + \left(\frac{\partial E_{\mathrm{concave}}(\bm{q}^{(t)})}{\partial \bm{q}}\right)^\top (\bm{q} - \bm{q}^{(t)}).$$
Then, it computes a new iterate by solving the convex optimization problem $\bm{q}^{(t+1)} := \arg\min_{\bm{q}} E_{\mathrm{convex}}(\bm{q}) + \tilde{E}_{\mathrm{concave}}(\bm{q})$, which leads to $\nabla E_\mathrm{convex}(\bm{q}^{(t+1)}) = -\nabla E_\mathrm{concave}(\bm{q}^{(t)})$. Using the fact, from Proposition~\ref{prop:properties}, that $\nabla L_\Omega(\bm{\theta}, \bm{y}) = \hat{\bm{y}}_\Omega(\bm{\theta}) - \bm{y}$ and the chain rule leads to
\begin{align}\label{eq:energy_gradients}
\nabla E_{\mathrm{concave}}(\bm{q}) &= - \nabla_{\bm{q}} L_\Omega(\bm{X}\bm{q}; \bm{u}) = \bm{X}^\top\bm{u} - \bm{X}^\top\hat{\bm{y}}_\Omega(\bm{X}\bm{q}), \nonumber\\
\nabla E_{\mathrm{convex}}(\bm{q}) &= - \bm{X}^\top\bm{u} + \nabla \Psi(\bm{q}), 
\end{align}
leading to the equation $\nabla \Psi(\bm{q}^{(t+1)}) = \bm{X}^\top\hat{\bm{y}}_\Omega(\bm{X}\bm{q}^{(t)})$. 
Using the property that \(\nabla \Psi(\bm{q}) = \bm{\eta}\) is equivalent to \(\bm{q} = \nabla \Psi^*(\bm{\eta})\), \textit{i.e.}, that $(\nabla \Psi)^{-1} = \nabla \Psi^*$, we finally obtain:
\begin{align}
    \bm{q}^{(t+1)} &= \nabla \Psi^* \left ( \bm{X}^\top\hat{\bm{y}}_\Omega(\bm{X}\bm{q}^{(t)}) \right) = \hat{\bm{y}}_\Psi \left ( \bm{X}^\top\hat{\bm{y}}_\Omega(\bm{X}\bm{q}^{(t)}) \right) ,
\end{align}
which leads to the update equation \eqref{eq:updates_general_hfy}.

\subsection{Proof of Proposition~\ref{prop:normalization}}\label{sec:proof_prop_normalization}

Let $\Psi(\bm{q}) = I_S(\bm{q})$ with $S := \{ \bm{q}  \,\,:\,\, \|\bm{q} - \bm{\delta}\| \le \eta \sqrt{D} \,\, \wedge \,\,  \bm{1}^\top (\bm{q} - \bm{\delta})=0 \}.$ 
We start by showing that, 
if $f(\bm{q}) := I_F(\bm{q})$ with $F := \left\{\|\bm{q}\| \le 1 \,\, \wedge \,\, \mathbf{1}^\top \bm{q} = 0\right\}$, we have $(\nabla f^*)(\bm{z}) = \frac{\bm{z} - \mu_{\bm{z}} \mathbf{1}}{\|\bm{z} - \mu_{\bm{z}}\|}$. 
By definition, we have $f^*(\bm{z}) = \max_{\bm{q}} \bm{z}^\top \bm{q} = -\min_{\bm{q}} -\bm{z}^\top \bm{q}$ subject to $\mathbf{1}^\top \bm{q} = 0$ and $\|\bm{q}\| \le 1$. 
Introducing Lagrange multipliers $\mu$ and $\lambda \ge 0$, we obtain the Lagrangian function $L(\bm{q}, \mu, \lambda) = -\bm{z}^\top \bm{q} + \mu \mathbf{1}^\top \bm{q} + \lambda (\|\bm{q}\| - 1)$. 
We have 
\begin{align}\label{eq:proof_normalization}
    \mathbf{0} = \nabla L(\bm{q}, \mu, \lambda) = -\bm{z} + \mu \mathbf{1} + \lambda {\bm{q}}/{\|\bm{q}\|},   
\end{align}
which implies 
$0 = -\mathbf{1}^\top \bm{z} + D\mu + \lambda {\mathbf{1}^\top \bm{q}}/{\|\bm{q}\|}$. Since we must have $\mathbf{1}^\top \bm{q} = 0$, this implies that $\mu = \frac{\mathbf{1}^\top \bm{z}}{D} = \mu_{\bm{z}}$. 
Therefore, we can write \eqref{eq:proof_normalization} as 
$\bm{z} - \mu_{\bm{z}} \mathbf{1} = \lambda \frac{\bm{q}}{\|\bm{q}\|}$. 
Taking the norm in both sides, 
we obtain $|\lambda| = \|\bm{z} - \mu_{\bm{z}} \mathbf{1}\|$; since $\lambda \ge 0$, we have $\lambda = \|\bm{z} - \mu_{\bm{z}} \mathbf{1}\|$. 
Next, we observe that, while $\bm{q}$ is constrained as  $\|\bm{q}\| \le 1$, the objective $\bm{z}^\top \bm{q}$ is maximized when  $\|\bm{q}\| = 1$, and therefore we obtain $\bm{q}^\star = (\nabla f^*)(\bm{z}) = \frac{\bm{z} - \mu_{\bm{z}} \mathbf{1}}{\|\bm{z} - \mu_{\bm{z}}\|}$. 
We also obtain 
\begin{align}
f^*(\bm{z}) = \bm{z}^\top \bm{q}^\star = \frac{\|\bm{z}\|^2 - d\mu_{\bm{z}}}{\|\bm{z} - \mu_{\bm{z}}\|} = \|\bm{z} - \mu_{\bm{z}}\|.
\end{align}
Now, consider $g(\bm{q}) := I_G(\bm{q})$ with $G := \{\|\bm{q}\| \le r \,\, \wedge \,\, \mathbf{1}^\top \bm{q} = 0\}$ for some $r > 0$. We can write 
$g(\bm{q}) = f(\bm{q} / r)$, and using the linear transformation property in Table~\ref{table:convex_conjugate_properties}, we have $g^*(\bm{z}) = f^*(r \bm{z})$, and therefore 
$(\nabla g^*)(\bm{z}) = r (\nabla f^*)(r \bm{z}) = r \frac{\bm{z} - \mu_{\bm{z}} \mathbf{1}}{\|\bm{z} - \mu_{\bm{z}}\|}$. 
When $r = \eta \sqrt{D}$ this becomes $(\nabla g^*)(\bm{z}) = \eta \frac{\bm{z} - \mu_{\bm{z}} \mathbf{1}}{\sigma_{\bm{z}}}$, where $\sigma_{\bm{z}} := \sqrt{\frac{1}{D} \sum_i (z_i - \mu_{\bm{z}})^2}$. 
Finally, observe that we can write $\Psi(\bm{q}) = g(\bm{q} - \bm{\delta})$. 
From the translation property in Table~\ref{table:convex_conjugate_properties}, we then have 
$\Psi^*(\bm{z}) = \bm{\delta}^\top \bm{z} + g^*(\bm{z})$. 
This leads to 
$(\nabla \Psi^*)(\bm{z}) = \bm{\delta} + (\nabla g^*)(\bm{z}) = \mathrm{LayerNorm}(\bm{z}; \eta, \bm{\delta})$.

\subsection{Proof of Proposition~\ref{prop:bounds_cccp}}\label{sec:proof_prop_bounds_cccp}

We start by proving that $E(\bm{q}) \ge 0$.
    We show first that for any $\Omega$ satisfying conditions 1--3 above, we have
    \begin{align}\label{eq:upper_bound_fy}
    L_\Omega(\bm{\theta}; \mathbf{1}/N) \le \max_i \theta_i -\mathbf{1}^\top \bm{\theta}/N.
    \end{align}
    From the definition of $\Omega^*$ and the fact that $\Omega(\bm{y}) \ge \Omega(\mathbf{1}/N)$ for any $\bm{y} \in \triangle_{N}$,
    we have that, for any $\bm{\theta}$, $\Omega^*(\bm{\theta}) = \max_{\bm{y} \in \triangle_{N}} \bm{\theta}^\top \bm{y} - \Omega(\bm{y}) \le \max_{\bm{y} \in \triangle_{N}} \bm{\theta}^\top \bm{y} - \Omega(\mathbf{1}/N) = \max_i \theta_i - \Omega(\mathbf{1}/N)$,
    which leads to \eqref{eq:upper_bound_fy}.

    Let now $k = \arg\max_i \bm{q}^\top \bm{x}_i$, i.e., $\bm{x}_k$ is the pattern most similar to the query $\bm{q}$.
    We have
    \begin{align*}
        E(\bm{q}) &= -\beta^{-1} L_\Omega(\beta \bm{X} \bm{q}; \mathbf{1}/{N}) + \frac{1}{2} \|\bm{q} - \bm{\mu}_{\bm{X}}\|^2 + \frac{1}{2}(M^2 - \|\bm{\mu}_{\bm{X}}\|^2)\\
        &\ge
        -\beta^{-1} (\beta\max_i \bm{q}^\top \bm{x}_i - \beta \mathbf{1}^\top \bm{X}\bm{q}/N) + \frac{1}{2} \|\bm{q} - \bm{\mu}_{\bm{X}}\|^2 + \frac{1}{2}(M^2 - \|\bm{\mu}_{\bm{X}}\|^2)\\
        &= -\bm{q}^\top \bm{x}_k + \bm{q}^\top \bm{\mu}_{\bm{X}} + \frac{1}{2} \|\bm{q} - \bm{\mu}_{\bm{X}}\|^2 + \frac{1}{2}(M^2 - \|\bm{\mu}_{\bm{X}}\|^2)\\
        &= -\bm{q}^\top \bm{x}_k + \frac{1}{2} \|\bm{q}\|^2 + \frac{1}{2}\underbrace{M^2}_{\ge \|\bm{x}_k\|^2} 
        \ge \frac{1}{2}\|\bm{x}_k - \bm{q}\|^2 \ge 0.
    \end{align*}
    The zero value of energy is attained when $\bm{X} = \mathbf{1} \bm{q}^\top$ (all patterns are equal to the query), in which case $\bm{\mu}_{\bm{X}} = \bm{q}$, $M = \|\bm{q}\| = \|\bm{\mu}_{\bm{X}}\|$,  and we get $E_{\mathrm{convex}}(\bm{q}) = E_{\mathrm{concave}}(\bm{q}) = 0$.

    Now we prove the two upper bounds.
    For that, note that, for any $\bm{y} \in \triangle_{N}$, we have $0 \le L_{\Omega}(\bm{\theta}, \bm{y}) = L_{\Omega}(\bm{\theta}, \mathbf{1}/N) - \Omega(\mathbf{1}/N) + \Omega(\bm{y}) - (\bm{y} - \mathbf{1}/N)^\top\bm{\theta} \le L_{\Omega}(\bm{\theta}, \mathbf{1}/N) - \Omega(\mathbf{1}/N) - (\bm{y} - \mathbf{1}/N)^\top\bm{\theta},$ due to the assumptions 1--3 which ensure $\Omega$ is non-positive. That is, $L_\Omega(\bm{\theta}, \mathbf{1}/N) \ge \Omega(\mathbf{1}/N) + (\bm{y} - \mathbf{1}/N)^\top\bm{\theta}.$
    Therefore,
    with $\bm{q} = \bm{X}^\top \bm{y}$,
    we get $$E_{\mathrm{concave}}(\bm{q}) \le -\beta^{-1}\Omega(\mathbf{1}/N) - \bm{y}^\top \bm{X} \bm{q} + \bm{q}^\top \bm{\mu}_{\bm{X}} = -\beta^{-1}\Omega(\mathbf{1}/N) - \|\bm{q}\|^2 + \bm{q}^\top \bm{\mu}_{\bm{X}},$$ and
    $E(\bm{q}) = E_{\mathrm{concave}}(\bm{q}) + E_{\mathrm{convex}}(\bm{q}) \le -\beta^{-1}\Omega(\mathbf{1}/N) - \|\bm{q}\|^2 + \bm{q}^\top \bm{\mu}_{\bm{X}} + \frac{1}{2} \|\bm{q} - \bm{\mu}_{\bm{X}}\|^2 + \frac{1}{2}(M^2 - \|\bm{\mu}_{\bm{X}}\|^2) = -\beta^{-1}\Omega(\mathbf{1}/N) -\frac{1}{2}\|\bm{q}\|^2 + \frac{1}{2}M^2 \le -\beta^{-1}\Omega(\mathbf{1}/N) + \frac{1}{2}M^2$.

    To show the second upper bound, use the fact that $E_\mathrm{concave}(\bm{q}) \le 0$, which leads to $E(\bm{q}) \le E_\mathrm{convex}(\bm{q}) = \frac{1}{2}\|\bm{q} - \bm{\mu}_{\bm{X}}\|^2 + \frac{1}{2}(M^2 - \|\bm{\mu}_{\bm{X}}\|^2) = \frac{1}{2}\|\bm{q}\|^2 - \bm{q}^\top\bm{\mu}_{\bm{X}} + \frac{1}{2}M^2$.
    Note that $\|\bm{q}\| = \|\bm{X}^\top \bm{y}\| \le \sum_i y_i \|x_i\| \le M$ and that, from the Cauchy-Schwarz inequality, we have $- \bm{q}^\top\bm{\mu}_{\bm{X}} \le \|\bm{\mu}_{\bm{X}}\| \|\bm{q}\| \le M^2$. Therefore, we obtain $E(\bm{q}) \le \frac{1}{2}\|\bm{q}\|^2 - \bm{q}^\top\bm{\mu}_{\bm{X}} + \frac{1}{2}M^2 \le \frac{1}{2}M^2 +M^2 + \frac{1}{2}M^2 = 2M^2$.



\subsection{Proof of Proposition~\ref{prop:separation}}\label{sec:proof_prop_separation}

A stationary point is a solution of the equation $-\nabla E_{\mathrm{concave}}(\bm{q}) = \nabla E_{\mathrm{convex}}(\bm{q})$.
Using the expression for gradients \eqref{eq:energy_gradients},  this is equivalent to $\bm{q} = \bm{X}^\top \hat{\bm{y}}_\Omega(\beta\bm{X}\bm{q})$. If $\bm{x}_i = \bm{X}^\top \bm{e}_i$ is not a convex combination of the other memory patterns,
\(\bm{x}_i\) is a stationary point iff $\hat{\bm{y}}_\Omega(\beta\bm{X}\bm{x}_i)= \bm{e}_i$.
We now use the margin property of sparse transformations \eqref{eq:margin}, according to which the latter is equivalent to $\beta\bm{x}_i^\top \bm{x}_i - \max_{j\ne i} \beta\bm{x}_i^\top \bm{x}_j \ge m$. 
Noting that the left hand side equals $\beta\Delta_i$ leads to the desired result. 

If the initial query satisfies ${\bm{q}^{(0)}}^\top (\bm{x}_i - \bm{x_j}) \ge \frac{m}{\beta}$ for all $j \ne i$, we have again from the margin property that $\hat{\bm{y}}_\Omega(\beta\bm{X}\bm{q}^{(0)})= \bm{e}_i$, which combined to the previous claim ensures convergence in one step to $\bm{x}_i$. 
Finally, note that, 
if $\bm{q}^{(0)}$ is $\epsilon$-close to $\bm{x}_i$, we have 
$\bm{q}^{(0)} = \bm{x}_i + \epsilon \bm{r}$ for some vector $\bm{r}$ with $\|\bm{r}\|=1$. 
Therefore, we have 
\begin{align*}
(\bm{q}^{(0)})^\top (\bm{x}_i - \bm{x}_j) &= (\bm{x}_i + \epsilon \bm{r})^\top (\bm{x}_i - \bm{x}_j) 
\ge \Delta_i + \epsilon \bm{r}^\top (\bm{x}_i - \bm{x}_j) 
\ge \Delta_i - \epsilon \underbrace{\|\bm{r}\|}_{=1}  \|\bm{x}_i - \bm{x}_j\|,
\end{align*}
where we invoked the Cauchy-Schwarz inequality in the last step. 
Since the patterns are normalized (with norm $M$),%
\footnote{In fact, the result still holds if patterns are not normalized but have their norm upper bounded by $M$, i.e., if they lie within a ball of radius $M$ and not necessarily on the sphere.} %
we have from the triangle inequality that $\|\bm{x}_i - \bm{x}_j\| \le \|\bm{x}_i\| + \|\bm{x}_j\| = 2M$; using the assumption that $\Delta_i \ge \frac{m}{\beta} + 2M\epsilon$, we obtain ${\bm{q}^{(0)}}^\top (\bm{x}_i - \bm{x}_j) \ge \frac{m}{\beta}$, which from the previous points ensures convergence to $\bm{x}_i$ in one iteration.

\subsection{Proof of Proposition~\ref{prop:storage}}\label{sec:proof_prop_storage}

For the first statement, we follow a similar argument as the one made by \citet{ramsauer2020hopfield} in the proof of their Theorem A.3---however their proof has a mistake, which we correct here.%
\footnote{Concretely, \citet{ramsauer2020hopfield} claim that given a separation angle $\alpha_{\min}$, we can place $N = \left({2\pi}/{\alpha_{\min}}\right)^{D-1}$ patterns equidistant on the sphere, but this is not correct.} %
Given a separation angle $\alpha_{\min}$, we lower bound the number of patterns $N$ we can place in the sphere separated by at least this angle. 
Estimating this quantity is an important open problem in combinatorics, related to determining the size of spherical codes (of which kissing numbers are a particular case; \citealt{conway2013sphere}). We invoke a lower bound due to \citet{chabauty1953resultats}, \citet{shannon1959probability}, and \citet{wyner1965capabilities} (see also \citet{jenssen2018kissing} for a tighter bound), who show that $N \ge (1 + o(1))\sqrt{2\pi D} \frac{\cos \alpha_{\min}}{(\sin \alpha_{\min})^{D-1}}$. 
For $\alpha_{\min} = \frac{\pi}{3}$, which corresponds to the kissing number problem, we obtain the bound $N \ge (1 + o(1))\sqrt{{3\pi D}/{8}}\left({2}/{\sqrt{3}}\right)^D = \mathcal{O}\left(\left({2}/{\sqrt{3}}\right)^D\right).$
In this scenario, we have $\Delta_i = M^2(1 - \cos \alpha_{\min})$ by the definition of $\Delta_i$. 
From Proposition~\ref{prop:separation}, we have exact retrieval under $\epsilon$-perturbations if $\Delta_i \ge m\beta^{-1} + 2M\epsilon$. 
Combining the two expressions, we obtain $\epsilon \le \frac{M}{2}(1 - \cos \alpha_{\min}) - \frac{m}{2\beta M}$. 
Setting $\alpha_{\min} = \frac{\pi}{3}$, we obtain
$\epsilon \le \frac{M}{2}\left(1 - \frac{1}{2}\right) - \frac{m}{2\beta M} = \frac{M}{4} - \frac{m}{2\beta M}$. 
For the right hand side to be positive, we must have $M^2 > 2m/\beta$.

Assume now patterns are placed uniformly at random in the sphere. From \citet{brauchart2018random} we have, for any $\delta>0$:  
\begin{align}
    P(N^{\frac{2}{D-1}}\alpha_{\min} \ge \delta) \ge 1 - \frac{\kappa_{D-1}}{2}\delta^{D-1}, \quad \text{with} \quad
    \kappa_{D} := \frac{1}{D\sqrt{\pi}}\frac{\Gamma((D+1)/2)}{\Gamma(D/2)}.
\end{align}
Given our failure probability $p$, we need to have 
    $P(M^2 (1 - \cos \alpha_{\min}) \ge m\beta^{-1} + 2M\epsilon) \ge 1-p$,
which is equivalent to
\begin{align}
P \left\{ N^{\frac{2}{D-1}} \alpha_{\min} \ge N^{\frac{2}{D-1}} \arccos \left( 1 - \frac{m}{\beta M^2} - \frac{2\epsilon}{M} \right)\right\} \ge 1-p.
\end{align}
Therefore, we set 
$p = \frac{\kappa_{D-1}}{2} N^2 \left[ \arccos \left( 1 - \frac{m}{\beta M^2} - \frac{2\epsilon}{M} \right)\right]^{D-1}$. 
Choosing $N =  \sqrt{\frac{2p}{\kappa_{D-1}}} \zeta^{\frac{D-1}{2}}$ patterns for some $\zeta > 1$, we obtain 
$1 = \left[\zeta \arccos \left( 1 - \frac{m}{\beta M^2} - \frac{2\epsilon}{M} \right)\right]^{D-1}$. 
Therefore the failure rate $p$ is attainable provided the perturbation error is 
\begin{equation}
    \epsilon \le  \frac{M}{2} \left(1 - \cos \frac{1}{\zeta}\right) - \frac{m}{2\beta M}.
\end{equation}
For the right hand side to be positive, we must have
$\cos \frac{1}{\zeta} < 1 - \frac{m}{\beta M^2}$, i.e., $\zeta < \frac{1}{\arccos \left(1 -  \frac{m}{\beta M^2}\right)}$. 

\subsection{Proof of Proposition~\ref{prop:sparsemap_margin}}\label{sec:proof_prop_sparsemap_margin}

The first statement in the proposition is stated and proved by \citet{blondel2020learning} as a corollary of their Proposition 8. 
We prove here a more general version, which includes the second statement as a novel result. 
Using \citet[Proposition 8]{blondel2020learning}, we have that the structured margin of $L_\Omega$ is given by the following expression,
\begin{align*}
    m = \sup_{\bm{y} \in \mathcal{Y}, \,\, \bm{\mu} \in \mathrm{conv}(\mathcal{Y})} \frac{\Omega(\bm{y}) - \Omega(\bm{\mu})}{r^2 - \bm{\mu}^\top \bm{y}},
\end{align*}
if the supremum exists. 
For SparseMAP, using $\Omega(\bm{\mu}) = \frac{1}{2}\|\bm{\mu}_V\|^2 = \frac{1}{2}\|\bm{\mu}\|^2 - \frac{1}{2}\|\bm{\mu}_F\|^2$ for any $\bm{\mu} \in \mathrm{conv}(\mathcal{Y})$, and using the fact that $\|\bm{y}\| = r$ for any $\bm{y} \in \mathcal{Y}$, we obtain:
\begin{align*}
    m &= \sup_{\bm{y} \in \mathcal{Y}, \,\,\bm{\mu} \in \mathrm{conv}(\mathcal{Y})} \frac{\frac{1}{2}r^2 - \frac{1}{2}\|\bm{\mu}\|^2 +  \frac{1}{2}\|\bm{\mu}_F\|^2 - \frac{1}{2}r_F^2}{\bm{y}^\top (\bm{y} - \bm{\mu})} 
    \,\, \le^{(\dagger)} \sup_{\bm{y} \in \mathcal{Y}, \,\,\bm{\mu} \in \mathrm{conv}(\mathcal{Y})} \frac{\frac{1}{2}r^2 - \frac{1}{2}\|\bm{\mu}\|^2}{\bm{y}^\top (\bm{y} - \bm{\mu})}\nonumber\\
    &= 1 - \inf_{\bm{y} \in \mathcal{Y},\,\,\bm{\mu} \in \mathrm{conv}(\mathcal{Y})} \frac{\frac{1}{2}\|\bm{y} - \bm{\mu}\|^2}{\bm{y}^\top (\bm{y} - \bm{\mu})} \,\, \le^{(\ddagger)} 1,
\end{align*}
where the inequality $^{(\dagger)}$ follows from the convexity of $\frac{1}{2}\|\cdot\|^2$, which implies that $\frac{1}{2}\|\bm{\mu}_F\|^2 \le \frac{1}{2}\|\bm{y}_F\|^2 = \frac{1}{2}r_F^2$; and the inequality $^{(\ddagger)}$ follows from the fact that both the numerator and denominator in the second term are non-negative, the latter due to the Cauchy-Schwartz inequality and the fact that $\|\bm{\mu}\| \le r$. 
This proves the second part of Proposition~\ref{prop:sparsemap_margin}. 

To prove the first part, note first that,  
if there are no higher order interactions, then $r_F = 0$ and $\bm{\mu}_F$ is an ``empty vector'', which implies that $^{(\dagger)}$ is an equality. 
We prove now that, in this case, $^{(\ddagger)}$ is also an equality, which implies that $m=1$. 
We do that by showing that, 
for any $\bm{y} \in \mathcal{Y}$, we have $\inf_{\bm{\mu} \in \mathrm{conv}(\mathcal{Y})} \frac{\frac{1}{2}\|\bm{y}-\bm{\mu}\|^2}{\bm{y}^\top (\bm{y} - \bm{\mu})} = 0$. 
Indeed, choosing $\bm{\mu}= t\bm{y}' + (1-t)\bm{y}$ for an arbitrary $\bm{y}' \in \mathcal{Y} \setminus \{\bm{y}\}$, and letting $t\rightarrow 0^+$, we obtain $\frac{\frac{1}{2}\|\bm{y}-\bm{\mu}\|^2}{\bm{y}^\top (\bm{y} - \bm{\mu})} = \frac{\frac{t}{2}\|\bm{y}-\bm{y}'\|^2}{\bm{y}^\top (\bm{y} - \bm{y}')} \rightarrow 0$.

\subsection{Proof of Proposition~\ref{prop:stationary_single_iteration_sparsemap}}\label{sec:proof_prop_stationary_single_iteration_sparsemap}

A point $\bm{q}$ is stationary iff it satisfies $\bm{q} = \bm{X}^\top \hat{\bm{y}}_\Omega(\beta \bm{X}\bm{q})$.  
Therefore,   
$\bm{X}^\top \bm{y}_i$ is guaranteed to be a stationary point if 
$\hat{\bm{y}}_\Omega(\beta \bm{X}\bm{X}^\top \bm{y}_i) = \bm{y}_i$.%
\footnote{But not necessarily ``only if''---we could have $\bm{X}^\top\bm{y}_i$ in the convex hull of the other pattern associations.} %
By assumption, we have $\beta \Delta_i \ge \frac{1}{2}D_i^2 \ge \frac{1}{2}\|\bm{y}_i - \bm{y}_j\|^2$ for all $j$. 
Since $\Delta_i \le \bm{y}_i^\top \bm{X}\bm{X}^\top (\bm{y}_i - \bm{y}_j)$ by definition, this implies $\beta \bm{y}_i^\top \bm{X}\bm{X}^\top (\bm{y}_i - \bm{y}_j) \ge \frac{1}{2}\|\bm{y}_i - \bm{y}_j\|^2$. 
Since SparseMAP has a margin $m \le 1$, we  recognize that the latter inequality is a margin condition (Def.~\ref{def:structured_margin}), which implies zero loss, \textit{i.e.}, $\hat{\bm{y}}_\Omega(\beta \bm{X}\bm{X}^\top \bm{y}_i) = \bm{y}_i$, as desired. 


If the initial query satisfies $\bm{q}^\top 
\bm{X}^\top(\bm{y}_i - \bm{y}_j) \ge \frac{D_i^2}{2\beta}$ for all $j \ne i$, we have again from the margin property that $\hat{\bm{y}}_\Omega(\beta\bm{X}\bm{q})= \bm{y}_i$, which ensures convergence in one step to $\bm{X}^\top\bm{y}_i$. 

If $\bm{q}$ is $\epsilon$-close to $\bm{X}^\top\bm{y}_i$, then  
$\bm{q} = \bm{X}^\top\bm{y}_i + \epsilon \bm{r}$ for some vector $\bm{r}$ with $\|\bm{r}\|=1$. 
Therefore, 
\begin{align}
\bm{q}^\top \bm{X}^\top (\bm{y}_i - \bm{y}_j) &= (\bm{X}^\top\bm{y}_i + \epsilon \bm{r})^\top \bm{X}^\top (\bm{y}_i - \bm{y}_j) \ge \Delta_i + \epsilon \bm{r}^\top \bm{X}^\top (\bm{y}_i - \bm{y}_j).  \end{align}
We now bound $-\bm{r}^\top \bm{X}^\top (\bm{y}_i - \bm{y}_j)$ in two possible ways. 
Using the Cauchy-Schwarz inequality, 
we have $-\bm{r}^\top \bm{X}^\top (\bm{y}_i - \bm{y}_j) \le \|\bm{X} \bm{r}\| \|\bm{y}_i - \bm{y}_j\| \le \sigma_{\max}(\bm{X}) D_i$, where $\sigma_{\max}(\bm{X})$ is the largest singular value of $\bm{X}$ (its  spectral norm). 
On the other hand, denoting 
$R_i := \max_j \|\bm{y}_i - \bm{y}_j\|_1$, we can also use H\"older's inequality to obtain $-\bm{r}^\top \bm{X}^\top (\bm{y}_i - \bm{y}_j) \le \|\bm{X} \bm{r}\|_\infty \|\bm{y}_i - \bm{y}_j\|_1 \le M R_i$, where we used the fact that 
$\|\bm{X} \bm{r}\|_\infty = \max_k |\bm{x}_k^\top \bm{r}| \le \|\bm{x}_k\|\|\bm{r}\| = M$. Combining the two inequalities, we obtain 
$\bm{q}^\top \bm{X}^\top (\bm{y}_i - \bm{y}_j) \ge \Delta_i - \epsilon \min \{\sigma_{\max}(\bm{X}) D_i, MR_i\}$. 
Using the assumption that $\Delta_i \ge \frac{D_i^2}{2\beta} + \epsilon\min \{\sigma_{\max}(\bm{X}) D_i, MR_i\}$, we obtain $\bm{q}^\top \bm{X}^\top (\bm{y}_i - \bm{y}_j) \ge \frac{D_i^2}{2\beta}$, which from the previous points ensures convergence to $\bm{X}^\top\bm{y}_i$ in one iteration. 
The result follows by noting that, since $\mathcal{Y} \subseteq \{0,1\}^D$, we have $R_i = D_i^2$.  

\section{Additional Experiments and Experimental Details}
\subsection{Memory Retrieval}
Further insights into Hopfield-Fenchel-Young network variants are provided in Figure \ref{fig:2}. The figure reveals that normalization excels across a variable number of memories.
\label{app:MR}
\begin{figure}[h]
    \centering
    \includegraphics[width=1\textwidth]{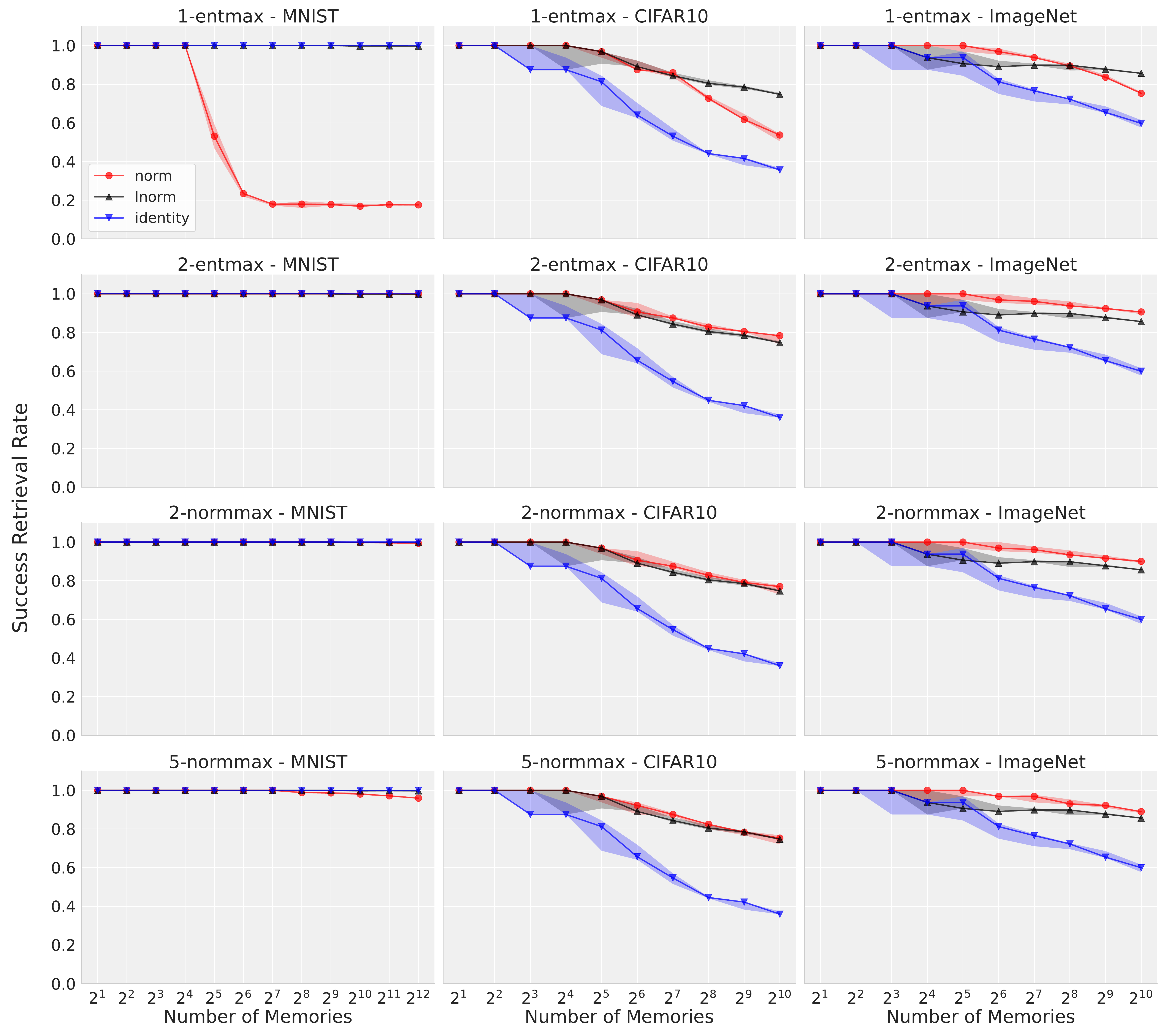}
    \caption{Memory capacity for different numbers of stored memories for $\beta = 1$ and for different $\hat{\bm{y}}_\Omega$ and $\hat{\bm{y}}_\Psi$. }
    \label{fig:2}
\end{figure}
\subsection{MNIST $K$-MIL}
\label{sec:MNIST_Experimental_Details}
For $K$-MIL, we created 4 datasets by grouping the MNIST examples into bags, for $K \in \{1, 2, 3, 5\}$. A bag is positive if it contains at least $K$ targets, where the target is the number ``9'' (we chose ``9'' as it can be easily misunderstood with ``7'' or ``4''). The embedding architecture is the same as \citet{ilse2018attention}, but instead of attention-based pooling, we use our $\alpha$-entmax pooling, with $\alpha=1$ mirroring the pooling method in \cite{ramsauer2020hopfield}, and $\alpha=2$ corresponding to the pooling in \cite{hu2023sparse}. Additionally, we incorporate $\alpha$-normmax pooling and SparseMAP pooling with $k$-subsets. Further details of the $K$-MIL datasets are shown in Table~\ref{tab:MNIST_dataset}.
\begin{table}[t]
\small
    \centering
    \caption{Dataset sample details for the MNIST $K$-MIL experiment. The size $L_i$ of the $i\textsuperscript{th}$ bag is determined through $L_i = \max\{K, L_i'\}$ where $L_i' \sim \mathcal{N}(\mu, \sigma^2)$.  
    The number of positive instances in a bag is uniformly sampled between $K$ and $L_i$ for positive bags and between $0$ and $K-1$ for negative bags.}
    \label{tab:MNIST_dataset}
    \begin{tabular}{cccccc}
        \toprule
        {\textbf{Dataset}} & {$\boldsymbol{\mu}$} & {$\boldsymbol{\sigma}$} & {\textbf{Features}} & {\textbf{Pos. training bags}} & {\textbf{Neg. training bags}} \\
        \midrule
        MNIST, $K=1$ & 10 & 1 & 28 $\times$ 28 & 1000 & 1000 \\
        MNIST, $K=2$ & 11 & 2 & 28 $\times$ 28 & 1000 & 1000 \\
        MNIST, $K=3$ & 12 & 3 & 28 $\times$ 28 & 1000 & 1000\\
        MNIST, $K=5$ & 14 & 5 & 28 $\times$ 28 & 1000 & 1000 \\
        \bottomrule
    \end{tabular}
\end{table}

We train the models for 5 different random seeds, where the first one is used for tuning the hyperparameters. The reported test accuracies represent the average across these seeds. We use 500 bags for testing and 500 bags for validation. The hyperparameters are tuned via grid search, where the grid space is shown in Table \ref{tab:MNIST_hyperparam}. We consider a dropout hyperparameter, commonly referred to as bag dropout, to the Hopfield matrix due to the risk of overfitting (as done by \citet{ramsauer2020hopfield}). 
All models were trained for 50 epochs. We incorporated an early-stopping mechanism, with patience 5, that selects the optimal checkpoint based on performance on the validation set. 

\begin{table}[t]
\small
    \centering
    \caption{Hyperparameter space for the MNIST MIL experiment. Hidden size is the dimension of keys and queries and $\gamma$ is a parameter of the exponential learning rate scheduler \citep{Li2020An}.}
    \label{tab:MNIST_hyperparam}
    \begin{tabular}{ l  l }
        \toprule
        {\textbf{Parameter}} & {\textbf{Range}} \\
        \midrule
        learning rate & \{$10^{-5}$, $10^{-6}$\} \\
        $\gamma$ & \{0.98 , 0.96\}  \\
        hidden size & \{16, 64\} \\
        number of heads & \{8, 16\} \\
        $\beta$ & \{0.25, 0.5, 1.0, 2.0, 4.0, 8.0\} \\
        bag dropout & \{0.0, 0.75\} \\
        \bottomrule
    \end{tabular}
\end{table}

\subsection{MIL benchmarks}
\label{sec:MIL_bench_details}
The MIL benchmark datasets (Fox, Tiger and Elephant) comprise preprocessed and segmented color images sourced from the Corel dataset \cite{ilse2018attention}. Each image is composed of distinct segments or blobs, each defined by descriptors such as color, texture, and shape. The datasets include 100 positive and 100 negative example images, with the negative ones randomly selected from a pool of photos featuring various other animals.

The $\mathrm{HopfieldPooling}$ layers ($\alpha$-entmax; $\alpha$-normmax; SparseMAP, $k$-subsets) take as input a collection of embedded instances, along with a trainable yet constant query. This query pattern is used for the purpose of averaging class-indicative instances, thereby facilitating the compression of bags of variable sizes into a consistent representation. This compression is important for effectively discriminating between different bags. To fine-tune the model, a manual hyperparameter search was conducted on a validation set.

In our approach to tasks involving Elephant, Fox and Tiger, we followed a similar architecture as \citep{ramsauer2020hopfield}:
\begin{enumerate}
    \item The first two layers are fully connected linear embedding layers with ReLU activation.
    \item The output of the second layer serves as the input for the $\mathrm{HopfieldPooling}$  layer, where the pooling operation is executed.
    \item Subsequently, we employ a single layer as the final linear output layer for classification with a sigmoid as the classifier.
\end{enumerate}
\begin{table}[t]
\small
    \centering
    \caption{Hyperparameter space for the MIL benchmark experiments. Hidden size is the space in which keys and queries are associated and $\gamma$ is a parameter of the exponential learning rate scheduler.}
    \label{tab:fox_hyperparam}
    \begin{tabular}{ l  l }
        \toprule
        {\textbf{Parameter}} & {\textbf{Range}} \\
        \midrule
        learning rate & \{$10^{-3}$, $10^{-5}$\} \\
        $\gamma$ & \{0.98 , 0.96\}  \\
        embedding dimensions & \{32 , 128\}  \\
        embedding layers & \{2\}  \\
        hidden size & \{32, 64\} \\
        number of heads & \{12\} \\
        $\beta$ & \{0.1, 1, 10\} \\
        bag dropout & \{0.0, 0.75\} \\
        \bottomrule
    \end{tabular}
\end{table}
During the hyperparameter search, various configurations were tested, including different hidden layer widths and learning rates. Particular attention was given to the hyperparameters of the $\mathrm{HopfieldPooling}$ layers, such as the number of heads, head dimension, and the inverse temperature $\beta$. To avoid overfitting, bag dropout (dropout at the attention weights) was implemented as the chosen regularization technique. All hyperparameters tested are shown in Table \ref{tab:fox_hyperparam}.

We trained for 50 epochs with early stopping with patience 5, using the Adam optimizer \cite{loshchilov2017decoupled} with exponential learning rate decay. Model validation was conducted through a 10-fold nested cross-validation, repeated five times with different data splits where the first seed is used for hyperparameter tuning. The reported test ROC AUC scores represent the average across these repetitions.

\vskip 0.2in
\setlength{\bibsep}{3.2pt}
\bibliography{bib}

\end{document}